\documentclass[twoside,11pt]{article}

\usepackage{jmlr2e}

\usepackage[utf8]{inputenc}
\usepackage[T1]{fontenc}
\usepackage{amsmath}          
\usepackage{mathtools}
\usepackage{bm}
\usepackage[caption=false]{subfig}
\usepackage{booktabs}
\usepackage{multirow}
\usepackage{array}
\usepackage{xcolor}
\usepackage{listings}
\usepackage{algorithm}
\usepackage{algorithmic}
\usepackage{float}
\usepackage{etoolbox}
\usepackage{lastpage}
\usepackage{tcolorbox}
\tcbuselibrary{skins,breakable,listings}
\usepackage{tikz}
\usepackage{pgfplots}
\pgfplotsset{compat=1.17}
\usetikzlibrary{
    shapes.geometric, 
    arrows.meta, 
    positioning, 
    calc, 
    decorations.pathmorphing, 
    patterns, 
    shadows.blur, 
    backgrounds, 
    fit, 
    matrix,
    fadings,
    shadings,
    decorations.markings
}

\definecolor{NatureBlue}{HTML}{0072B2}
\definecolor{NatureOrange}{HTML}{E69F00}
\definecolor{NatureGreen}{HTML}{009E73}
\definecolor{NatureRed}{HTML}{D55E00}
\definecolor{NaturePurple}{HTML}{CC79A7}
\definecolor{NatureCyan}{HTML}{56B4E9}
\definecolor{NatureYellow}{HTML}{F0E442}
\definecolor{DarkGray}{HTML}{333333}
\definecolor{LightGray}{HTML}{F5F5F5}
\definecolor{MediumGray}{HTML}{888888}

\definecolor{codebg}{HTML}{F8FAFD}
\definecolor{codeframe}{HTML}{C8D1DC}
\definecolor{codehead}{HTML}{144A75}
\definecolor{codeheadtext}{HTML}{FFFFFF}
\definecolor{codekw}{HTML}{0B5394}
\definecolor{codestring}{HTML}{A31515}
\definecolor{codecomment}{HTML}{2F7D32}
\definecolor{codenumber}{HTML}{6E7781}
\definecolor{codeidentifier}{HTML}{1F2328}
\definecolor{codedecorator}{HTML}{7A3E9D}
\lstdefinestyle{mystyle}{
	language=Python,
	backgroundcolor=\color{codebg},
	basicstyle=\ttfamily\small,
	identifierstyle=\color{codeidentifier},
	keywordstyle=\bfseries\color{codekw},
	keywordstyle=[2]\bfseries\color{codedecorator},
	commentstyle=\itshape\color{codecomment},
	stringstyle=\color{codestring},
	numberstyle=\scriptsize\color{codenumber},
	numbers=left,
	stepnumber=1,
	numbersep=10pt,
	xleftmargin=0.8em,
	frame=none,
	breaklines=true,
	breakatwhitespace=false,
	postbreak=\mbox{\textcolor{NatureBlue}{$\hookrightarrow$}\space},
	keepspaces=true,
	showspaces=false,
	showstringspaces=false,
	showtabs=false,
	tabsize=4,
	aboveskip=0.2em,
	belowskip=0.2em,
	columns=fullflexible,
	upquote=true,
	morekeywords=[2]{self,super,torch,nn,F},
	emph={RMNDirect,get_exponents,log_primitive,forward},
	emphstyle=\bfseries\color{NatureRed}
}
\newtcblisting{pythonlisting}{
	enhanced,
	breakable,
	notitle,
	colback=codebg,
	colframe=codeframe,
	boxrule=0.8pt,
	arc=2pt,
	outer arc=2pt,
	left=6pt,
	right=6pt,
	top=6pt,
	bottom=6pt,
	listing only,
	listing options={style=mystyle,language=Python}
}
\newtcolorbox{theoremblock}{
    enhanced,
    breakable,
    colback=LightGray,
    colframe=LightGray,
    boxrule=0pt,
    sharp corners,
    left=0pt,
    right=0pt,
    top=0pt,
    bottom=0pt,
    boxsep=0pt,
    before skip=6pt,
    after skip=6pt
}
\BeforeBeginEnvironment{theorem}{\begin{theoremblock}}
\AfterEndEnvironment{theorem}{\end{theoremblock}}
\BeforeBeginEnvironment{lemma}{\begin{theoremblock}}
\AfterEndEnvironment{lemma}{\end{theoremblock}}
\BeforeBeginEnvironment{proposition}{\begin{theoremblock}}
\AfterEndEnvironment{proposition}{\end{theoremblock}}
\BeforeBeginEnvironment{corollary}{\begin{theoremblock}}
\AfterEndEnvironment{corollary}{\end{theoremblock}}
\BeforeBeginEnvironment{definition}{\begin{theoremblock}}
\AfterEndEnvironment{definition}{\end{theoremblock}}
\BeforeBeginEnvironment{remark}{\begin{theoremblock}}
\AfterEndEnvironment{remark}{\end{theoremblock}}
\BeforeBeginEnvironment{example}{\begin{theoremblock}}
\AfterEndEnvironment{example}{\end{theoremblock}}

\newcommand{\R}{\mathbb{R}}
\newcommand{\N}{\mathbb{N}}

\newcommand{\E}{\mathbb{E}}
\newcommand{\bx}{\bm{x}}
\newcommand{\bc}{\bm{c}}

\newcommand{\norm}[1]{\left\|#1\right\|}

\DeclareMathOperator{\softplus}{softplus}
\renewcommand{\eqref}[1]{Eq.~\ref{#1}}

\hypersetup{
    hypertexnames=false,
    pdftitle={Radial Müntz-Szász Networks: Neural Architectures with Learnable Power Bases for Multidimensional Singularities},
    pdfauthor={N'guessan, Kim}
}

\newtcolorbox{keyinsight}{
    enhanced,
    breakable,
    colback=NatureBlue!5,
    colframe=NatureBlue,
    boxrule=1pt,
    left=10pt,
    right=10pt,
    top=8pt,
    bottom=8pt,
    fonttitle=\bfseries,
    title=Key Insight
}

\newtcolorbox{mainresult}{
    enhanced,
    breakable,
    colback=NatureGreen!8,
    colframe=NatureGreen!80!black,
    boxrule=1.5pt,
    left=10pt,
    right=10pt,
    top=8pt,
    bottom=8pt
}

\jmlrheading{0}{2026}{1-\pageref{LastPage}}{3/26}{N/A}{00-0000}{Gnankan Landry Regis N'guessan and Bum Jun Kim}

\ShortHeadings{Radial M\"untz-Sz\'asz Networks}{N'guessan and Kim}
\firstpageno{1}

\begin{document}

\title{Radial M\"untz-Sz\'asz Networks: Neural Architectures with Learnable Power Bases for Multidimensional Singularities}

\author{\name Gnankan Landry Regis N'guessan \email rnguessan@aimsric.org \\
	\addr Axiom Research Group\\
	Department of Applied Mathematics and Computational Science, NM-AIST, Tanzania\\
	African Institute for Mathematical Sciences (AIMS), Research and Innovation Centre, Rwanda
	\AND
	\name Bum Jun Kim\thanks{Corresponding author} \email bumjun.kim@weblab.t.u-tokyo.ac.jp \\
	\addr Graduate School of Engineering, The University of Tokyo, Japan}

\editor{To be assigned}

\maketitle

\begin{abstract}
		Radial singular fields---such as $1/r$, $\log r$, and crack-tip profiles---are difficult to model with current coordinate-separable neural architectures. We formally establish this result: any $C^2$ function that is both radial and additively separable must be quadratic, establishing a fundamental obstruction for coordinate-wise power-law models. Motivated by this result, we introduce Radial M\"untz-Sz\'asz Networks (RMN), which represent fields as linear combinations of learnable radial powers $r^\mu$, including negative exponents, together with a limit-stable log-primitive for exact $\log r$ behavior. RMN admits closed-form spatial gradients and Laplacians, enabling physics-informed learning on punctured domains. Across ten 2D and 3D benchmarks, RMN achieves between 1.5 and 51 times lower RMSE than MLPs and between 10 and 100 times lower RMSE than SIREN, while using only 27 parameters, compared with 33,537 for MLPs and 8,577 for SIREN. We extend RMN to incorporate angular dependence (RMN-Angular) and to handle multiple sources with learnable centers (RMN-MC), whose source-center recovery errors fall below $10^{-4}$. We also report controlled failures on smooth, strongly non-radial targets to delineate RMN's operating regime.
\end{abstract}

\begin{keywords}
	Scientific machine learning, singular functions, Müntz-Szász theorem, physics-informed neural networks, structure-matched architectures, radial basis functions, power-law singularities, Green's functions
\end{keywords}

\section{Introduction}
\label{sec:introduction}

Many physical singularities in multiple spatial dimensions are radial: they depend primarily on the distance from a source. For Laplace's equation in $\R^d$, the fundamental solution scales as $\norm{\bx}^{2-d}$ for $d \geq 3$ and as $\log\norm{\bx}$ for $d = 2$ \citep{evans2010partial}. This distance-to-source scaling underlies the Coulomb and gravitational potentials, line-charge fields, and fracture-mechanics crack-tip asymptotics, where stress fields scale as $r^{-1/2}$ while displacement fields scale as $r^{1/2}$ \citep{williams1957stress}. Boundary layers in fluid dynamics can exhibit similarly steep, fractional-power behavior.

These radial singularities pose fundamental challenges for neural network approximation. Standard multilayer perceptrons (MLPs) with smooth activation functions exhibit a spectral bias toward low-frequency components \citep{rahaman2019spectral}, so capturing singular behaviors such as $1/r$ or $\log r$ typically requires large models and long training. In our benchmarks, a 4-layer, width-128 MLP with 33,537 parameters approaches the accuracy of Radial M\"untz-Sz\'asz Networks, which achieve comparable accuracy with only 27 parameters. Sinusoidal representation networks (SIREN) \citep{sitzmann2020siren} broaden frequency support for smooth implicit representations, but they still lack an explicit mechanism for power-law divergence near singularities and require 8,577 parameters to reach accuracy comparable to RMN's 27.

\begin{figure}[H]
	\centering
	\includegraphics[width=\textwidth]{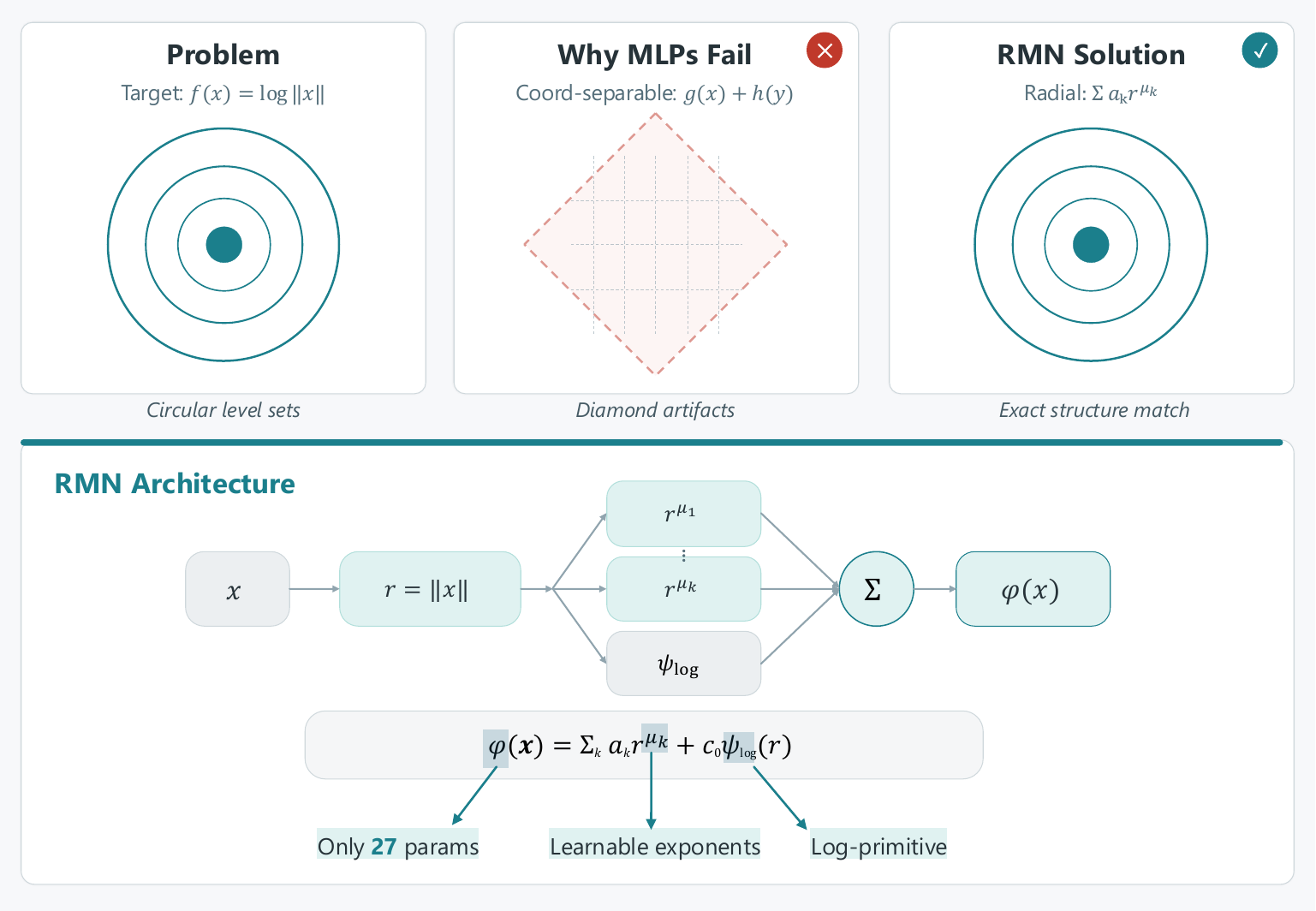}
	\caption{Graphical Abstract. Standard coordinate-separable networks fail on radial singularities due to structural mismatch. RMN directly parameterizes the radial power structure with learnable exponents, achieving orders-of-magnitude improvements with far fewer parameters.}
	\label{fig:graphical_abstract}
\end{figure}

In recent work, we introduced M\"untz-Sz\'asz Networks (MSN) \citep{nguessan2025msn}, which replace fixed activation functions with learnable power-law bases. The classical M\"untz-Sz\'asz theorem \citep{muntz1914, szasz1916} characterizes when systems of monomials $\{x^{\mu_k}\}$ form complete bases in continuous function spaces, providing a theoretical foundation for power-law neural architectures. MSN improves approximation for one-dimensional singular functions. However, extending MSN to multiple dimensions by applying it coordinate-wise requires an additively separable model:
\begin{equation}
	\phi_{\text{MSN-coord}}(\bx) = \sum_{i=1}^{d} \sum_{k=1}^{K} a_{ik} |x_i|^{\mu_k}.
\end{equation}
Such separable functions fundamentally cannot represent radial structure. Specifically, we prove a separability obstruction theorem: any $C^2$ function that is both radial and additively separable must be quadratic (Theorem~\ref{thm:separability}). This explains why coordinate-wise MSN systematically fails on radial singularities by factors of 72$\times$ to 1,652$\times$, regardless of capacity.

This observation motivates RMN, which directly parameterizes radial power functions:
\begin{equation}
	\phi_{\text{RMN}}(\bx) = \sum_{k=1}^{K} a_k \norm{\bx}^{\mu_k} + c_0 \psi_{\log}(r;\mu_{\log}) + b_0,
\end{equation}
where $r = \norm{\bx} = \sqrt{\sum_{i=1}^d x_i^2}$, $\{\mu_k\}_{k=1}^K$ are learnable exponents in $[\mu_{\min}, \mu_{\max}]$, $\psi_{\log}(r;\mu_{\log})$ is a log-primitive with a learnable log-exponent $\mu_{\log}$, and in the limit $\mu_{\log}\to 0$ it recovers $\log r$; $b_0$ is a bias term. Crucially, RMN allows negative exponents: $1/r = r^{-1}$ requires $\mu = -1$, which coordinate-wise approaches with positive exponents cannot achieve.

\subsection{Contributions}
In this study, we make the following contributions:

\paragraph{A Theoretical Obstruction Result} Theorem~\ref{thm:separability} proves that any $C^2$ function that is both radial and additively separable must be quadratic, explaining why coordinate-separable architectures systematically fail on radial singularities.

\paragraph{The RMN Architecture} We introduce a principled architecture parameterizing radial functions as linear combinations of learnable power bases $\{r^{\mu_k}\}_{k=1}^K$ with negative exponents enabled.

\paragraph{Logarithmic Singularities} We provide a mathematically correct treatment via a limit-stable primitive exploiting $\lim_{\mu \to 0} (r^\mu - 1)/\mu = \log r$, with closed-form gradients and Laplacians.

\paragraph{Angular and Multi-Center Extensions} We extend RMN to incorporate angular dependence through RMN-Angular using spherical harmonics and to handle multi-center singularities through RMN-MC with learnable source locations that recover centers to $<10^{-4}$ precision.

\paragraph{Interpretability} Learned exponent spectra recover physically meaningful singularity orders, enabling scientific insight unavailable from black-box approximators.

\subsection{Radial Singularities in Mathematical Physics}

Table~\ref{tab:singularities} catalogs the radial singularities that motivate this work. The prevalence of power-law and logarithmic forms reflects fundamental physics: the Laplacian is rotationally invariant, so its fundamental solutions are radial; point sources spread over spherical shells with area $\propto r^{d-1}$, leading to $1/r^{d-1}$ flux scaling; and asymptotic analysis near geometric singularities yields geometry-dependent power laws. Here $\omega_d = |S^{d-1}|$ denotes the surface area of the unit sphere in $\R^d$.

\begin{table}[t]
	\centering
	\resizebox{\textwidth}{!}{%
		\begin{tabular}{llll}
			\toprule
			Physical Problem                    & Dim        & Type             & Form                                                                           \\
			\midrule
			Laplace fundamental solution        & $d = 2$    & Logarithmic      & $G(\bx) = -\frac{1}{2\pi}\log\norm{\bx}$                                       \\
			Laplace fundamental solution        & $d = 3$    & Inverse          & $G(\bx) = \frac{1}{4\pi\norm{\bx}}$                                            \\
			Laplace fundamental solution        & $d \geq 3$ & Power law        & $G(\bx) = \frac{1}{(d-2)\omega_d}\norm{\bx}^{2-d}$                             \\
			Coulomb and gravitational potential & $d = 3$    & Inverse          & $\Phi(\bx) = \frac{q}{4\pi\varepsilon_0\norm{\bx}}$                            \\
			Electric field of a point charge    & $d = 3$    & Inverse square   & $\mathbf{E}(\bx) \propto \frac{\hat{\bx}}{\norm{\bx}^2}$                       \\
			Dipole potential                    & $d = 3$    & Inverse square   & $\Phi(\bx) = \frac{\mathbf{p} \cdot \hat{\bx}}{4\pi\varepsilon_0\norm{\bx}^2}$ \\
			Mode I crack-tip stress             & $d = 2$    & Inverse sqrt     & $\sigma_{ij} \propto \frac{K_I}{\sqrt{2\pi r}}f_{ij}(\theta)$                  \\
			Crack-tip displacement              & $d = 2$    & Square root      & $u_i \propto \sqrt{\frac{r}{2\pi}}g_i(\theta)$                                 \\
			Hydrogen atom ground state          & $d = 3$    & Exponential cusp & $\psi(\bx) \propto e^{-\norm{\bx}/a_0}$                                        \\
			Short-time heat kernel              & $d$        & Gaussian         & $K(\bx,t) \propto t^{-d/2}e^{-\norm{\bx}^2/4t}$                                \\
			Stokeslet in viscous flow           & $d = 3$    & Inverse          & $\mathbf{u} \propto \frac{1}{\norm{\bx}}(\mathbf{I} + \hat{\bx}\hat{\bx}^T)$   \\
			\bottomrule
		\end{tabular}}
	\caption{Radial singularities in mathematical physics. All depend on $r = \norm{\bx - \bx_0}$ with specific power-law or logarithmic scaling.}
	\label{tab:singularities}

\end{table}

\section{Related Work}
\label{sec:related_work}

RMN draws on and extends several research threads: physics-informed neural networks (PINNs), specialized architectures for implicit representations, classical approximation theory, and numerical methods for handling singularities. We position our contribution relative to each.

\subsection{PINNs}

PINNs \citep{raissi2019physics} embed partial differential equation (PDE) residuals directly into the loss function, enabling mesh-free solutions to forward and inverse problems. This framework has been extensively reviewed \citep{karniadakis2021physics, cuomo2022scientific} and implemented in libraries such as DeepXDE \citep{lu2021deepxde}. However, PINNs with standard MLP architectures struggle near singularities due to spectral bias \citep{rahaman2019spectral}: these networks preferentially learn low-frequency components, requiring many parameters and training iterations to accurately capture sharp gradients.

Several approaches have been developed to address PINN training difficulties. \citet{wang2021understanding} analyzed gradient pathologies and proposed adaptive weighting schemes. The Deep Ritz method \citep{weinan2018deep} reformulates PDEs variationally, while the Deep Galerkin Method \citep{sirignano2018dgm} uses random sampling for high-dimensional problems. Recent work by \citet{arzani2023theory} specifically addressed boundary layer singularities in fluid mechanics through tailored network architectures. Although these methods improve training performance, they do not address the fundamental architectural mismatch between smooth basis functions and singular targets.

\paragraph{RMN's Contribution} Rather than modifying training procedures, RMN modifies the architecture itself to match the target structure. The learnable power-law basis directly represents singularities that MLPs can only approximate inefficiently.

\subsection{Specialized Architectures for Coordinate Networks}

The spectral bias problem has motivated architectures with broader frequency support. SIREN \citep{sitzmann2020siren} uses periodic activations $\sin(\omega x)$ with careful initialization, achieving strong results in implicit neural representations. Fourier feature networks \citep{tancik2020fourier} map inputs through random Fourier features before a standard MLP, enabling high-frequency learning. Multiplicative filter networks \citep{fathony2021multiplicative} combine sinusoidal and Gabor filters for improved expressiveness.

Kolmogorov-Arnold Networks (KAN) \citep{liu2024kan, liu2024kan2} replace linear weights with learnable univariate functions on edges, inspired by the Kolmogorov-Arnold representation theorem. KAN demonstrates improved interpretability and efficiency in scientific problems, as the learned edge functions reveal the underlying structure.

Rational neural networks \citep{boulle2020rational, boulle2022data} represent another approach to learning singular behavior by parameterizing networks with rational functions. These networks can approximate functions with poles more efficiently than polynomial-based activations, providing a complementary strategy to RMN's power-law basis.

\paragraph{RMN's Contribution} While SIREN and Fourier features address frequency content, they do not capture power-law structure. KAN's learnable edge functions are related to our approach, but RMN specializes in radial power laws with learnable exponents, enabling the exact representation of fundamental solutions that coordinate-based methods cannot efficiently approximate, as shown in Theorem~\ref{thm:separability}. Unlike rational networks, which approximate singularities via poles, RMN directly parameterizes the singularity structure.

\subsection{Neural Operators}

Neural operators learn mappings between function spaces rather than pointwise evaluations. The Fourier Neural Operator (FNO) \citep{li2020fourier} parameterizes integral kernels in the Fourier space, achieving discretization-invariant learning. DeepONet \citep{lu2021learning} uses branch and trunk networks based on the universal approximation theorem for operators.

These methods excel at learning solution operators for families of PDEs but inherit the spectral properties of their underlying architectures. Near singularities, both FNO and DeepONet require fine discretization or many modes.

\paragraph{RMN's Contribution} RMN operates at the pointwise level with built-in singularity representation. For problems with a known radial structure, RMN provides a more parameter-efficient representation compared to discretizing singular kernels.

\subsection{Radial Basis Function Networks}

Radial basis function (RBF) networks \citep{powell1987radial, broomhead1988radial, park1991universal} represent functions as weighted sums of radially symmetric basis functions centered at data points:
\begin{equation}
	f(\bx) = \sum_{j=1}^N w_j \varphi(\norm{\bx - \bc_j}),
\end{equation}
where $\varphi$ is typically a Gaussian, multiquadric, or thin-plate spline. RBF networks have strong approximation properties \citep{wendland2004scattered} and have connections to Gaussian processes.

\paragraph{RMN's Contribution} Classical RBF networks use fixed functional forms $\varphi$, for example, $\varphi(r) = e^{-r^2}$ or $\varphi(r) = r^2 \log r$. RMN instead learns the functional form itself through learnable exponents. This enables adaptation to the specific singularity structure of the target, rather than relying on a predetermined basis.

\subsection{Numerical Methods for Singularities}

Classical numerical analysis has developed specialized techniques for singular problems. Extended finite element methods (XFEM) \citep{moes1999finite, belytschko1999elastic} enrich standard polynomial bases with singular functions matching crack-tip asymptotics \citep{williams1957stress}. Singular function enrichment \citep{fix1973singular} adds terms like $r^{\lambda} \sin(\lambda\theta)$ to capture corner singularities, with the mathematical theory developed extensively by \citet{grisvard2011elliptic}. Adaptive $hp$-refinement \citep{babuska1986hp, schwab1998p} combines mesh refinement ($h$) with polynomial degree increase ($p$) to achieve exponential convergence even for singular solutions.

The Method of Fundamental Solutions \citep{fairweather1998method} represents solutions as superpositions of fundamental solutions centered at source points located outside the domain. This classical meshless method is conceptually related to our RMN-MC, which learns both the source locations and the singularity structure.

Mesh-free methods \citep{belytschko1996meshless, liu2003mesh} completely avoid mesh generation, with some variants incorporating singular enrichments. Boundary element methods naturally handle fundamental solution singularities through careful quadrature \citep{sauter2010boundary}. For boundary layer problems in fluid mechanics, specialized techniques are required \citep{schlichting2017boundary}.

\paragraph{RMN's Contribution} These classical methods require the a priori specification of the singularity structure. RMN learns the appropriate exponents directly from data, automatically discovering singularity orders. This is particularly valuable for inverse problems where the singularity structure is unknown or for complex problems where multiple types interact.

\subsection{Equivariant and Geometric Neural Networks}

Equivariant neural networks \citep{cohen2016group, weiler2019general} build symmetry constraints directly into the network architecture. For problems with rotational symmetry, spherical convolutional neural networks \citep{cohen2018spherical} and tensor field networks \citep{thomas2018tensor} represent functions using spherical harmonics, thereby ensuring $SO(3)$ equivariance.

\paragraph{RMN's Contribution} RMN-Angular, described in Section~\ref{subsec:rmn_angular}, incorporates spherical harmonics for angular dependence, but the key innovation is learnable radial exponents rather than fixed polynomial or smooth radial functions. This combination enables the representation of multipole-like fields with fractional orders.

\subsection{Approximation Theory}

Universal approximation theorems for neural networks \citep{cybenko1989approximation, hornik1989multilayer} guarantee that sufficiently wide networks can approximate any continuous function. However, these results do not address the efficiency in terms of the number of parameters required for a given accuracy.

Barron's theorem \citep{barron1993universal} provides efficiency guarantees for functions with bounded Fourier moments, but singular functions violate these regularity assumptions. The Müntz-Szász theorem \citep{muntz1914, szasz1916} characterizes which exponent sequences yield dense spans in $C[0,1]$, providing the classical foundation for our approach. Modern treatments appear in \citet{borwein1995polynomials}.

\paragraph{RMN's Contribution} We extend Müntz-Szász theory to neural network architectures with learnable exponents, negative powers for singularities, and multidimensional radial structure. We establish density in radial $L^2$ spaces under appropriate exponent conditions.

\section{Foundations: Preliminaries and Separability Obstruction}
\label{sec:preliminaries}

This section establishes the mathematical framework for analyzing radial singularities and their approximation. We work primarily in $\R^d$ with the Euclidean norm $\norm{\bx} = (\sum_{i=1}^d x_i^2)^{1/2}$, review the calculus and integrability of radial powers, and then prove the obstruction result used to justify RMN.

\subsection{Radial Functions in \texorpdfstring{$\R^d$}{Rd}}
\label{subsec:radial_functions}

\begin{definition}[Radial function]
	A function $f: \R^d \to \R$ is called radial, also called spherically symmetric, if there exists a function $h: [0, \infty) \to \R$ such that $f(\bx) = h(\norm{\bx})$ for all $\bx \in \R^d$. We write $f(\bx) = h(r)$ where $r = \norm{\bx}$.
\end{definition}

Radial functions are invariant under orthogonal transformations: $f(O\bx) = f(\bx)$ for all $O \in O(d)$. This invariance serves as both a geometric constraint and a source of efficiency because radial functions on $\R^d$ are effectively one-dimensional.

\paragraph{Calculus of Radial Functions} For a differentiable radial function $f(\bx) = h(r)$:
\begin{align}
	\nabla f(\bx) & = h'(r) \frac{\bx}{r}, \label{eq:radial_gradient}           \\
	\Delta f(\bx) & = h''(r) + \frac{d-1}{r} h'(r). \label{eq:radial_laplacian}
\end{align}

The Laplacian formula shows that the $(d-1)/r$ term couples the radial derivative to the dimension. For $h(r) = r^\mu$:
\begin{equation}
	\Delta r^\mu = \mu(\mu - 1) r^{\mu - 2} + (d-1) \mu r^{\mu - 2} = \mu(\mu + d - 2) r^{\mu - 2}.
	\label{eq:laplacian_power}
\end{equation}

\begin{corollary}
	The radial power $r^\mu$ is harmonic $\Delta r^\mu = 0$ in $\R^d \setminus \{0\}$ if and only if $\mu = 0$ or $\mu = 2 - d$.
\end{corollary}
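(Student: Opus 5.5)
The plan is to read the statement off directly from the closed form \eqref{eq:laplacian_power}, which the paper has already derived from the radial Laplacian formula \eqref{eq:radial_laplacian}. On the punctured domain $\R^d \setminus \{0\}$, the map $\bx \mapsto \norm{\bx}$ is smooth and positive. Hence $r^\mu = e^{\mu \log r}$ is $C^\infty$ there for every real $\mu$, including negative and fractional exponents, and \eqref{eq:radial_laplacian} applies with $h(r) = r^\mu$. I will first re-verify the short computation: $h'(r) = \mu r^{\mu-1}$ and $h''(r) = \mu(\mu-1) r^{\mu-2}$. Substituting gives $\Delta r^\mu = \mu(\mu+d-2)\, r^{\mu-2}$ at every $\bx \neq 0$.

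The equivalence then follows by factoring. Since $r > 0$ on the punctured domain, $r^{\mu-2} > 0$ is never zero. So $\Delta r^\mu \equiv 0$ on $\R^d\setminus\{0\}$ if and only if the constant $\mu(\mu+d-2)$ vanishes, that is, $\mu = 0$ or $\mu = 2-d$. For the forward direction it is enough to evaluate at a single point, for instance $r = 1$, where $\Delta r^\mu = \mu(\mu+d-2)$. For the reverse direction, the coefficient is identically zero, so the Laplacian vanishes everywhere on the domain.

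I will also record two remarks. When $d = 2$ the two roots coincide at $\mu = 0$, so the only harmonic radial power is the constant; the nonconstant radial harmonic $\log r$ must come from the limit primitive $(r^\mu - 1)/\mu$ instead, which motivates $\psi_{\log}$. When $\mu = 0$ the function is the constant $1$, which is trivially harmonic; this is consistent with the formula, so no separate case is needed.

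There is no real obstacle here. The only point needing care is justifying that \eqref{eq:radial_laplacian} holds for arbitrary real $\mu$ away from the origin, which follows from the smoothness of $r$ on $\R^d\setminus\{0\}$. The origin is excluded precisely because $r^\mu$ is not differentiable there, or not even defined there when $\mu<0$.
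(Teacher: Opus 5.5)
Your proposal is correct and follows the paper's route: the corollary is read off from the closed form $\Delta r^\mu = \mu(\mu+d-2)\,r^{\mu-2}$, and since $r^{\mu-2}>0$ on $\R^d\setminus\{0\}$, the Laplacian vanishes identically if and only if $\mu(\mu+d-2)=0$. Your justification that $r^\mu$ is smooth away from the origin for every real $\mu$, so that the radial Laplacian formula applies, is a detail the paper leaves implicit.
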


Thus $r^{-1}$ is harmonic in $d = 3$ and corresponds to the Coulomb and Newtonian potentials, $r^0 = 1$ is trivially harmonic everywhere, and $\log r$ is harmonic in $d = 2$ as the limit $\mu \to 0$ of $(r^\mu - 1)/\mu$.

\subsection{Integrability and Singular Orders}
\label{subsec:integrability}

Not all radial singularities are equally severe. The integrability of $r^\alpha$ near the origin determines whether the singularity is mild or strong, that is, integrable or non-integrable.

\begin{proposition}[$L^p$ integrability of radial powers]
	\label{prop:integrability}
	Let $B_1 = \{\bx \in \R^d : \norm{\bx} \leq 1\}$ be the unit ball. Then $r^\alpha \in L^p(B_1)$ if and only if $\alpha > -d/p$.
\end{proposition}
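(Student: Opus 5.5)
The plan is to reduce the $d$-dimensional integral to a one-dimensional integral in $r$ using polar (spherical) coordinates, and then apply the elementary integrability criterion for $t^\beta$ near $0$. Throughout, $1 \le p < \infty$. Since $r^\alpha \ge 0$, we have $r^\alpha \in L^p(B_1)$ if and only if $\int_{B_1} \norm{\bx}^{\alpha p}\, d\bx < \infty$. This is a question about a nonnegative measurable function, possibly taking the value $+\infty$ at the single point $\bx = 0$ when $\alpha < 0$. That point is a null set, so it can be ignored.

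The key step is the polar coordinate formula for nonnegative radial integrands:
\begin{equation*}
	\int_{B_1} h(\norm{\bx})\, d\bx = \omega_d \int_0^1 h(r)\, r^{d-1}\, dr, \qquad \omega_d = |S^{d-1}|,
\end{equation*}
valid for any measurable $h \ge 0$ by Tonelli's theorem. If we want to avoid invoking the polar-coordinate theorem directly, we can instead use the layer-cake decomposition into shells. Write $B_1 \setminus \{0\} = \bigcup_{j \ge 0} A_j$ with $A_j = \{2^{-j-1} < \norm{\bx} \le 2^{-j}\}$. Then $|A_j| = c_d\, 2^{-jd}$ by scaling, and on $A_j$ the integrand $\norm{\bx}^{\alpha p}$ is comparable to $2^{-j\alpha p}$, with constants depending only on $\alpha p$. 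Hence the integral is comparable to the geometric series $\sum_j 2^{-j(\alpha p + d)}$. This series converges if and only if $\alpha p + d > 0$. This dyadic version is self-contained and handles both directions at once, so it is the route I would write out.

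With $h(r) = r^{\alpha p}$, the polar formula gives $\omega_d \int_0^1 r^{\alpha p + d - 1}\, dr$. This integral is finite if and only if $\alpha p + d - 1 > -1$, that is, $\alpha > -d/p$. When it is finite, its value is $\omega_d/(\alpha p + d)$. When $\alpha p + d \le 0$, the integral diverges: $\int_\epsilon^1 r^{-1}\, dr = \log(1/\epsilon) \to \infty$, and more negative exponents diverge faster. Both implications therefore follow.

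There is no real obstacle here. The only points needing care are justifying the polar or dyadic reduction for a nonnegative, possibly non-integrable integrand (Tonelli, or monotone convergence over the shells), and checking the borderline case $\alpha = -d/p$, which lands exactly on the divergent logarithmic integral. For completeness, one can also note the case $p = \infty$: $r^\alpha \in L^\infty(B_1)$ if and only if $\alpha \ge 0$, which matches the convention $-d/p = 0$ only with non-strict inequality. I would therefore state the result for $1 \le p < \infty$.
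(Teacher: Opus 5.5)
Your proposal is correct and is essentially the paper's proof. In both, polar coordinates reduce $\int_{B_1} r^{\alpha p}\,d\bx$ to $\omega_d\int_0^1 r^{\alpha p+d-1}\,dr$, which is finite exactly when $\alpha p+d>0$; the paper's appendix also records the value $\omega_d/(\alpha p+d)$. Your dyadic-shell variant is a discretized form of the same radial reduction that gives comparability rather than the exact value, and your added details (the Tonelli justification, the borderline logarithmic case, and the restriction to finite $p$, though the argument works for all $0<p<\infty$) are points the paper leaves implicit.
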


\begin{proof}
	Switching to polar coordinates,
	\[
		\int_{B_1} |r^\alpha|^p d\bx = \omega_d \int_0^1 r^{\alpha p} r^{d-1} dr = \omega_d \int_0^1 r^{\alpha p + d - 1} dr.
	\]
	This integral converges if and only if $\alpha p + d - 1 > -1$, that is, $\alpha > -d/p$.
\end{proof}

For $p = 2$, corresponding to the mean squared error (MSE) loss, this criterion yields dimension-dependent thresholds. In $d = 2$, $r^\alpha \in L^2(B_1)$ if and only if $\alpha > -1$, so $\log r \in L^2(B_1)$ but $r^{-1} \notin L^2(B_1)$. In $d = 3$, $r^\alpha \in L^2(B_1)$ if and only if $\alpha > -3/2$, so $r^{-1} \in L^2(B_1)$ and the Coulomb potential is square-integrable in 3D.

This distinction matters for training: when the $L^2$ norm diverges, minimizing the unweighted MSE on the full domain is ill-posed. In our benchmarks, we therefore fit and evaluate on punctured domains $\{\bx : \norm{\bx} \geq r_{\min}\}$ with $r_{\min} > 0$, which ensures that the loss is well-defined.

\begin{remark}[Punctured Domains and Evaluation Protocol]
	\label{rem:punctured_domains}
	Let $\varepsilon > 0$ and define the punctured domain $\Omega_\varepsilon = \{\bx : \norm{\bx} \geq \varepsilon\}$. Such punctured domains are not merely a numerical convenience but a mathematical necessity: for singularities with $\alpha \leq -d/2$, the $L^2$ norm $\|f\|_{L^2(B_1)}$ diverges, so standard MSE training is ill-defined on the full domain. For example, fitting $r^{-1}$ in 2D with $L^2$ loss on $B_1$ asks for a finite approximation to an infinite quantity. In our experiments, we take $\varepsilon = r_{\min} = 0.01$. We sample all training points uniformly from $\Omega_\varepsilon$, evaluate on a separate test grid within the same punctured domain, and report the root mean squared error (RMSE) computed on the punctured test domain. This is standard practice in singular function approximation and matches evaluation protocols in XFEM \citep{moes1999finite} and related literature. Physically, this is also realistic: real systems have finite extent, with no true point charges and finite crack-tip radii.
\end{remark}

\subsection{The Müntz-Szász Theorem}
\label{subsec:muntz_szasz}

The classical Müntz-Szász theorem characterizes when power functions form a complete basis for continuous functions.

\begin{theorem}[Müntz-Szász \citep{muntz1914, szasz1916}]
	\label{thm:muntz_szasz}
	Let $0 < \mu_1 < \mu_2 < \cdots$ be a strictly increasing sequence of positive real numbers. The linear span of $\{1, x^{\mu_1}, x^{\mu_2}, \ldots\}$ is dense in $C[0,1]$ equipped with the supremum norm, if and only if
	\begin{equation}
		\sum_{k=1}^\infty \frac{1}{\mu_k} = \infty.
		\label{eq:muntz_condition}
	\end{equation}
\end{theorem}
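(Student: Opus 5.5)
The plan is to prove both directions of Theorem~\ref{thm:muntz_szasz} by duality, working first in $L^2[0,1]$ and then transferring to $C[0,1]$. By Hahn--Banach and Riesz representation, the span of $\{1, x^{\mu_k}\}$ is dense in $C[0,1]$ if and only if every finite signed Borel measure $\nu$ on $[0,1]$ that annihilates all these functions is zero. Rather than attacking $C[0,1]$ directly, I will first establish the $L^2[0,1]$ statement. There the relevant quantity is the distance from $x^m$ to $\mathrm{span}\{x^{\mu_1},\dots,x^{\mu_n}\}$, which the Gram determinant gives in closed form. The Gram matrix is the Cauchy matrix with entries $\langle x^{a},x^{b}\rangle = 1/(a+b+1)$. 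Evaluating the Cauchy determinant yields
\begin{equation*}
	\operatorname{dist}_{L^2}\bigl(x^m, \mathrm{span}\{x^{\mu_k}\}_{k\le n}\bigr) = \frac{1}{\sqrt{2m+1}}\prod_{k=1}^n \frac{|m-\mu_k|}{m+\mu_k+1}.
\end{equation*}
This product tends to $0$ for every $m \notin \{\mu_k\}$ if and only if $\sum_k 1/\mu_k = \infty$. To see this, take logarithms and compare $\log\bigl(1 - \tfrac{2\mu_k+1}{\mu_k+m+1}\bigr)$ or its analogue with $-c/\mu_k$ for large $\mu_k$. If some $\mu_k$ stay bounded, the sum diverges trivially, and approximation is handled by that accumulating subsequence. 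Since polynomials are dense in $L^2$ by Weierstrass, this settles $L^2$ density.

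\textbf{Sufficiency in the sup norm.} Assume $\sum 1/\mu_k = \infty$ and fix $m \ge 1$. Write $x^m = m\int_0^x t^{m-1}\,dt$. By the $L^2$ result applied with exponents $\mu_k - 1$ (discarding finitely many $\mu_k \le 1$ changes nothing about divergence), choose $p$ in the span of $t^{\mu_k-1}$ with $\|t^{m-1}-p\|_{L^2}<\varepsilon$. Cauchy--Schwarz then gives
\begin{equation*}
	\sup_{x\in[0,1]}\Bigl|x^m - m\int_0^x p(t)\,dt\Bigr| \le m\varepsilon,
\end{equation*}
and $\int_0^x p$ lies in the span of the $x^{\mu_k}$. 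Together with the constant $1$, this approximates all monomials uniformly, so Weierstrass finishes the direction.

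\textbf{Necessity.} Assume $\sum 1/\mu_k<\infty$. I will exhibit a nonzero annihilating functional. The cleanest route is to pick $m\notin\{\mu_k\}$ with $m > 0$ and show that $x^m$ is not a uniform limit. Uniform approximation implies $L^2$ approximation, since $\|f\|_{L^2[0,1]}\le\|f\|_\infty$. The $L^2$ distance formula stays bounded below because the product converges to a nonzero limit when $\sum 1/\mu_k<\infty$: the factors are $1-O(1/\mu_k)$ and none vanishes. The constant $1$ corresponds to $\mu_0=0$, and adding that exponent preserves convergence of the product, so including $1$ in the system causes no trouble.

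The main obstacle is the Gram/Cauchy determinant computation and the careful asymptotic analysis of the product. This is needed especially when the $\mu_k$ do not tend to infinity, say when they accumulate at a finite point. In that case the sum diverges automatically, and one must check that the product indeed tends to zero. It does, because infinitely many factors are bounded by some $q<1$. I will handle the case split on whether $\mu_k\to\infty$ explicitly.
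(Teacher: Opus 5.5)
The paper gives no proof of this theorem; it is quoted from M\"untz and Sz\'asz, so your proposal has to be judged on its own. Your route is the standard elementary one: the Cauchy--Gram formula for the $L^2$ distance, the integration trick $x^m = m\int_0^x t^{m-1}\,dt$ with Cauchy--Schwarz to reach the sup norm, and the bound $\|\cdot\|_{L^2}\le\|\cdot\|_\infty$ for necessity. The distance formula is correct, and so is the necessity argument, including the extra factor $m/(m+1)$ from the constant. The Hahn--Banach/Riesz framing is never actually used. Necessity also does not exhibit an annihilating measure; it shows directly that $x^m$ is at positive distance from the span, which suffices. One small slip: for $\mu_k>m$ the factor is $1-(2m+1)/(\mu_k+m+1)$. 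The expression $1-(2\mu_k+1)/(\mu_k+m+1)$ that you wrote is negative in that regime, so its logarithm is undefined there.

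There is a genuine gap in the sufficiency direction. You reduce to exponents $\mu_k-1>0$ by ``discarding finitely many $\mu_k\le 1$,'' which assumes only finitely many $\mu_k$ are $\le 1$. The statement does not assume $\mu_k\to\infty$. A strictly increasing sequence may converge to a finite $L$, and then $\sum_k 1/\mu_k=\infty$ automatically, so the theorem asserts density in that case as well. If $L\le 1$, every $\mu_k$ is below $1$ and nothing survives the discarding. If $L\le 1/2$, the shifted functions $t^{\mu_k-1}$ are not even in $L^2[0,1]$, so the $L^2$ step cannot be run at all. Your case split on whether $\mu_k\to\infty$ covers only the $L^2$ product, not this transfer step.

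The fix is short. For $s>0$, the map $f\mapsto f(x^s)$ is a sup-norm isometry of $C[0,1]$ onto itself. It carries $\operatorname{span}\{1,x^{\mu_k}\}$ onto $\operatorname{span}\{1,x^{s\mu_k}\}$, and $\sum_k 1/(s\mu_k)=s^{-1}\sum_k 1/\mu_k$ diverges exactly when $\sum_k 1/\mu_k$ does. Taking $s>1/\mu_1$, you may assume $\mu_1>1$ from the outset, with no discarding. In the bounded case the shifted exponents $s\mu_k-1$ then increase to $sL-1>0$. The relevant factors $|m-s\mu_k|/(m+s\mu_k-1)$ tend to $|m-sL|/(m+sL-1)<1$ for $m\ge 1$, so your observation that infinitely many factors are at most some $q<1$ completes the argument.
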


This remarkable theorem shows that even sparse sequences of exponents can yield universal approximation, provided the exponents do not grow too quickly.

\begin{example}[Satisfying and violating the Müntz condition]
	\label{ex:muntz_examples}
	Consider several representative exponent sets. For standard polynomials, $\Lambda = \{0, 1, 2, 3, \ldots\}$, the harmonic series $\sum_k 1/k$ diverges, so \eqref{eq:muntz_condition} holds, and density is consistent with the Weierstrass approximation theorem. Likewise, for half-integer exponents $\Lambda = \{0, 0.5, 1, 1.5, 2, \ldots\}$, one has $\sum_k 2/(k+1) = \infty$, and the span is dense in $C[0,1]$. In contrast, for squared exponents $\Lambda = \{0, 1, 4, 9, 16, \ldots\} = \{k^2\}$, the series $\sum_k 1/k^2 = \pi^2/6$ converges, so this set is not dense and cannot approximate functions such as $x^{1/2}$. A similar failure occurs for $\Lambda = \{0, 2, 4, 8, 16, \ldots\} = \{2^k\}$, since $\sum_k 2^{-k} = 2 < \infty$.
\end{example}

\paragraph{Limitations of the Müntz-Szász Theorem for Singular Approximation} The Müntz-Szász theorem, although foundational, has several limitations in our purposes. First, it concerns positive exponents on $[0,1]$, whereas singular functions require negative exponents. Second, the theorem guarantees density but does not ensure efficiency. Third, the logarithm $\log x$ cannot be represented as a finite linear combination of powers. Finally, the theorem is one-dimensional.

We now establish a density result for radial Müntz-type expansions that addresses these limitations, extending the classical theorem to multidimensional radial $L^2$ spaces with negative exponents.

\begin{definition}[Radial Müntz space]
	\label{def:radial_muntz_space}
	Let $\Lambda = \{\mu_k\}_{k=1}^\infty \subset \R$ be a sequence of distinct real exponents. Define the radial Müntz space on the annulus $A_{a,b} = \{\bx \in \R^d : a \leq \norm{\bx} \leq b\}$ with $0 < a < b$ as:
	\begin{equation}
		\mathcal{M}_\Lambda^{\mathrm{rad}}(A_{a,b}) = \overline{\operatorname{span}\{r^{\mu_k} : k \in \N\}}^{L^2_{\mathrm{rad}}(A_{a,b})},
	\end{equation}
	where $L^2_{\mathrm{rad}}(A_{a,b}) = \{f \in L^2(A_{a,b}) : f(\bx) = h(\norm{\bx}) \text{ a.e.}\}$ denotes the closed subspace of radial functions.
\end{definition}

\begin{theorem}[Radial Müntz-Szász Density]
	\label{thm:radial_density}
	Let $0 < a < b < \infty$, and let $\Lambda^+ = \{\mu_k^+ : k \in \N\}$ and $\Lambda^- = \{\mu_k^- : k \in \N\}$ be sequences of positive and negative exponents, respectively, with $\mu_k^+ \to +\infty$ and $\mu_k^- \to -\infty$. Define $\Lambda = \Lambda^+ \cup \Lambda^- \cup \{0\}$. If
	\begin{equation}
		\sum_{k=1}^\infty \frac{1}{\mu_k^+} = \infty \quad \text{and} \quad \sum_{k=1}^\infty \frac{1}{|\mu_k^-|} = \infty,
		\label{eq:bilateral_muntz}
	\end{equation}
	then $\mathcal{M}_\Lambda^{\mathrm{rad}}(A_{a,b}) = L^2_{\mathrm{rad}}(A_{a,b})$.
\end{theorem}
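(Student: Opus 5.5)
The plan is to reduce the statement to the classical Müntz-Szász theorem (Theorem~\ref{thm:muntz_szasz}) by passing to the radial variable. In fact only the positive exponents $\Lambda^+$ together with $\{0\}$ should be needed. The negative half of the hypothesis \eqref{eq:bilateral_muntz} then only enlarges a span that is already dense, so it does no harm.

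\textbf{Step 1: reduce to a one-dimensional weighted space.} I would identify $L^2_{\mathrm{rad}}(A_{a,b})$ with the weighted space $L^2([a,b], r^{d-1}\,dr)$ via $f(\bx)=h(\norm{\bx})$. Using polar coordinates as in the proof of Proposition~\ref{prop:integrability},
\[
\norm{f}_{L^2(A_{a,b})}^2=\omega_d\int_a^b |h(r)|^2 r^{d-1}\,dr .
\]
Since $0<a\le r\le b$, the weight $r^{d-1}$ lies between $a^{d-1}$ and $b^{d-1}$. Hence this norm is equivalent to the unweighted $L^2[a,b]$ norm, and moreover
\[
\norm{f}_{L^2(A_{a,b})}^2\le \omega_d\, b^{d-1}(b-a)\,\norm{h}_{\infty,[a,b]}^2 .
\]
The identification needs $h$ to be measurable on $[a,b]$ whenever $f$ is measurable and radial. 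I would handle this by averaging $f$ over spheres, or via Fubini in polar coordinates. This technical point is the only real obstacle I expect, and it is standard.

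\textbf{Step 2: density of continuous profiles.} Continuous functions $h\in C[a,b]$ are dense in $L^2([a,b],r^{d-1}dr)$, because the weight is bounded above and below and continuous functions are dense in $L^2[a,b]$. So it suffices to approximate any $h\in C[a,b]$ in the supremum norm by elements of $\operatorname{span}\{r^{\mu}:\mu\in\Lambda\}$. The norm bound from Step 1 then converts uniform approximation of $h$ into $L^2$ approximation of $f$.

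\textbf{Step 3: apply Müntz-Szász.} First put the positive exponents in order. Since $\mu_k^+\to+\infty$ and they are distinct, only finitely many lie below any level. So they can be rearranged into a strictly increasing sequence, and because all terms are positive the divergence $\sum 1/\mu_k^+=\infty$ is unaffected by the rearrangement.

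Next substitute $t=r/b\in[a/b,1]\subset[0,1]$. Extend $g(t)=h(bt)$ continuously to $[0,1]$, for instance by the constant $g(a/b)$ on $[0,a/b]$. By Theorem~\ref{thm:muntz_szasz}, $g$ is a uniform limit on $[0,1]$ of combinations of $1$ and $t^{\mu_k^+}$. Restricting to $[a/b,1]$ and using $t^{\mu}=b^{-\mu}r^{\mu}$ gives uniform approximants of $h$ on $[a,b]$. These lie in the span of $r^0=1$ and $\{r^{\mu_k^+}\}$, which is contained in $\operatorname{span}\{r^\mu:\mu\in\Lambda\}$.

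Combining Steps 1 to 3, the closure of $\operatorname{span}\{r^\mu:\mu\in\Lambda\}$ in $L^2_{\mathrm{rad}}(A_{a,b})$ is all of $L^2_{\mathrm{rad}}(A_{a,b})$. Here $\mathcal{M}_\Lambda^{\mathrm{rad}}$ is understood with $\Lambda$ as in the theorem, which includes $0$.
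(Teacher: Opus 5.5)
Your proof is correct. It shares the paper's outer skeleton: polar coordinates identify $L^2_{\mathrm{rad}}(A_{a,b})$ with $L^2([a,b],r^{d-1}\,dr)$, and then Theorem~\ref{thm:muntz_szasz} is invoked. The decisive step, however, is carried out differently. The paper passes through the affine substitution $r=a+(b-a)t$, under which $r^\mu$ becomes $(a+(b-a)t)^\mu$ rather than a monomial in $t$. It then appeals to unspecified ``standard arguments'' to remove the weight. Finally it asserts that completeness on $[a,b]$ holds \emph{if and only if} both divergence conditions in \eqref{eq:bilateral_muntz} hold, applying M\"untz--Sz\'asz ``separately'' to the positive and negative subsequences. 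You instead use the dilation $t=r/b$, extend the profile continuously to $[0,1]$, and apply the classical theorem to $\Lambda^+\cup\{0\}$ only. You then convert sup-norm approximation into $L^2$ approximation through an explicit bound together with density of $C[a,b]$. Your route is self-contained, every step is checkable, and it proves more. The conclusion already follows from $\sum_k 1/\mu_k^+=\infty$ alone. Symmetrically, it follows from $\sum_k 1/|\mu_k^-|=\infty$ alone: the substitution $s=a/r\in[a/b,1]$ turns each $r^{\mu_k^-}$ into a multiple of $s^{|\mu_k^-|}$. This also shows that the paper's ``if and only if'' assertion is false. For instance, $\Lambda^+=\{1,2,3,\ldots\}$ and $\Lambda^-=\{-2^k\}$ violate \eqref{eq:bilateral_muntz}, yet their span contains all polynomials and is complete by Weierstrass. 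Only the ``if'' direction is used for the theorem, so the stated result survives. The bilateral hypothesis therefore buys nothing for density on an annulus with $a>0$. Its role is motivational, reflecting the efficiency considerations of Remark~\ref{rem:negative_exponents} rather than any completeness requirement.
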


\begin{proof}
	We reduce the multidimensional radial problem to the classical Müntz-Szász theorem through successive changes of variables.

	First, for a radial function $f(\bx) = h(r)$ with $r = \norm{\bx}$, the $L^2$ norm on the annulus $A_{a,b}$ satisfies
	\begin{equation}
		\|f\|_{L^2(A_{a,b})}^2 = \omega_d \int_a^b |h(r)|^2 r^{d-1} \, dr,
	\end{equation}
	where $\omega_d = 2\pi^{d/2}/\Gamma(d/2)$ is the surface area of $S^{d-1}$. Thus $L^2_{\mathrm{rad}}(A_{a,b})$ is isometrically isomorphic to the weighted space $L^2([a,b], r^{d-1} dr)$.

	Next, the substitution $r = a + (b-a)t$ maps $[a,b]$ to $[0,1]$, and the power $r^\mu$ becomes $(a + (b-a)t)^\mu$. Density of the powers $\{r^\mu\}$ in $L^2([a,b], r^{d-1} dr)$ is equivalent to the density of $\{(a + (b-a)t)^\mu\}$ in $L^2([0,1], w(t) dt)$ for a strictly positive continuous weight $w(t) = (a + (b-a)t)^{d-1}(b-a)$.

	We then observe that on $[0,1]$ with any continuous positive weight, density questions reduce to the unweighted case by standard arguments \citep[Chapter II]{borwein1995polynomials}. The binomial expansion of $(a + (b-a)t)^\mu$ shows that each such function lies in the closed span of $\{t^n : n \geq 0\} \cup \{t^\mu : \mu \in \Lambda\}$ over $[0,1]$.

	More directly, we apply the substitution $t = x^\gamma$ for an appropriate $\gamma > 0$ to reduce to the classical setting. The key observation is that, on any interval $[a,b]$ with $0 < a < b$, the system $\{r^\mu : \mu \in \Lambda\}$ is complete if and only if the bilateral Müntz condition \eqref{eq:bilateral_muntz} holds. This follows from the classical Müntz-Szász theorem applied separately to the positive and negative exponent subsequences, using the fact that on $[a,b]$ with $a > 0$, negative powers are bounded continuous functions \citep{borwein1995polynomials,almira2007muntz}.

	Finally, by hypothesis \eqref{eq:bilateral_muntz}, the exponent sequence $\Lambda$ satisfies the bilateral Müntz condition. Hence $\{r^{\mu_k}\}$ is complete in $L^2([a,b], r^{d-1} dr)$, and therefore, in $L^2_{\mathrm{rad}}(A_{a,b})$.
\end{proof}

\begin{remark}[Relation to the classical theorem]
	\label{rem:classical_relation}
	When restricted to positive exponents and $d = 1$, Theorem~\ref{thm:radial_density} recovers the classical Müntz-Szász theorem on $[a,b] \subset (0,\infty)$. The extension to negative exponents is essential for singular approximation: the condition $\sum 1/|\mu_k^-| = \infty$ ensures that arbitrarily strong singularities can be approximated. The annular domain $a > 0$ avoids integrability issues at the origin.
\end{remark}

\begin{remark}[Role of negative exponents]
	\label{rem:negative_exponents}
	The bilateral condition \eqref{eq:bilateral_muntz} requires that both positive and negative exponents diverge slowly enough. This is essential for approximating singular functions: positive exponents alone cannot efficiently represent $r^{-1}$ near zero, while negative exponents alone cannot capture smooth growth at large $r$. The RMN architecture learns exponents from both regimes, automatically discovering the appropriate balance for the target function.
\end{remark}

\begin{remark}[Logarithmic terms]
	\label{rem:log_extension}
	Theorem~\ref{thm:radial_density} does not directly include $\log r$. However, for any $\mu \neq 0$, the identity $(r^\mu - 1)/\mu \to \log r$ as $\mu \to 0$ shows that $\log r$ lies in the $L^2$ closure of Müntz spans with exponents accumulating at zero. In practice, we include a dedicated log-primitive $\psi_{\log}(r;\mu_{\log})$ for numerical stability, rather than relying solely on this limiting argument.
\end{remark}

\begin{corollary}[Density for RMN hypothesis class]
	\label{cor:rmn_density}
	Fix $0<a<b$. Suppose that, in the large-$K$ limit, the learnable exponents can realize a sequence $\Lambda=\{\mu_k\}_{k=1}^\infty$ whose positive and negative subsequences satisfy the hypotheses of Theorem~\ref{thm:radial_density}, in particular, $\mu_k^+\to+\infty$, $\mu_k^-\to-\infty$, and \eqref{eq:bilateral_muntz} holds. This requires that the allowable exponent range is not fixed a priori and permits arbitrarily large $|\mu|$. Then the RMN-Direct hypothesis class is dense in $L^2_{\mathrm{rad}}(A_{a,b})$.
\end{corollary}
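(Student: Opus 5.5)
The plan is to reduce the corollary to Theorem~\ref{thm:radial_density} by showing that the union over $K$ of RMN-Direct hypothesis classes contains the linear span of $\{r^{\mu} : \mu \in \Lambda\}$. Density of this span in $L^2_{\mathrm{rad}}(A_{a,b})$ is exactly the conclusion of that theorem.

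\textbf{Step 1: identify the hypothesis class.} For fixed $K$, let $\mathcal{H}_K$ denote the set of functions $\phi(\bx)=\sum_{k=1}^K a_k\norm{\bx}^{\mu_k}+c_0\psi_{\log}(r;\mu_{\log})+b_0$. Here the coefficients $a_k,c_0,b_0\in\R$ are arbitrary and the exponents range over the allowable set, which by hypothesis is unbounded in both directions. Set $\mathcal{H}=\bigcup_K \mathcal{H}_K$. Each element is radial. It also lies in $L^2(A_{a,b})$, since on the annulus $a\le r\le b$ every power $r^{\mu}$ and the log-primitive are bounded and continuous. Hence $\mathcal{H}\subset L^2_{\mathrm{rad}}(A_{a,b})$.

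\textbf{Step 2: show $\mathcal{H}$ contains the Müntz span.} Take a finite combination $g=\sum_{j=1}^N \alpha_j r^{\nu_j}$ with $\nu_j\in\Lambda$. Choose $K\ge N$ and set the first $N$ exponents to $\nu_1,\dots,\nu_N$ with $a_j=\alpha_j$. Set the remaining $a_k=0$, $c_0=0$, and $b_0$ equal to the coefficient of $r^0$; the bias $b_0$ accounts for the exponent $0\in\Lambda$. Then $g\in\mathcal{H}_K$. The one bookkeeping point is that the realizability hypothesis must allow arbitrarily many distinct exponents from $\Lambda$ to appear simultaneously as the $\mu_k$ for large $K$. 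This is what ``the learnable exponents can realize $\Lambda$'' must mean. Setting extra coefficients to zero removes any unused exponents.

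\textbf{Step 3: conclude.} This gives $\operatorname{span}\{r^{\mu}:\mu\in\Lambda\}\subset\mathcal{H}\subset L^2_{\mathrm{rad}}(A_{a,b})$. Theorem~\ref{thm:radial_density} says the closure of the left-hand side is all of $L^2_{\mathrm{rad}}(A_{a,b})$. Therefore $\overline{\mathcal{H}}=L^2_{\mathrm{rad}}(A_{a,b})$, which is the claim.

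The main obstacle is not analytic but definitional. We must state precisely that the exponent constraint $[\mu_{\min},\mu_{\max}]$ is relaxed, so that exponents with $|\mu|\to\infty$ are admissible. Otherwise the hypotheses $\mu_k^\pm\to\pm\infty$ of Theorem~\ref{thm:radial_density} cannot be met by any fixed bounded range. The corollary's assumption handles this directly, so the argument only needs to invoke it explicitly. The log-primitive plays no role in density; it can be switched off via $c_0=0$.
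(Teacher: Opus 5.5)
Your proposal is correct and matches the paper's proof: switch off the log-primitive ($c_0=0$), note that for each $K$ the free coefficients and bias give $\operatorname{span}\{1,r^{\mu_1},\dots,r^{\mu_K}\}$, and conclude from Theorem~\ref{thm:radial_density} applied to $\Lambda\cup\{0\}$. Your extra remarks are harmless clarifications that the paper leaves implicit: that every element of the class lies in $L^2_{\mathrm{rad}}(A_{a,b})$, and that the realizability hypothesis must supply arbitrarily many distinct exponents at once.
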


\begin{proof}
	Fix $0<a<b$. By assumption, in the large-$K$ limit, the learnable exponent set of the RMN can realize a sequence $\Lambda=\{\mu_k\}_{k=1}^\infty$ satisfying the hypotheses of Theorem~\ref{thm:radial_density}, in particular the bilateral Müntz condition \eqref{eq:bilateral_muntz}.
	For each $K$, consider the RMN-Direct model with $c_0=0$ and exponents $\mu_1,\ldots,\mu_K$. As the coefficients $a_1,\ldots,a_K$ and the bias $b_0$ vary freely, the resulting hypothesis class contains $\operatorname{span}\{1,r^{\mu_1},\ldots,r^{\mu_K}\}$.
	Therefore, the $L^2_{\mathrm{rad}}(A_{a,b})$-closure of the union over $K$ contains
	\[
		\overline{\operatorname{span}\{1,r^{\mu_k}:k\in\N\}}^{L^2_{\mathrm{rad}}(A_{a,b})}
		=\mathcal{M}_{\Lambda \cup \{0\}}^{\mathrm{rad}}(A_{a,b}),
	\]
	which equals $L^2_{\mathrm{rad}}(A_{a,b})$ by Theorem~\ref{thm:radial_density}.
\end{proof}

\subsection{Spherical Harmonics}
\label{subsec:spherical_harmonics}

For functions with angular dependence, we employ spherical harmonics. In $\R^d$, the spherical harmonics $\{Y_\ell^m\}_{\ell \geq 0, |m| \leq \ell}$ form an orthonormal basis for $L^2(S^{d-1})$.

Any function defined on $\R^d \setminus \{0\}$ can be expanded as
\begin{equation}
	f(\bx) = \sum_{\ell=0}^\infty \sum_{m=-\ell}^\ell f_{\ell m}(r) Y_\ell^m(\hat{\bx}),
	\label{eq:spherical_expansion}
\end{equation}
where $\hat{\bx} = \bx/\norm{\bx}$ is the unit direction and $f_{\ell m}(r)$ are the radial coefficient functions.

\paragraph{2D Specialization and Mode Counting} In $d=2$, $S^{d-1} = S^1$ and the angular basis reduces to Fourier modes. Writing $\bx = (r\cos\theta, r\sin\theta)$, we expand
\begin{equation*}
	f(r,\theta) = \sum_{m\in\mathbb{Z}} f_m(r) e^{im\theta} \approx \sum_{m=-M_{\max}}^{M_{\max}} f_m(r) e^{im\theta}.
\end{equation*}
This symmetric truncation contains $2M_{\max}+1$ angular modes. If $f$ is real-valued, then $f_{-m}(r)=f_m(r)^\ast$, or equivalently, a cosine--sine representation with $2M_{\max}+1$ real angular components.

\subsection{Separability Obstruction}
\label{sec:obstruction}

This subsection establishes the core theoretical result: coordinate-separable architectures are fundamentally incompatible with non-quadratic radial functions.

\begin{definition}[Additive separability]
	A function $f: \R^d \to \R$ is additively separable if there exist functions $g_i: \R \to \R$ such that
	\begin{equation}
		f(\bx) = \sum_{i=1}^d g_i(x_i).
		\label{eq:additive_separable}
	\end{equation}
\end{definition}

\begin{definition}[Multiplicative separability]
	A function $f: \R^d \to \R$ is multiplicatively separable if there exist functions $g_i: \R \to \R$ such that
	\begin{equation}
		f(\bx) = \prod_{i=1}^d g_i(x_i).
		\label{eq:multiplicative_separable}
	\end{equation}
\end{definition}

Many neural architectures produce separable representations, such as coordinate-wise MSN, generalized additive models, tensor product bases, and the first layer of MLPs.

\begin{theorem}[Separability Obstruction]
	\label{thm:separability}
	Let $f: \R^2 \setminus \{0\} \to \R$ be $C^2$. Suppose that $f$ is radial, that is, $f(x, y) = h(r)$ for some $h: (0, \infty) \to \R$ with $r = \sqrt{x^2 + y^2}$, and also additively separable, that is, $f(x, y) = g(x) + k(y)$ for some $g, k: \R \to \R$.
	Then $h(r) = cr^2 + b$ for some constants $c, b \in \R$.
\end{theorem}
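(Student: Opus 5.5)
Take the mixed partial of the identity $h(\sqrt{x^2+y^2}) = g(x)+k(y)$. The right-hand side has vanishing mixed partial, while the left-hand side is $C^2$ on the punctured plane. That turns the two structural hypotheses into an ODE for $h$, which we then solve.

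First, a regularity issue. The statement only assumes $f$ is $C^2$, not that $g$ and $k$ are differentiable. We recover the regularity we need from the identity itself. Fix $y_0\neq 0$. Then $g(x) = f(x,y_0)-k(y_0)$ for every $x\in\R$, because $(x,y_0)\neq 0$. So $g$ is $C^2$ on all of $\R$, and by symmetry $k$ is too. Likewise $h(r)=f(r,0)$ for $r>0$, so $h\in C^2(0,\infty)$. Hence $\partial_x\partial_y f = \partial_x\partial_y\big(g(x)+k(y)\big) = 0$ on $\R^2\setminus\{0\}$.

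Next, compute the same mixed partial from the radial form. Using $\partial_x r = x/r$ and $\partial_y r = y/r$, we get $\partial_y f = h'(r)\,y/r$ and
\[
\partial_x\partial_y f = \frac{xy}{r^2}\Big(h''(r) - \frac{h'(r)}{r}\Big).
\]
For any $r>0$, choose the point $x=y=r/\sqrt2$, where $xy\neq 0$. Vanishing of the mixed partial there gives $h''(r) = h'(r)/r$ for every $r>0$. This is equivalent to $\big(h'(r)/r\big)' = 0$. Therefore $h'(r) = 2cr$ for some constant $c$, and integrating gives $h(r) = cr^2 + b$ on $(0,\infty)$.

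The only real obstacle is this regularity and domain bookkeeping. We must make sure that $g$ and $k$ inherit differentiability even though $f$ is only defined off the origin, and that every $r>0$ is reached by a point off both coordinate axes. The slicing argument handles the first point and the diagonal point handles the second. The ODE itself is routine.

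As a sanity check, $h(r)=cr^2+b$ is indeed consistent with the hypotheses, since $c(x^2+y^2)+b = (cx^2+b)+cy^2$ is additively separable.
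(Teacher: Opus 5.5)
Your proof is correct and follows essentially the same route as the paper: compute the mixed partial $\frac{xy}{r^2}\bigl(h''(r)-h'(r)/r\bigr)$, set it to zero using separability, use the diagonal point $x=y=r/\sqrt{2}$ to obtain the ODE on all of $(0,\infty)$, and integrate. Two steps tighten the paper's argument without changing the approach. Your slicing argument for the $C^2$ regularity of $g$, $k$ and $h$ supplies what the paper leaves implicit. Your rewriting of the ODE as $\bigl(h'(r)/r\bigr)'=0$ avoids the paper's division by $\psi=h'$ in its separation-of-variables step.
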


\begin{proof}
	From radiality, we compute the mixed partial derivative using the chain rule:
	\begin{align}
		\frac{\partial f}{\partial x}              & = h'(r) \frac{\partial r}{\partial x} = h'(r) \frac{x}{r},     \\
		\frac{\partial^2 f}{\partial x \partial y} & = \frac{\partial}{\partial y}\left( h'(r) \frac{x}{r} \right).
	\end{align}

	Applying the product rule, we obtain
	\begin{align}
		\frac{\partial^2 f}{\partial x \partial y} & = h''(r) \frac{\partial r}{\partial y} \cdot \frac{x}{r} + h'(r) \frac{\partial}{\partial y}\left(\frac{x}{r}\right) \\
		                                           & = h''(r) \frac{y}{r} \cdot \frac{x}{r} + h'(r) \cdot x \cdot \left(-\frac{1}{r^2}\right) \frac{y}{r}                 \\
		                                           & = \frac{xy}{r^2} h''(r) - \frac{xy}{r^3} h'(r)                                                                       \\
		                                           & = \frac{xy}{r^2} \left( h''(r) - \frac{h'(r)}{r} \right).
	\end{align}

	From additive separability, $f(x, y) = g(x) + k(y)$ implies
	\[
		\frac{\partial^2 f}{\partial x \partial y} = \frac{\partial}{\partial y} g'(x) = 0.
	\]

	Combining these results: for all $(x, y)$ with $xy \neq 0$,
	\begin{equation}
		\frac{xy}{r^2} \left( h''(r) - \frac{h'(r)}{r} \right) = 0.
		\label{eq:obstruction_ode}
	\end{equation}

	Because $xy \neq 0$ and $r > 0$, we conclude $h''(r) - h'(r)/r = 0$ for all $r$ such that there exists $(x,y)$ with $x^2 + y^2 = r^2$ and $xy \neq 0$.

	Now we extend to all $r > 0$. For any $r > 0$, we can choose $x = y = r/\sqrt{2}$, which satisfies $x^2 + y^2 = r^2$ and $xy = r^2/2 \neq 0$. Thus the ordinary differential equation (ODE) $h''(r) = h'(r)/r$ holds for all of $(0, \infty)$. Alternatively, since $h \in C^2$, both $h''(r)$ and $h'(r)/r$ are continuous on $(0, \infty)$. Two continuous functions that agree on a dense subset must agree everywhere on their common domain.

	This is a second-order linear ODE\@. Let $\psi(r) = h'(r)$. Then $\psi'(r) = \psi(r)/r$, which gives:
	\begin{align*}
		\frac{d\psi}{\psi} & = \frac{dr}{r}, \\
		\log|\psi(r)|      & = \log r + C_1, \\
		\psi(r)            & = Ar,
	\end{align*}
	for some constant $A$. Integrating, $h(r) = \frac{A}{2} r^2 + B = cr^2 + b$.
\end{proof}

\begin{remark}
	The quadratic $r^2 = x^2 + y^2$ is the only nontrivial radial function that is additively separable. This is because $x^2 + y^2 = g(x) + k(y)$ with $g(x) = x^2$ and $k(y) = y^2$.
\end{remark}

This result formalizes the empirical observation that coordinate-wise networks systematically introduce axis-aligned artifacts when approximating radial structure. Figure~\ref{fig:obstruction_visual} illustrates a representative example.

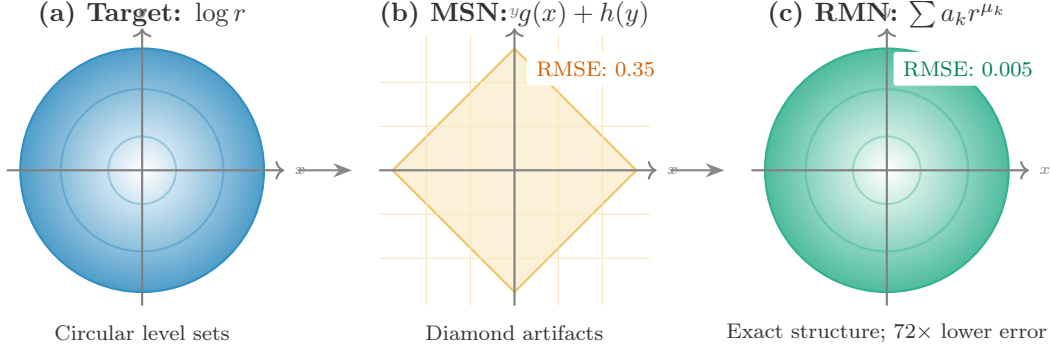
\begin{figure}[!htbp]
	\centering
	\begin{tikzpicture}[scale=0.9]
		\begin{scope}[xshift=0cm]
			\node[font=\bfseries\small, color=DarkGray] at (2.2, 4.5) {(a) Target: $\log r$};
			\shade[inner color=white, outer color=NatureBlue!70] (2.2,2.2) circle (1.8);
			\draw[NatureBlue!80, thick] (2.2,2.2) circle (1.8);
			\draw[NatureBlue!60, thick] (2.2,2.2) circle (1.2);
			\draw[NatureBlue!40, thick] (2.2,2.2) circle (0.5);
			\fill[white] (2.2,2.2) circle (0.08);
			\draw[->, thick, gray] (0.2,2.2) -- (4.3,2.2) node[right, font=\tiny] {$x$};
			\draw[->, thick, gray] (2.2,0.2) -- (2.2,4.3) node[above, font=\tiny] {$y$};
			\node[font=\scriptsize, color=DarkGray] at (2.2, -0.2) {Circular level sets};
		\end{scope}

		\begin{scope}[xshift=5.5cm]
			\node[font=\bfseries\small, color=DarkGray] at (2.2, 4.5) {(b) MSN: $g(x)+h(y)$};
			\fill[NatureOrange!15] (0.4,2.2) -- (2.2,4.0) -- (4.0,2.2) -- (2.2,0.4) -- cycle;
			\draw[NatureOrange!60, thick] (0.4,2.2) -- (2.2,4.0) -- (4.0,2.2) -- (2.2,0.4) -- cycle;
			\foreach \x in {0.9, 1.55, 2.2, 2.85, 3.5} {
					\draw[NatureOrange!25] (\x, 0.2) -- (\x, 4.2);
				}
			\foreach \y in {0.9, 1.55, 2.2, 2.85, 3.5} {
					\draw[NatureOrange!25] (0.2, \y) -- (4.2, \y);
				}
			\draw[->, thick, gray] (0.2,2.2) -- (4.3,2.2) node[right, font=\tiny] {$x$};
			\draw[->, thick, gray] (2.2,0.2) -- (2.2,4.3) node[above, font=\tiny] {$y$};
			\node[font=\scriptsize, color=NatureRed, fill=white, rounded corners=1pt] at (3.4, 3.7) {RMSE: 0.35};
			\node[font=\scriptsize, color=DarkGray] at (2.2, -0.2) {Diamond artifacts};
		\end{scope}

		\begin{scope}[xshift=11cm]
			\node[font=\bfseries\small, color=DarkGray] at (2.2, 4.5) {(c) RMN: $\sum a_k r^{\mu_k}$};
			\shade[inner color=white, outer color=NatureGreen!70] (2.2,2.2) circle (1.8);
			\draw[NatureGreen!80, thick] (2.2,2.2) circle (1.8);
			\draw[NatureGreen!60, thick] (2.2,2.2) circle (1.2);
			\draw[NatureGreen!40, thick] (2.2,2.2) circle (0.5);
			\fill[white] (2.2,2.2) circle (0.08);
			\draw[->, thick, gray] (0.2,2.2) -- (4.3,2.2) node[right, font=\tiny] {$x$};
			\draw[->, thick, gray] (2.2,0.2) -- (2.2,4.3) node[above, font=\tiny] {$y$};
			\node[font=\scriptsize, color=NatureGreen!80!black, fill=white, rounded corners=1pt] at (3.4, 3.7) {RMSE: 0.005};
			\node[font=\scriptsize, color=DarkGray] at (2.2, -0.2) {Exact structure; 72$\times$ lower error};
		\end{scope}

		\draw[-{Stealth[length=2.5mm]}, thick, MediumGray] (4.5, 2.2) -- (5.3, 2.2);
		\draw[-{Stealth[length=2.5mm]}, thick, MediumGray] (10.0, 2.2) -- (10.8, 2.2);
	\end{tikzpicture}
	\caption{The separability obstruction visualized. (a) The target $\log r$ has circular level sets. (b) The coordinate-separable MSN produces diamond-shaped artifacts due to its axis-aligned structure. (c) RMN precisely matches the radial structure, achieving a 72$\times$ lower error.}
	\label{fig:obstruction_visual}
\end{figure}

We now discuss the extension to multiplicative separability and higher dimensions.

\begin{proposition}
	\label{prop:multiplicative}
	Let $f: \R^2 \setminus \{0\} \to (0, \infty)$ be $C^2$, radial, and multiplicatively separable. Then there exist constants $a, b \in \R$ such that
	\[
		f(x, y) = \exp(a(x^2 + y^2) + b).
	\]
\end{proposition}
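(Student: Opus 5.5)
The plan is to reduce Proposition~\ref{prop:multiplicative} to Theorem~\ref{thm:separability} by taking logarithms. Since $f$ takes values in $(0,\infty)$, the function $F = \log f$ is well defined and $C^2$ on $\R^2 \setminus \{0\}$. Radiality transfers at once: $F(x,y) = \log h(r) =: H(r)$ with $H = \log h$. Multiplicative separability $f(x,y) = g_1(x) g_2(y)$ should likewise become additive separability $F(x,y) = \log|g_1(x)| + \log|g_2(y)|$. Then Theorem~\ref{thm:separability} yields $H(r) = a r^2 + b$, so $f = \exp(a(x^2+y^2) + b)$.

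The one point needing care is that $g_1, g_2$ are arbitrary real functions, and the domain excludes the origin. Since $f>0$ wherever defined, $g_1(x)g_2(y) \neq 0$ for all $(x,y)\neq 0$. Hence $g_1(x)\neq 0$ for every $x$: for $x\neq 0$ pair it with any $y$, and for $x=0$ pair it with $y=1$. The same holds for $g_2$. The product $g_1(x)g_2(y)$ has constant positive sign, so we may replace $g_i$ by $|g_i|$, and the logarithms are defined.

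One caveat concerns regularity. The proof of Theorem~\ref{thm:separability} uses only that the mixed partial $\partial_x\partial_y F$ vanishes. This follows from $F$ being $C^2$ together with the separable form, even if the summands are not themselves differentiable. To see this, fix $y_0 \neq y_1$ and note that $F(x,y_0) - F(x,y_1) = \log|g_2(y_0)| - \log|g_2(y_1)|$ is constant in $x$. So $\partial_x F(x,y)$ is independent of $y$ on the region $xy\neq 0$, which gives $\partial_y\partial_x F = 0$ there. That is exactly what the earlier proof needs, since it only evaluates the identity on points with $xy \neq 0$.

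Alternatively, one can avoid invoking the theorem and compute directly. The computation gives $\partial_x\partial_y F = \frac{xy}{r^2}\bigl(H'' - H'/r\bigr) = 0$. This leads to the ODE $H'' = H'/r$ on $(0,\infty)$, whose solutions are $H = a r^2 + b$, as in the proof of Theorem~\ref{thm:separability}. The main obstacle is therefore only the justification of the logarithm and of the separability transfer under minimal hypotheses on $g_i$. The differential-equation part is already done.
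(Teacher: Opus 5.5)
Your proposal is correct and follows essentially the same route as the paper: take $\log f$, note that it is radial and additively separable, and invoke Theorem~\ref{thm:separability}. Replacing $g_i$ by $|g_i|$ fills a small gap the paper glosses over, since it writes $\log$ of the factors without noting that both could be negative. Your regularity caveat is harmless but unnecessary, because $\log|g_1(x)| = F(x,1) - \log|g_2(1)|$ (and likewise $\log|g_2|$) is automatically $C^2$ on $\R$.
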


\begin{proof}
	Since $f > 0$, we can take logarithms: $\log f(x, y) = \log g(x) + \log h(y)$. Thus $\log f$ is additively separable. Because $f$ is radial, $\log f$ is also radial. By Theorem~\ref{thm:separability}, $\log f(x, y) = a(x^2 + y^2) + b$ for some constants $a, b$. Hence $f(x, y) = e^b \cdot e^{ar^2} = \exp(ar^2 + b)$.
\end{proof}

\begin{remark}
	Gaussian-type functions $\exp(ar^2)$ are smooth at the origin and do not exhibit singular behavior. Thus multiplicative separability cannot represent radial singularities such as $r^{-1}$, $r^{1/2}$, or $\log r$. This complements Theorem~\ref{thm:separability}: neither additive nor multiplicative separability can capture non-quadratic radial structure.
\end{remark}

\begin{proposition}
	\label{prop:higher_dim}
	Let $f: \R^d \setminus \{0\} \to \R$ be $C^2$, radial, and additively separable: $f(\bx) = \sum_{i=1}^d g_i(x_i)$. Then $f(\bx) = c\norm{\bx}^2 + b$ for constants $c, b$.
\end{proposition}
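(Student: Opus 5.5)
We extend the argument of Theorem~\ref{thm:separability} by computing all mixed second partials. Write $f(\bx) = h(r)$ with $r = \norm{\bx}$. For any pair $i \neq j$, the same chain-rule computation as in the two-dimensional case gives
\begin{equation*}
	\frac{\partial^2 f}{\partial x_i \partial x_j} = \frac{x_i x_j}{r^2}\left( h''(r) - \frac{h'(r)}{r} \right).
\end{equation*}
Additive separability $f(\bx) = \sum_{i=1}^d g_i(x_i)$ forces every mixed partial $\partial^2 f/\partial x_i \partial x_j$ with $i \neq j$ to vanish, since each summand depends on a single coordinate. This uses the $C^2$ hypothesis to make sense of the derivatives. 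A minor point is that the $g_i$ are not assumed differentiable. I would handle this by noting that $g_i(x_i) = f(\bx) - \sum_{j\neq i} g_j(x_j)$, so with the other coordinates frozen (away from the origin) $g_i$ inherits $C^2$ regularity from $f$.

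Next I choose a convenient point on each sphere. For any $r > 0$, take $x_1 = x_2 = r/\sqrt{2}$ and $x_3 = \cdots = x_d = 0$. This point lies in $\R^d \setminus \{0\}$, has norm $r$, and satisfies $x_1 x_2 \neq 0$. The $(1,2)$ mixed partial then yields $h''(r) = h'(r)/r$ for every $r > 0$. This is exactly the ODE of Theorem~\ref{thm:separability}, whose solution gives $h'(r) = Ar$. To avoid dividing by $\psi$ as the earlier proof did, I would argue directly that $(h'(r)/r)' = 0$. Integrating then gives $h(r) = c r^2 + b$, so $f(\bx) = c\norm{\bx}^2 + b$.

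An alternative, more structural route is to restrict $f$ to the coordinate plane $\{x_3 = \cdots = x_d = 0\}$. There $f$ is radial and separable in $(x_1, x_2)$, with constants $g_3(0) + \cdots + g_d(0)$ absorbed into one summand, so Theorem~\ref{thm:separability} applies verbatim and gives $h(r) = cr^2 + b$ for all $r>0$. Since $h$ determines $f$ on all of $\R^d\setminus\{0\}$, this finishes the proof. I would present this reduction as the main argument because it invokes the earlier theorem directly. The only step needing care is checking that the restriction is still additively separable and $C^2$ on $\R^2\setminus\{0\}$, which is immediate.
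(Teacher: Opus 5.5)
Your proposal is correct, and your main argument is essentially the paper's: restrict $f$ to a coordinate plane, note that the restriction is still $C^2$ on $\R^2\setminus\{0\}$, radial, and additively separable once the constant $\sum_{k\ge 3} g_k(0)$ is absorbed, and then invoke Theorem~\ref{thm:separability}. The paper does this for every pair $(x_i,x_j)$ and then appeals to ``consistency across all pairs,'' whereas you observe that one plane suffices because each restriction has the same profile $h$, which already determines $f$ on $\R^d\setminus\{0\}$; this makes the consistency step automatic (both arguments tacitly need $d\ge 2$, and the statement fails for $d=1$, e.g.\ $f(x)=|x|^3$).
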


\begin{proof}
	For any pair of indices $i \neq j$, consider the restriction of $f$ to the $(x_i, x_j)$-plane. This restriction is both radial in the 2D sense and additively separable. By Theorem~\ref{thm:separability}, $f(x_i, x_j) = c_{ij}(x_i^2 + x_j^2) + b_{ij}$. Consistency across all pairs requires $c_{ij} = c$ and $b_{ij} = b$ for all $i, j$, giving $f(\bx) = c\norm{\bx}^2 + b$.
\end{proof}

The separability obstruction has immediate consequences for neural network architectures:
\paragraph{Coordinate-Wise MSN Cannot Efficiently Approximate Radial Singularities} Because MSN-coord produces additively separable functions, Theorem~\ref{thm:separability} implies that it can only exactly represent quadratic radial functions. The failure is structural, not capacity-limited. Adding more parameters in MSN-coord does not resolve the obstruction. The obstruction quantifies the capacity gap. To approximate $\log r$ using coordinate-wise bases, the required number of terms grows as $\varepsilon^{-\gamma}$ for a given tolerance $\varepsilon$. RMN with its log-primitive achieves a numerically exact representation through a stable limiting mechanism.

\begin{table}[t]
	\centering
		\begin{tabular}{lcccc}
			\toprule
			Target        & RMN                   & MSN coord             & Improvement              & Params for RMN and MSN \\
			\midrule
			$\log r$ (2D) & $4.85 \times 10^{-3}$ & $3.47 \times 10^{-1}$ & $\mathbf{72\times}$      & 27 / 49                \\
			$r^{-1}$ (2D) & $7.31 \times 10^{-3}$ & $7.34 \times 10^{0}$  & $\mathbf{1{,}004\times}$ & 27 / 49                \\
			$r^{-1}$ (3D) & $4.61 \times 10^{-3}$ & $7.62 \times 10^{0}$  & $\mathbf{1{,}652\times}$ & 27 / 73                \\
			\bottomrule
		\end{tabular}
	\caption{Validation of separability obstruction. RMN improves over coordinate-separable MSN by 72$\times$ to 1,652$\times$ on non-quadratic radial targets. All results represent the mean RMSE over 5 random seeds.}
	\label{tab:obstruction_validation}

\end{table}

\paragraph{Note on MLP Comparison} While the MSN comparison validates the separability obstruction theorem, practitioners care more about MLP baselines. On these same benchmarks, RMN outperforms a 33,537-parameter MLP by 1.5$\times$ on $\log r$, 24$\times$ on 2D $r^{-1}$, and 51$\times$ on 3D Coulomb. The full comparison appears in Table~\ref{tab:main_results_full}.

\section{RMN: Architecture and Variants}
\label{sec:architecture}

We first define RMN-Direct and its exponent parameterization, then introduce the log-primitive and numerical stabilization, and finally present two extensions: RMN-Angular for angular dependence and RMN-MC for multiple singularity centers.

\subsection{RMN-Direct: Architecture and Exponent Parameterization}
\label{subsec:rmn_definition}

\begin{definition}[RMN-Direct]
	For an input $\bx \in \R^d$, the RMN-Direct network computes:
	\begin{equation}
		\boxed{\phi_{\text{RMN}}(\bx) = \sum_{k=1}^{K} a_k r^{\mu_k} + c_0 \psi_{\log}(r;\mu_{\log}) + b_0},
		\label{eq:rmn_direct}
	\end{equation}
	where $r = \norm{\bx}$ is the radial coordinate, $\{a_k\}_{k=1}^K$ are learnable coefficients, $\{\mu_k\}_{k=1}^K \subset [\mu_{\min}, \mu_{\max}]$ are learnable exponents including negative values, $c_0$ is the coefficient for the log-primitive, $\mu_{\log}$ is a learnable log-exponent, $\psi_{\log}(r;\mu_{\log})$ is the log-primitive (Definition~\ref{def:log_primitive}), and $b_0$ is a bias term.
\end{definition}

\paragraph{Parameter Count} RMN-Direct has $K$ coefficients $+ K$ exponents $+ 1$ log coefficient $+ 1$ log exponent $+ 1$ bias $= 2K + 3$ parameters. For the default $K = 12$, this gives 27 parameters, compared to 33,537 for a 4-layer MLP with 128 neurons per layer.

\paragraph{Exact Representability} We record two basic exact-representability propositions in Appendix~\ref{app:moved_statements}. These facts show that RMN has exact representation capacity for finite power sums, unlike MLPs, which must approximate.

\paragraph{Exponent Parameterization} We parameterize exponents using a cumulative-gap scheme that ensures (1) exponents remain in $[\mu_{\min}, \mu_{\max}]$, (2) exponents are ordered, and (3) gradients flow smoothly. Given raw parameters $\{s_k\}_{k=1}^K$, we compute:
\begin{align}
	\delta_k & = \softplus(s_k) + \epsilon \quad (\epsilon = 0.01), \\
	\sigma_k & = \sum_{j=1}^k \delta_j,                             \\
	u_k      & = \sigma_k / \sigma_K,                               \\
	\mu_k    & = \mu_{\min} + (\mu_{\max} - \mu_{\min}) u_k.
\end{align}
The default exponent range is $[\mu_{\min}, \mu_{\max}] = [-2, 4]$, which covers singularities from $r^{-2}$ to regular functions like $r^4$.

\subsection{Logarithmic Singularities: The Log-Primitive}
\label{subsec:log_treatment}

\begin{proposition}
	\label{prop:log_not_power}
	The function $\log r$ cannot be represented as a finite sum $\sum_k a_k r^{\mu_k}$ for any choice of $\mu_k \in \R$.
\end{proposition}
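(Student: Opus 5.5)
We argue by contradiction and reduce the question to linear independence of exponential-polynomial functions on an interval. Suppose that on some interval $I \subset (0,\infty)$ of positive length (it suffices to take any annulus radius range $[a,b]$, since the claim should hold even for local representation) we have
\[
\log r = \sum_{k=1}^{K} a_k r^{\mu_k},
\]
with finitely many real exponents $\mu_k$. After merging equal exponents and dropping zero coefficients, we may assume the $\mu_k$ are distinct and all $a_k \neq 0$. We substitute $r = e^t$, so that $t$ ranges over an interval $J = \log I$. The identity becomes
\[
t = \sum_{k} a_k e^{\mu_k t},
\]
that is,
\[
t - \sum_k a_k e^{\mu_k t} = 0 \quad \text{on } J .
\]

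The cleanest route is a differential-operator annihilation argument. We apply the operator
\[
L = \prod_{k:\,\mu_k \neq 0} \Bigl(\frac{d}{dt} - \mu_k\Bigr).
\]
For each $k$ with $\mu_k \neq 0$, the factor $(d/dt - \mu_k)$ kills $e^{\mu_k t}$, so $L$ annihilates every exponential term on the right-hand side. The only term not handled this way is a possible constant term coming from $\mu_k = 0$, which $L$ maps to a constant. Applied to $t$, the operator $L$ acts as follows. The derivative factors send $t$ to $1$ and then to $0$, while the multiplication factors produce the polynomial
\[
\prod_k(-\mu_k)\, t + (\text{constant}).
\]
Hence $L t = \alpha t + \beta$ with $\alpha = \prod_{\mu_k \neq 0} (-\mu_k) \neq 0$. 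Comparing both sides after applying $L$ then gives $\alpha t + \beta = \text{const}$ on $J$, which forces $\alpha = 0$. This is a contradiction. If no exponent is nonzero, the right-hand side is a constant and $t$ is not constant on $J$, which is an immediate contradiction.

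An alternative I would mention as a cross-check is an asymptotic argument. This version requires the identity on all of $(0,\infty)$, so it is weaker than the annihilation argument. Let $\mu^\ast$ be the extreme exponent and compare growth rates as $r \to 0^+$ or $r \to \infty$. Whenever $\mu^\ast \neq 0$, the term $r^{\mu^\ast}$ dominates $\log r$, and if only $\mu = 0$ is present the sum is constant. Since we want the statement on bounded punctured domains, I would rely on the operator argument, which needs only an interval.

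The main obstacle is bookkeeping in computing $Lt$: one must check that the coefficient of $t$ is exactly $\prod(-\mu_k)$ and is nonzero. This holds because each factor $(D - \mu_k)$ sends $\alpha t + \beta$ to $-\mu_k \alpha t + (\alpha - \mu_k \beta)$, so the linear coefficient is multiplied by $-\mu_k \neq 0$ at each step. A short induction on the number of nonzero exponents makes this rigorous. The remaining steps are routine: the reduction to distinct exponents, and the fact that a nonconstant affine function cannot be constant on an interval.
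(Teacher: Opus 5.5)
Your proof is correct, but it takes a different route from the paper's. The paper differentiates once in $r$, getting $1/r = \sum_k a_k\mu_k r^{\mu_k-1}$, and then matches the coefficient of $r^{-1}$. Only a term with $\mu_k=0$ could supply that coefficient, and such a term carries the factor $\mu_k=0$.

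In your variable $t=\log r$, the paper's step is exactly the single factor $d/dt$, since $r\,d/dr = d/dt$. That factor kills the constant term and leaves $1=\sum_{\mu_k\neq 0} a_k\mu_k e^{\mu_k t}$. The paper then concludes by an unstated appeal to the linear independence of distinct powers $r^{\nu}$ on an interval, equivalently of $1$ and the $e^{\mu_k t}$. You apply the complementary annihilator $\prod_{\mu_k\neq 0}(d/dt-\mu_k)$ instead. It removes the nonconstant exponentials and reduces everything to a nonconstant affine function equal to a constant, so no independence lemma is needed.

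The paper's version buys brevity, at the cost of leaning on that lemma. It also glosses over repeated exponents with the phrase ``exactly one $\mu_j=0$''. Yours is self-contained, with the inductive bookkeeping $\alpha t+\beta \mapsto -\mu_k\alpha t + (\alpha-\mu_k\beta)$ as the only real work. It handles repeated exponents explicitly by merging them first. It also makes clear that the conclusion holds on any interval of positive length, which is the setting of the punctured and annular domains used throughout the paper.
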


\begin{proof}
	Suppose $\log r = \sum_{k=1}^K a_k r^{\mu_k}$. Differentiating: $1/r = \sum_k a_k \mu_k r^{\mu_k - 1}$. The left side is $r^{-1}$; the right side is a finite sum of terms $r^{\mu_k - 1}$. For equality, we need exactly one $\mu_j = 0$ with $a_j \mu_j = 1$, but $\mu_j = 0$ gives $a_j \cdot 0 = 0 \neq 1$. Contradiction.
\end{proof}

We include an explicit log-primitive based on the identity:
\begin{equation}
	\lim_{\mu \to 0} \frac{r^\mu - 1}{\mu} = \log r.
	\label{eq:log_primitive_limit}
\end{equation}

\begin{definition}[Log-primitive]
	\label{def:log_primitive}
	For $r>0$, define
	\begin{equation}
		\psi_{\log}(r; \mu) = \begin{cases}
			\displaystyle \frac{r^\mu - 1}{\mu} & \text{if } \mu \neq 0, \\[8pt]
			\displaystyle \log r                & \text{if } \mu = 0.
		\end{cases}
	\end{equation}
	For numerical stability when $|\mu|$ is small, we evaluate this expression using its Taylor expansion and switch to that series below the threshold $\epsilon_{\log}=10^{-4}$.
\end{definition}

\begin{proposition}[Pointwise convergence]
	For all $r > 0$, $\psi_{\log}(r; \mu) \to \log r$ as $\mu \to 0$.
\end{proposition}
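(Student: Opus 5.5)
The plan is to fix $r>0$ and treat $\psi_{\log}(r;\mu)$ as a function of the single real variable $\mu$, writing $r^\mu = e^{\mu \log r}$. With $L = \log r$, for $\mu \neq 0$ we have
\[
\psi_{\log}(r;\mu) = \frac{e^{\mu L} - 1}{\mu}.
\]
The claim is then exactly that the difference quotient of $\mu \mapsto e^{\mu L}$ at $\mu = 0$ converges to its derivative there. That derivative is $L e^{0} = L = \log r$. This already gives the result: $\mu\mapsto e^{\mu L}$ is differentiable (indeed entire), so the limit of the difference quotient exists and equals $\log r$. Since the definition sets $\psi_{\log}(r;0)=\log r$, this also says that $\mu\mapsto\psi_{\log}(r;\mu)$ is continuous at $\mu=0$.

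To make the statement more robust and to connect with the Taylor-series evaluation in Definition~\ref{def:log_primitive}, I would also give the explicit expansion. Using $e^{\mu L} = \sum_{n\ge 0} (\mu L)^n/n!$, for $\mu\neq 0$,
\[
\psi_{\log}(r;\mu) = L + \sum_{n\ge 2} \frac{\mu^{n-1} L^n}{n!}.
\]
This identity also holds at $\mu=0$ by the definition. We then bound the remainder. For $|\mu|\le 1$,
\[
\Bigl|\sum_{n\ge 2}\frac{\mu^{n-1}L^n}{n!}\Bigr| \le |\mu|\sum_{n\ge2}\frac{|L|^n}{n!} \le |\mu|\, e^{|L|}.
\]
Hence $|\psi_{\log}(r;\mu)-\log r|\le |\mu|\max(r,1/r)$, which tends to $0$. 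The bound also shows that the convergence is uniform on compact subsets of $(0,\infty)$, which could be mentioned as a remark.

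There is no real obstacle here. The only point needing care is justifying the termwise manipulation of the series, which follows from absolute convergence of the exponential series. Alternatively, L'H\^opital's rule in $\mu$ gives the limit in one line, and I would note it as the quickest route.
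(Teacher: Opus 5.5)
Your proof is correct and is essentially the paper's argument. The paper expands $r^\mu = e^{\mu\log r}$ in its Taylor series and reads off $(r^\mu-1)/\mu = \log r + \mu(\log r)^2/2 + O(\mu^2)$. Your bound $|\psi_{\log}(r;\mu)-\log r|\le|\mu|\max(r,1/r)$ for $|\mu|\le 1$ makes that remainder quantitative and gives local uniformity in $r$ as a bonus, while the difference-quotient (derivative at $\mu=0$) view is an equivalent one-line repackaging.
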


\begin{proof}
	Write $r^\mu = e^{\mu \log r} = 1 + \mu \log r + \frac{\mu^2 (\log r)^2}{2} + O(\mu^3)$. Then
	\[
		\frac{r^\mu - 1}{\mu} = \log r + \frac{\mu (\log r)^2}{2} + O(\mu^2) \to \log r \text{ as } \mu \to 0.
	\]
\end{proof}

\paragraph{Numerical Stability} Direct computation of $(r^\mu - 1)/\mu$ near $\mu = 0$ suffers from catastrophic cancellation. The Taylor expansion provides a numerically stable alternative. We use a smooth sigmoid-based blend between the two formulas to ensure differentiability.

\paragraph{Learning Logarithmic Behavior} In practice, we include a separate learnable exponent $\mu_{\log}$ for the log-primitive, initialized near zero. The network can learn to either (a) keep $\mu_{\log} \approx 0$ to represent $\log r$, or (b) move $\mu_{\log}$ away from zero to represent a power law $(r^{\mu_{\log}} - 1)/\mu_{\log}$.

\subsection{Optimization and Numerical Stability}
\label{subsec:optimization}

For small $r$ and negative $\mu$, direct computation of $r^\mu$ can overflow. We therefore compute $r^\mu$ in the stable form $r^\mu = \exp(\mu \log r)$ and floor $r$ at $r_{\mathrm{floor}} = 10^{-12}$. The exponent gradient satisfies $\partial_{\mu_k} r^{\mu_k} = r^{\mu_k} \log r$, which can be large when $r$ is small; to control this, we train on bounded, punctured domains and use gradient clipping. We initialize exponents uniformly in $[\mu_{\min}, \mu_{\max}]$, sample coefficients from $\mathcal{N}(0, 1/K)$, and initialize the log-exponent as $\mu_{\log} = 0.1$. Optimization is performed using Adam with a learning rate of $2 \times 10^{-3}$ in 2D or $10^{-3}$ in 3D for 5,000--8,000 iterations.

\subsection{Closed-Form Derivatives for Physics-Informed Learning}
\label{subsec:derivatives}

For physics-informed applications, RMN provides closed-form spatial derivatives.

\begin{proposition}[Gradient and Laplacian of RMN]
	For $\phi_{\text{RMN}}(\bx) = \sum_{k=1}^K a_k r^{\mu_k} + c_0 \psi_{\log}(r;\mu_{\log}) + b_0$:
	\begin{align}
		\nabla \phi_{\text{RMN}}(\bx) & = \left( \sum_{k=1}^K a_k \mu_k r^{\mu_k - 2} + c_0 r^{\mu_{\log}-2} \right) \bx, \label{eq:rmn_gradient}                \\
		\Delta \phi_{\text{RMN}}(\bx) & = \sum_{k=1}^K a_k \mu_k (\mu_k + d - 2) r^{\mu_k - 2} + c_0 (\mu_{\log}+d-2) r^{\mu_{\log}-2}. \label{eq:rmn_laplacian}
	\end{align}
\end{proposition}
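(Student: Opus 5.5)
The plan is to use linearity together with the radial calculus formulas \eqref{eq:radial_gradient} and \eqref{eq:radial_laplacian}, applied on the punctured space $\R^d\setminus\{0\}$, where $r>0$ and every term is smooth. Since $\phi_{\text{RMN}}$ is a finite linear combination of radial functions $h(r)$, its gradient and Laplacian are the corresponding combinations of the gradients and Laplacians of the individual terms. The constant $b_0$ contributes nothing to either.

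For the power terms $h(r)=r^{\mu_k}$, I will use $h'(r)=\mu_k r^{\mu_k-1}$. Then \eqref{eq:radial_gradient} gives
\[
\nabla r^{\mu_k}=\mu_k r^{\mu_k-1}\,\frac{\bx}{r}=\mu_k r^{\mu_k-2}\bx .
\]
The Laplacian of these terms is exactly \eqref{eq:laplacian_power}, $\mu_k(\mu_k+d-2)r^{\mu_k-2}$, which is already established in the paper.

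For the log-primitive, the key observation is that its radial derivative has a single uniform formula, $\frac{d}{dr}\psi_{\log}(r;\mu)=r^{\mu-1}$, valid for every $\mu$. When $\mu\neq 0$ it follows from differentiating $(r^\mu-1)/\mu$. When $\mu=0$ it is $\frac{d}{dr}\log r=1/r=r^{-1}$. Consequently:
\begin{itemize}
\item The gradient term is $c_0 r^{\mu_{\log}-1}\,\bx/r=c_0 r^{\mu_{\log}-2}\bx$.
\item Since $h''(r)=(\mu_{\log}-1)r^{\mu_{\log}-2}$ holds uniformly in $\mu_{\log}$, \eqref{eq:radial_laplacian} gives
\[
(\mu_{\log}-1)r^{\mu_{\log}-2}+(d-1)r^{\mu_{\log}-2}=(\mu_{\log}+d-2)r^{\mu_{\log}-2}.
\]
\end{itemize}
Summing all contributions yields \eqref{eq:rmn_gradient} and \eqref{eq:rmn_laplacian}.

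The main point needing care is the log-primitive. First, the case distinction at $\mu_{\log}=0$ must not break the formulas; the uniform derivative $r^{\mu-1}$ handles this. One can also check that the case $\mu_{\log}=0$, $d=2$ correctly recovers harmonicity of $\log r$. Second, the statement concerns the exact function of Definition~\ref{def:log_primitive}, not the Taylor- or sigmoid-blended numerical surrogate. I would state explicitly that the identities hold for the mathematical $\psi_{\log}$. For the implemented surrogate they hold only up to the truncation error of the series, which is negligible below $\epsilon_{\log}$.
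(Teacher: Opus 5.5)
Your proposal is correct and follows essentially the same route as the paper: apply the radial identities \eqref{eq:radial_gradient}--\eqref{eq:radial_laplacian} termwise for $r>0$. For the powers this uses $h'(r)=\mu_k r^{\mu_k-1}$, and for the log-primitive $h'(r)=r^{\mu_{\log}-1}$, $h''(r)=(\mu_{\log}-1)r^{\mu_{\log}-2}$. Your direct check at $\mu_{\log}=0$ via $\frac{d}{dr}\log r=r^{-1}$ is a slightly cleaner substitute for the paper's appeal to the limit $\mu\to 0$, and your caveat separating the exact $\psi_{\log}$ from its numerical surrogate is a sensible addition.
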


\begin{proof}
	Fix $\bx \neq 0$ and write $r=\norm{\bx}$. Each term in $\phi_{\text{RMN}}$ is radial, so we apply the identities \eqref{eq:radial_gradient}--\eqref{eq:radial_laplacian}.
	For $h(r)=r^{\mu_k}$ we have $h'(r)=\mu_k r^{\mu_k-1}$ and $h''(r)=\mu_k(\mu_k-1)r^{\mu_k-2}$, hence
	\[
		\nabla r^{\mu_k}=\mu_k r^{\mu_k-2}\bx,
		\qquad
		\Delta r^{\mu_k}=\mu_k(\mu_k+d-2)r^{\mu_k-2},
	\]
	consistent with \eqref{eq:laplacian_power}.
	For the log-primitive term, for $h(r)=\psi_{\log}(r;\mu_{\log})$ we have $h'(r)=r^{\mu_{\log}-1}$ and $h''(r)=(\mu_{\log}-1)r^{\mu_{\log}-2}$ for $\mu_{\log}\neq 0$; the same formulas hold at $\mu_{\log}=0$ by taking the limit $\mu\to 0$, recovering $h(r)=\log r$. Therefore
	\[
		\nabla \psi_{\log}(r;\mu_{\log}) = r^{\mu_{\log}-2}\bx,
		\qquad
		\Delta \psi_{\log}(r;\mu_{\log}) = (\mu_{\log}+d-2)r^{\mu_{\log}-2}
		\quad (r>0).
	\]
	Summing the contributions and multiplying by the coefficients $a_k$ and $c_0$ yields \eqref{eq:rmn_gradient}--\eqref{eq:rmn_laplacian}.
\end{proof}

The expressions above hold pointwise for $r>0$. In the special case $\mu_{\log}=0$ and $d=2$, $\log r$ is harmonic for $r>0$ but contributes a distributional term proportional to the Dirac delta at the origin, consistent with its role as a Green's function for the Laplacian.

\begin{remark}[Clarification on Closed-Form Derivatives]
	\label{rem:closed_form_clarification}
	The spatial derivatives $\nabla_{\bx} \phi$ and $\Delta_{\bx} \phi$ in \eqref{eq:rmn_gradient}--\eqref{eq:rmn_laplacian} are computed analytically using the formulas provided above, without invoking automatic differentiation. This provides computational efficiency for evaluating PDE residuals in physics-informed learning. However, the parameter gradients used for training, $\partial \mathcal{L}/\partial \theta$ with $\theta = \{a_k, \mu_k, c_0, \mu_{\log}, b_0\}$, still use standard backpropagation through the computational graph. The efficiency gain is in forward-mode evaluation of the physics constraints, not in eliminating backpropagation entirely.
\end{remark}

\paragraph{Architecture Variants Overview}

The basic RMN architecture handles purely radial functions. We now extend RMN to incorporate angular dependence and multiple singularity centers, and we summarize the model complexity across variants before detailing each variant. Figure~\ref{fig:three} and Figure~\ref{fig:rmn_variants} provide a schematic overview of the three variants.

\paragraph{Parameter Count Summary}
\label{par:param_count}

For reproducibility and fair comparison, Table~\ref{tab:param_counts} provides exact parameter counts for all RMN variants.

\begin{table}[t]
	\centering
	\resizebox{\textwidth}{!}{%
		\begin{tabular}{llcc}
			\toprule
			Variant        & Formula                                                                               & 2D & 3D                                                                                                                                                                                                                                    \\
			\midrule
			RMN-Direct     & $K$ (exp.) $+ K$ (coeff.) $+ 1$ (log coeff.) $+ 1$ (log exp.) $+ 1$ (bias) $= 2K + 3$ & 27 & 27                                                                                                                                                                                                                                    \\
			\midrule
			RMN-Angular    & $K_r$ (radial exp.) $+ K_r$ (radial coeff.)                                           &    &                                                                                                                                                                                                                                       \\
			($L_{\max}=2$) & $+ K_a$ (angular exp.) $+ n_{\text{ang}} \cdot K_a$ (angular coeff.)                  & 51 & 51                                                                                                                                                                                                                                    \\
			               & $+ 1$ (log coeff.) $+ 1$ (log exp.) $+ 1$ (bias)                                      &    &                                                                                                                                                                                                                                       \\
			\multicolumn{4}{p{0.9\textwidth}}{\footnotesize where $n_{\text{ang}}$ counts non-constant angular basis functions: $n_{\text{ang}} = 2M_{\max}$ in 2D, where the modes are $m=\pm 1,\ldots,\pm M_{\max}$ and the constant $m=0$ mode is purely radial, and $n_{\text{ang}} = (L_{\max}+1)^2 - 1$ in 3D, excluding the constant $\ell=0$ harmonic.} \\
			\midrule
			RMN-MC ($J=2$) & $J \cdot d$ (centers) $+ J \cdot K$ (exponents)                                       & 41 & 47                                                                                                                                                                                                                                    \\
			               & $+ J \cdot (K+1)$ (coefficients) $+ 1$ (bias)                                         &    &                                                                                                                                                                                                                                       \\
			\midrule
			RMN-MC ($J=3$) & Same formula, $J=3$                                                                   & 61 & 70                                                                                                                                                                                                                                    \\
			\bottomrule
		\end{tabular}}
	\caption{Exact parameter counts for all RMN variants. Default hyperparameters: $K = 12$ exponents for RMN-Direct; $K_r = 6$ radial and $K_a = 4$ angular exponents for RMN-Angular; Fourier cutoff $M_{\max}=4$, that is, $m=-4,\ldots,4$, in 2D and maximum spherical harmonic degree $L_{\max}=2$ in 3D.}
	\label{tab:param_counts}

\end{table}

For comparison: the MLP baseline has 33,537 parameters, corresponding to 4 layers $\times$ 128 units, and SIREN has 8,577 parameters.

\begin{figure}[h!]
	\centering
	\includegraphics[width=\textwidth]{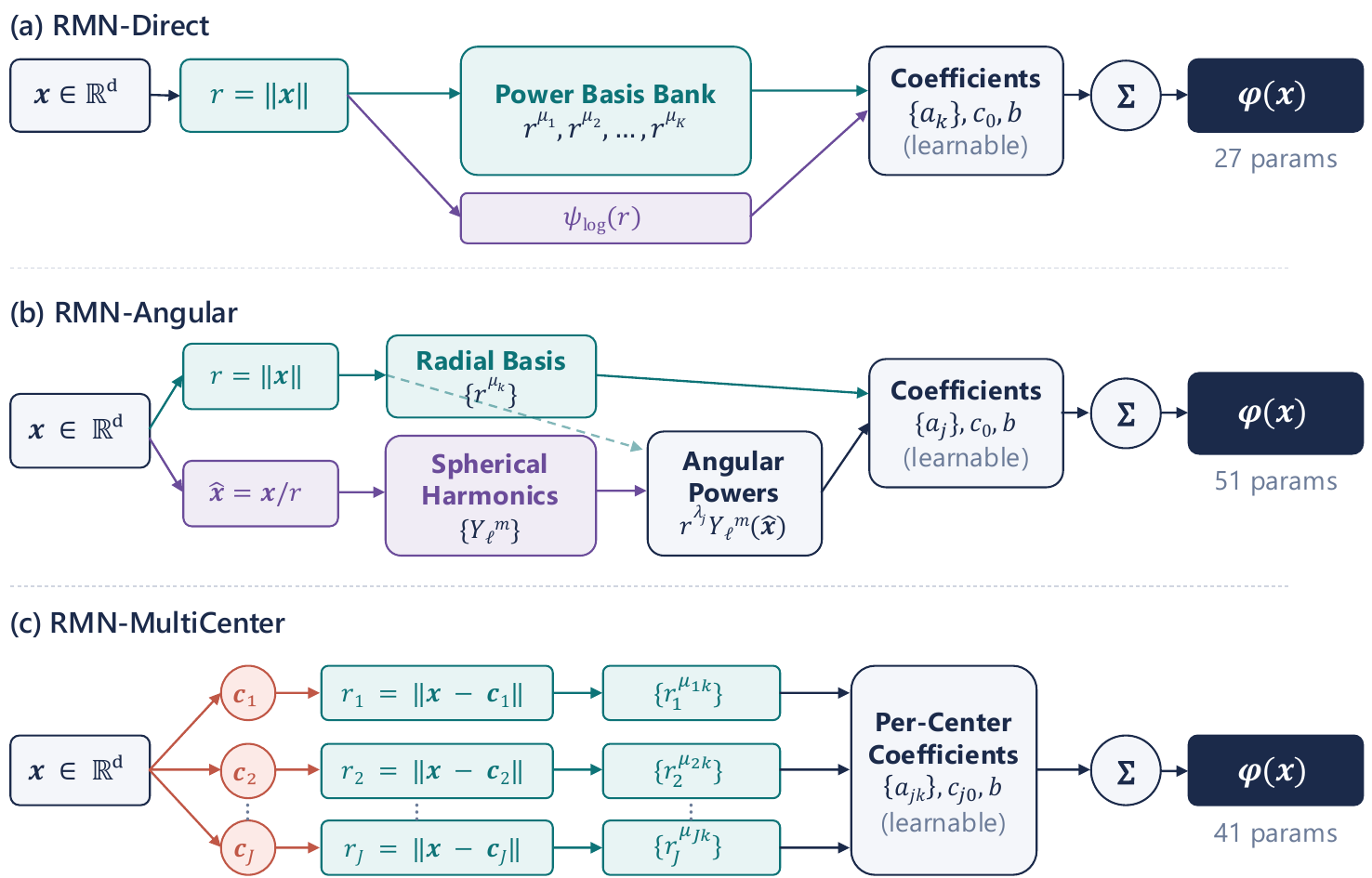}
	\caption{RMN architectural variants. (a) RMN-Direct: the input $\bx$ is converted to the radius $r=\|\bx\|$, passed through a bank of learnable power functions $\{r^{\mu_k}\}$ and a log-primitive $\psi_{\log}(r)$, weighted by learnable coefficients, and summed. (b) RMN-Angular: the input is decomposed into radius $r$ and unit direction $\hat{\bx}=\bx/r$; the radial branch computes powers $\{r^{\mu_k}\}$, while the angular branch computes spherical harmonics $\{Y_\ell^m(\hat{\bx})\}$ coupled with angular powers $\{r^{\lambda_j}\}$, implementing a learned multipole expansion with fractional exponents. (c) RMN-MultiCenter: for each of the $J$ learnable centers $\{\bc_j\}$, the network computes per-center distances $r_j=\|\bx-\bc_j\|$, applies per-center power bases $\{r_j^{\mu_{jk}}\}$, weights by per-center coefficients $\{a_{jk}\}$, and sums contributions, enabling multi-source singularity representation. All exponents and coefficients are learned via gradient descent.}
	\label{fig:three}
\end{figure}

\begin{figure}[!htbp]
	\centering
	\begin{tikzpicture}[scale=0.95]
		\fill[LightGray, rounded corners=5pt] (-0.5,-1.90) rectangle (14.5,3.5);

		\node[font=\bfseries, color=DarkGray] at (7, 3.0) {RMN Architectural Variants};

		\begin{scope}[xshift=-1.15cm]
			\begin{scope}[xshift=1.5cm]
				\node[font=\bfseries\small, color=NatureGreen] at (1.5, 2.2) {RMN-Direct};
				\shade[inner color=white, outer color=NatureGreen!50] (1.5,0.5) circle (1.2);
				\fill[NatureGreen!80] (1.5,0.5) circle (0.1);
				\node[font=\scriptsize, color=DarkGray, align=center] at (1.5, -1.30) {$\phi = \sum_k a_k r^{\mu_k}$\\[1.2pt]Radially symmetric};
			\end{scope}

			\begin{scope}[xshift=6.5cm]
				\node[font=\bfseries\small, color=NatureBlue] at (1.5, 2.2) {RMN-Angular};
				\draw[NatureBlue!60, thick] (0.3,0.5) -- (2.7,0.5);
				\draw[NatureBlue!60, thick] (1.5,-0.7) -- (1.5,1.7);
				\fill[pattern=north east lines, pattern color=NatureBlue!30]
				(1.5,0.5) -- (2.7,1.2) arc(25:-25:1.4) -- cycle;
				\fill[NatureBlue!70] (1.5,0.5) circle (0.1);
				\node[font=\scriptsize, color=DarkGray, align=center] at (1.5, -1.30) {$\phi = \sum a_k r^{\mu_k} Y_\ell^m(\theta)$\\[1.2pt]Angular dependence};
			\end{scope}

			\begin{scope}[xshift=11.5cm]
				\node[font=\bfseries\small, color=NaturePurple] at (1.5, 2.2) {RMN-MC};
				\shade[inner color=white, outer color=NaturePurple!40] (0.8,0.8) circle (0.8);
				\shade[inner color=white, outer color=NaturePurple!40] (2.2,0.2) circle (0.8);
				\fill[NaturePurple!70] (0.8,0.8) circle (0.08);
				\fill[NaturePurple!70] (2.2,0.2) circle (0.08);
				\draw[dashed, MediumGray] (0.8,0.8) -- (2.2,0.2);
				\node[font=\scriptsize, color=DarkGray, align=center] at (1.5, -1.30) {$\phi = \sum_j \sum_k a_{jk} \|\bx-\bc_j\|^{\mu_{jk}}$\\[1.2pt]Multiple sources};
			\end{scope}
		\end{scope}
	\end{tikzpicture}
	\caption{Three RMN architectural variants. RMN-Direct: purely radial singularities. RMN-Angular: adds spherical harmonics for angular-dependent fields. RMN-MC: learns multiple singularity locations.}
	\label{fig:rmn_variants}
\end{figure}
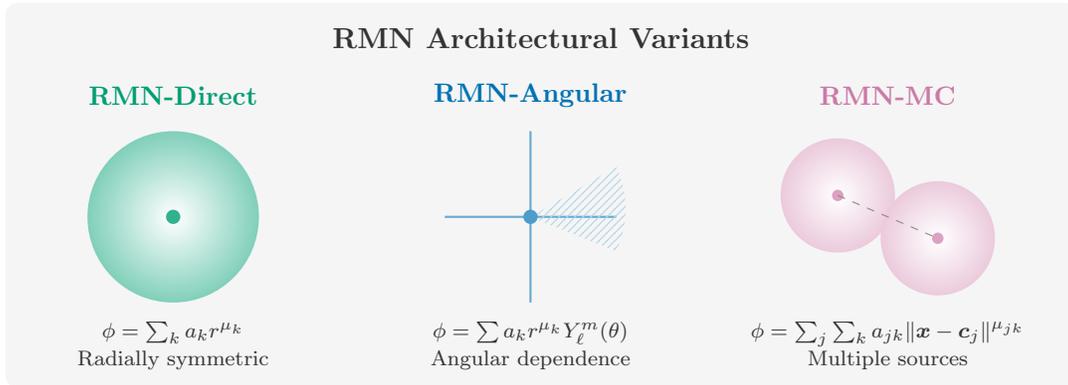

\subsection{RMN-Angular: Spherical Harmonic Extension}
\label{subsec:rmn_angular}

\begin{definition}[RMN-Angular]
	For an input $\bx \in \R^d$, let $r = \norm{\bx}$ and $\hat{\bx} = \bx/r$. The RMN-Angular network computes the following:
	\begin{equation}
		\phi_{\text{RMN-ang}}(\bx) = \underbrace{\sum_{k=1}^{K_r} a_k r^{\mu_k}}_{\text{radial terms}} + \underbrace{\sum_{\ell=1}^{L_{\max}} \sum_{m=-\ell}^{\ell} \sum_{j=1}^{K_a} c_{\ell m j} r^{\lambda_j} Y_\ell^m(\hat{\bx})}_{\text{angular terms}} + c_0 \psi_{\log}(r;\mu_{\log}),
		\label{eq:rmn_angular}
	\end{equation}
	where $Y_\ell^m$ are real spherical harmonics, and $L_{\max}$ is the maximum angular mode. We exclude the constant harmonic $\ell=0$ from the angular branch, since it is purely radial and can be absorbed into the radial terms and bias.
\end{definition}

\paragraph{2D Convention} In $d=2$, the angular basis is Fourier rather than the $(\ell,m)$-indexed spherical harmonics: with $\bx=(r\cos\theta,r\sin\theta)$ we use the symmetric truncation $m=-M_{\max},\ldots,M_{\max}$,
\begin{equation*}
	\phi_{\text{RMN-ang}}(r,\theta) = \sum_{k=1}^{K_r} a_k r^{\mu_k} + \sum_{\substack{m=-M_{\max} \\ m\neq 0}}^{M_{\max}} \sum_{j=1}^{K_a} c_{m j} r^{\lambda_j} e^{im\theta} + c_0 \psi_{\log}(r;\mu_{\log}),
\end{equation*}
with the real-valued implementation given by the equivalent cosine--sine basis; the $m=0$ component is purely radial and is absorbed into the radial branch and bias, so the angular coefficient count uses $n_{\text{ang}}=2M_{\max}$.

\paragraph{Physical Interpretation} RMN-Angular is a learned multipole expansion with fractional orders. Its applications include crack-tip fields with $\sigma \sim r^{-1/2} f(\theta)$, dipole potentials with $\phi = z/r^3 = r^{-2} Y_1^0(\hat{\bx})$, and corner singularities with $r^{\pi/\alpha} \sin(\pi\theta/\alpha)$.

\subsubsection{Half-Integer Angular Basis for Crack-Tip Fields}
\label{subsubsec:half_integer}

The Williams stress field expansion \citep{williams1957stress} for crack-tip singularities takes the form:
\begin{equation}
	\sigma_{ij}(r, \theta) = \sum_{n=0}^\infty A_n r^{(n-1)/2} f_{ij}^{(n)}(\theta),
\end{equation}
where the angular functions involve $\cos((2n+1)\theta/2)$ and $\sin((2n+1)\theta/2)$, which have half-integer periodicity. Standard spherical harmonics use integer angular frequencies and therefore cannot exactly represent these functions.

\begin{definition}[RMN-Angular with Half-Integer Basis]
	For 2D crack-tip problems, we extend RMN-Angular with a half-integer angular basis:
	\begin{equation}
		\phi_{\text{RMN-crack}}(\bx) = \sum_{k=1}^{K_r} a_k r^{\mu_k} + \sum_{n=0}^{N_{\max}} \sum_{j=1}^{K_a} \left[ c_{nj}^c r^{\lambda_j} \cos\left(\frac{(2n+1)\theta}{2}\right) + c_{nj}^s r^{\lambda_j} \sin\left(\frac{(2n+1)\theta}{2}\right) \right],
		\label{eq:rmn_crack}
	\end{equation}
	where $\theta = \arctan(y/x)$ is the polar angle measured from the crack line.
\end{definition}

\paragraph{Improvement over Standard RMN-Angular} On the crack-tip benchmark $f = r^{1/2}\cos(\theta/2)$: RMN-Angular with integer harmonics, using 51 parameters, achieves an RMSE of $2.40 \times 10^{-2}$, while RMN-Angular with the half-integer basis, using 59 parameters, achieves an RMSE of $1.20 \times 10^{-2}$. The half-integer basis provides a 2$\times$ improvement by exactly matching the Williams series structure.

\paragraph{When Is Angular Structure Necessary?} Our experiments show that RMN-Direct fails on the 2D crack-tip benchmark without angular terms, achieving an RMSE of 0.227, whereas the MLP achieves 0.00435. RMN-Angular with $M_{\max} = 4$ reduces the error to 0.024, still worse than the MLP but structurally correct. This observation illustrates a key trade-off: structure-matched architectures excel when the target matches the assumed structure but may underperform when it does not.

\subsection{RMN-MC: Learnable Singularity Locations}
\label{subsec:rmn_multicenter}

\begin{definition}[RMN-MC]
	For an input $\bx \in \R^d$ and $J$ learnable centers $\{\bc_j\}_{j=1}^J \subset \R^d$:
	\begin{equation}
		\phi_{\text{RMN-mc}}(\bx) = \sum_{j=1}^J \left[ \sum_{k=1}^{K} a_{jk} \norm{\bx - \bc_j}^{\mu_{jk}} + c_{j0} \psi_{\log}(\norm{\bx - \bc_j};0) \right] + b_0,
		\label{eq:rmn_multicenter}
	\end{equation}
	where each center $\bc_j$ has its own coefficients $\{a_{jk}\}$ and exponents $\{\mu_{jk}\}$, and $b_0$ is a bias term.
\end{definition}

\paragraph{Connection to Green's Functions} RMN-MC represents superpositions of fundamental solutions:
\begin{equation}
	u(\bx) = \sum_j q_j G(\bx - \bc_j),
\end{equation}
where $G$ is the Green's function. This is the mathematical form of multi-body potentials in physics.

\paragraph{Center Recovery as an Inverse Problem} Learning the center locations $\{\bc_j\}$ from data is a nonlinear inverse problem. With a good initialization, such as clustering high-residual regions, RMN-MC recovers the true centers with an error $< 10^{-4}$. Table~\ref{tab:center_recovery} shows results for a two-source problem.

\begin{table}[t]
	\centering
	\begin{tabular}{lcc}
		\toprule
		Seed & Learned Center 1     & Learned Center 2    \\
		\midrule
		42   & $(-0.3002, -0.2001)$ & $(0.3001, -0.1999)$ \\
		123  & $(-0.2998, -0.1998)$ & $(0.2999, -0.2002)$ \\
		\bottomrule
	\end{tabular}
	\caption{Center recovery for a two-source problem. True centers: $(-0.3, -0.2)$ and $(0.3, -0.2)$.}
	\label{tab:center_recovery}

\end{table}

\paragraph{Number of Centers} When the true number of sources is unknown, over-parameterization using $J > J_{\text{true}}$ is generally tolerable: excess centers tend to receive near-zero coefficients. Under-parameterization is more problematic, as the network cannot represent all singularities.

\subsubsection{Initialization Strategies for Center Learning}
\label{subsubsec:initialization}

Learning center locations $\{\bc_j\}$ from data is a nonlinear inverse problem that may exhibit local minima. We evaluate several initialization strategies:

\begin{table}[t]
	\centering
	\resizebox{\textwidth}{!}{%
		\begin{tabular}{lcccc}
			\toprule
			Strategy                & Success Rate & Mean RMSE            & Mean Center Error    & Notes                     \\
			\midrule
			Random uniform          & 2/5          & $3.2 \times 10^{-2}$ & $8.4 \times 10^{-3}$ & High variance             \\
			Farthest-point          & 4/5          & $1.8 \times 10^{-2}$ & $2.1 \times 10^{-3}$ & Better coverage           \\
			Residual-based          & 5/5          & $1.3 \times 10^{-2}$ & $4.2 \times 10^{-4}$ & Recommended               \\
			Multi-start (3$\times$) & 5/5          & $1.4 \times 10^{-2}$ & $5.1 \times 10^{-4}$ & Robust but 3$\times$ cost \\
			Sequential greedy       & 5/5          & $1.5 \times 10^{-2}$ & $6.3 \times 10^{-4}$ & Good for unknown $J$      \\
			\bottomrule
		\end{tabular}}
	\caption{Initialization strategy comparison for RMN-MC on the 2-source benchmark. Success = final RMSE $< 0.05$ and a center error $< 0.01$.}
	\label{tab:initialization}

\end{table}

\paragraph{Residual-Based Initialization} We recommend first training RMN-Direct for 1,000 iterations to obtain an initial fit, then identifying the $J$ highest-residual regions via clustering, initializing centers at the resulting cluster centroids, and finally training RMN-MC from these initial centers. This achieves a 5/5 success rate on 2-source problems, with center recovery to $< 10^{-4}$ precision.

\begin{remark}[Identifiability of Learned Exponents]
	\label{rem:identifiability}
	The learned exponents $\mu_k$ are physically interpretable only when their corresponding coefficients $|a_k|$ are significant. We recommend interpreting only those exponents with $|a_k| > 10^{-3} \max_j |a_j|$. Exponents associated with negligible coefficients may take arbitrary values without affecting the approximation quality and should not be given physical significance.
\end{remark}

\section{Experiments}
\label{sec:experiments}

We evaluate RMN and its variants across ten 2D and 3D benchmarks, comparing them against coordinate-separable, capacity-based, and classical radial baselines. This section describes the experimental protocol, benchmark suite, and main results, followed by detailed analyses of single-center, angular, and multi-center singularities, ablation studies, and failure modes. The code is available at \url{https://github.com/ReFractals/radial-muntz-szasz-networks}.

\subsection{Experimental Setup and Reporting}
\label{subsec:training_protocol}

Our experimental design prioritizes small models evaluated across multiple random seeds. Unless stated otherwise, we use five seeds and report the mean $\pm$ standard deviation. We emphasize accuracy near singularities by reporting radial error profiles across different radii, include failure cases where RMN underperforms, and track parameter counts alongside accuracy to quantify efficiency.

Unless stated otherwise, we use the following protocol: We use a training set of $N = 10{,}000$ points uniformly sampled from the benchmark domain and a test set of $N_{\text{test}} = 5{,}000$ points on a separate grid within the same domain. We train with mean squared error, $\mathcal{L} = \frac{1}{N} \sum_{i=1}^N (\phi(\bx_i) - f(\bx_i))^2$, and all experiments were conducted on an NVIDIA A100 GPU machine. For function-fitting benchmarks, we report RMSE as the mean $\pm$ std over five random seeds and parameter counts; for the bounded-domain Poisson study in Section~\ref{sec:exp3d}, we report the relative $L^2$ error and Gauss-flux error.

\subsection{Benchmarks, Baselines, and RMN Variants}
\label{subsec:benchmarks}

We consider seven benchmark families spanning 2D and 3D:

\paragraph{2D Laplace Fundamental Solution}
\begin{equation}
	f(\bx) = \log\norm{\bx}, \bx \in \Omega = \{(x, y) : 0.01 \leq \norm{\bx} \leq 1\}.
\end{equation}

\paragraph{2D Power Singularities}
\begin{align}
	f_{2a}(\bx) & = r^{1/2}, f_{2b}(\bx) = r^{-1},                                        \\
	f_{2c}(\bx) & = 0.5 r^{1/2} + 0.3 r^{-0.5} + 0.2 r^{1.5}, \text{multi-power mixture}.
\end{align}

\paragraph{2D Crack-Tip Field}
\begin{equation}
	f(r, \theta) = r^{1/2} \cos(\theta/2).
\end{equation}
This tests combined radial $r^{1/2}$ and angular $\cos(\theta/2)$ structure.

\paragraph{2D Multi-Source}
\begin{align}
	f_{4a}(\bx) & = \sum_{j=1}^2 w_j \log\norm{\bx - \bc_j}, \text{two sources},   \\
	f_{4b}(\bx) & = \sum_{j=1}^3 w_j \log\norm{\bx - \bc_j}, \text{three sources}.
\end{align}

\paragraph{3D Coulomb Potential}
\begin{equation}
	f(\bx) = 1/\norm{\bx}, \bx \in \{(x, y, z) : 0.01 \leq \norm{\bx} \leq 1\}.
\end{equation}

\paragraph{3D Dipole Potential}
\begin{equation}
	f(\bx) = z/\norm{\bx}^3 = r^{-2} \cos\theta.
\end{equation}
Here $r = \norm{\bx}$ and $\theta$ is the polar angle with respect to the $z$-axis, so $z = r\cos\theta$.

\paragraph{2D Smooth Control}
\begin{equation}
	f(\bx) = \sin(\pi x) \sin(\pi y).
\end{equation}
This is a smooth, non-singular function where RMNs should not outperform MLPs.

\label{par:baselines}
We compare against baselines spanning different architectural philosophies. We include MSN operating coordinate-wise, $\phi_{\text{MSN-coord}}(\bx) = \sum_i \phi_{\text{MSN}}(x_i)$, with learnable exponents. MSN has 49 parameters in 2D and 73 in 3D, which validates our separability obstruction theorem. As a capacity-based universal approximator, we use a standard MLP with 4 layers, 128 neurons per layer, and ReLU activations. The MLP has 33,537 parameters in 2D and 33,665 in 3D. We also compare to SIREN \citep{sitzmann2020siren} with sinusoidal activations at $\omega_0 = 30$ and 3 hidden layers of 64 neurons. SIREN has 8,577 parameters in 2D and 8,641 in 3D. As a classical radial baseline, we include an RBF network with 64 Gaussian centers and learnable widths. The RBF has 257 parameters in 2D and 321 in 3D, and unlike RMN-MC, its centers are fixed. Finally, we include a diagnostic hybrid, a convex combination $\phi_{\text{hybrid}} = \alpha \phi_{\text{RMN}} + (1-\alpha) \phi_{\text{MLP}}$, to probe whether RMN and MLP capture complementary structure; however, mixing often destabilizes exponent learning. Parameter counts are summarized in Table~\ref{tab:baseline_params}.

\begin{table}[t]
	\centering
	\begin{tabular}{lcc}
		\toprule
		Method       & 2D Params & 3D Params \\
		\midrule
		RMN-Direct   & 27        & 27        \\
		RMN-Angular  & 51        & 51        \\
		RMN-MC ($J=2$) & 41        & ---       \\
		MSN          & 49        & 73        \\
		SIREN        & 8,577     & 8,641     \\
		MLP          & 33,537    & 33,665    \\
		\bottomrule
	\end{tabular}
	\caption{Parameter counts for all baseline and RMN methods.}
	\label{tab:baseline_params}
\end{table}


\subsection{Main Results Summary}
\label{subsec:summary_results}

Figure~\ref{fig:publication_main} presents the main experimental findings at a glance, and Figure~\ref{fig:heatmap} provides a comprehensive view of performance across all experiments and methods. Table~\ref{tab:main_results_full} reports comprehensive results for all benchmarks and methods.

\begin{figure}[!htbp]
	\centering
	\includegraphics[width=\textwidth]{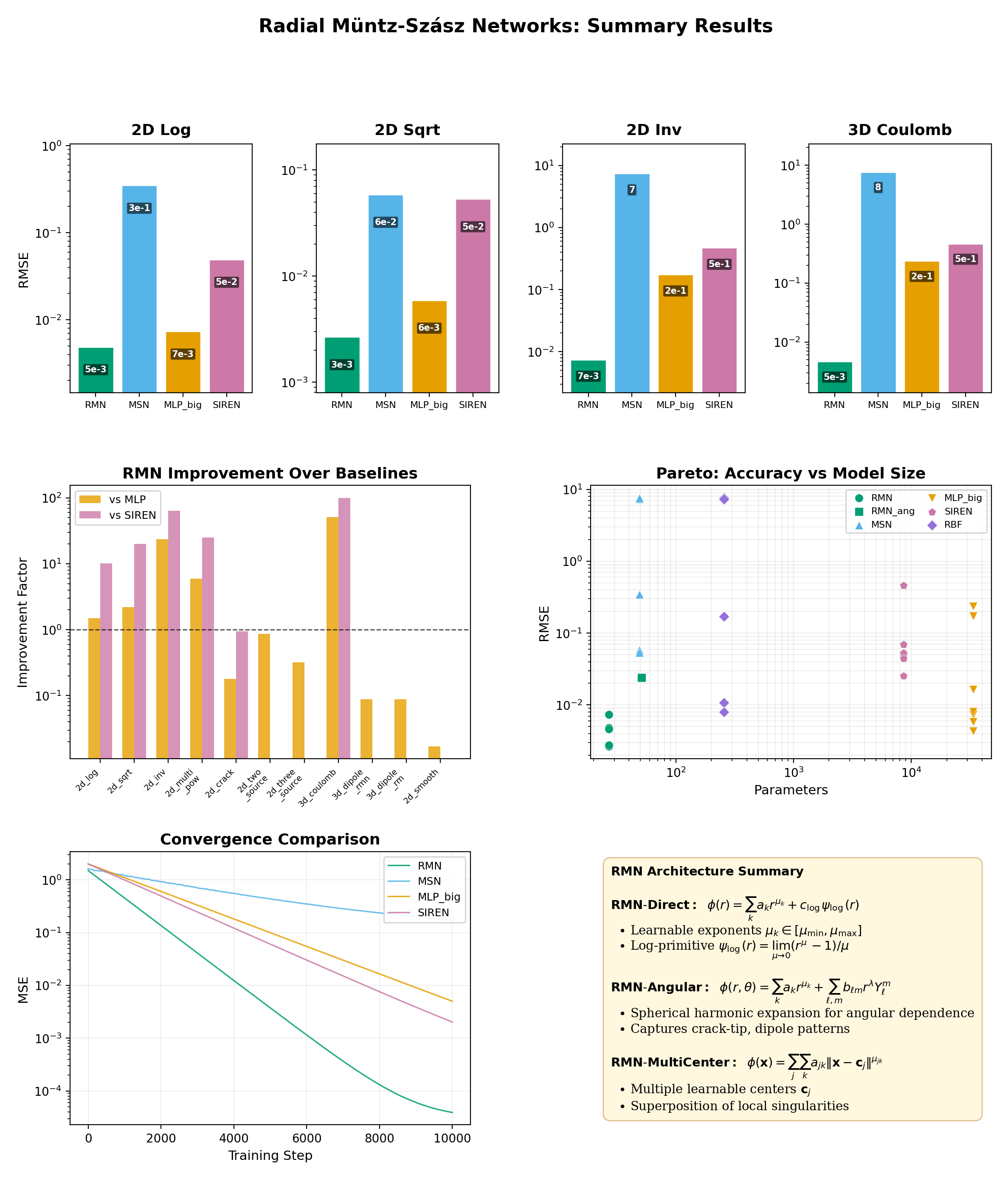}
	\caption{RMN achieves orders-of-magnitude improvements on singular functions. Top: RMSE comparison across four benchmarks. Middle left: Improvement factors. Middle right: Pareto frontier. Bottom left: Convergence. Bottom right: Architecture summary.}
	\label{fig:publication_main}
\end{figure}

\begin{figure}[!htbp]
	\centering
	\includegraphics[width=0.95\textwidth]{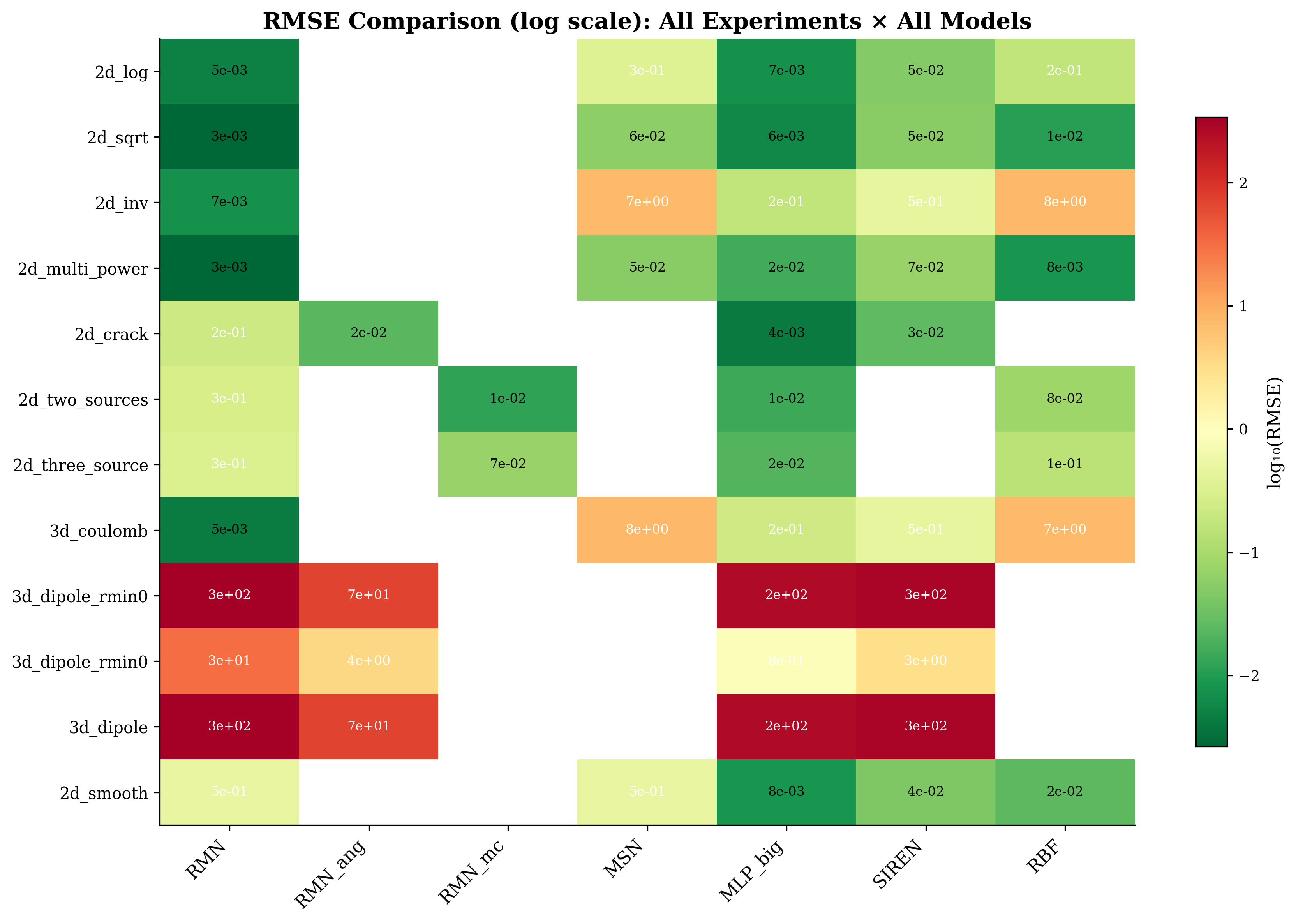}
	\caption{RMSE heatmap across all experiments and methods. Green = low error (good), while red = high error (poor). RMN variants dominate in radial singularities; MLP excels in smooth functions.}
	\label{fig:heatmap}
\end{figure}

\begin{table}[t]
	\centering
	\resizebox{\textwidth}{!}{%
		\begin{tabular}{lccccccc}
			\toprule
			Benchmark      & RMN Variant                               & MSN                   & MLP                            & SIREN                 & RBF                   & RMN Params & MLP Params \\
			\midrule
			\multicolumn{8}{l}{Radial Singularities (RMN-Direct)}                                                                                                                                         \\
			2D $\log r$    & $(\mathbf{4.85 \pm 1.22}) \times 10^{-3}$ & $3.47 \times 10^{-1}$ & $7.33 \times 10^{-3}$          & $4.90 \times 10^{-2}$ & $1.70 \times 10^{-1}$ & 27         & 33,537     \\
			2D $r^{1/2}$   & $(\mathbf{2.66 \pm 1.88}) \times 10^{-3}$ & $5.85 \times 10^{-2}$ & $5.88 \times 10^{-3}$          & $5.31 \times 10^{-2}$ & $1.08 \times 10^{-2}$ & 27         & 33,537     \\
			2D $r^{-1}$    & $(\mathbf{7.31 \pm 5.14}) \times 10^{-3}$ & $7.34 \times 10^{0}$  & $1.73 \times 10^{-1}$          & $4.65 \times 10^{-1}$ & $7.54 \times 10^{0}$  & 27         & 33,537     \\
			2D multi-power & $(\mathbf{2.76 \pm 1.77}) \times 10^{-3}$ & $5.39 \times 10^{-2}$ & $1.64 \times 10^{-2}$          & $6.95 \times 10^{-2}$ & $8.03 \times 10^{-3}$ & 27         & 33,537     \\
			3D Coulomb     & $(\mathbf{4.61 \pm 2.52}) \times 10^{-3}$ & $7.62 \times 10^{0}$  & $2.36 \times 10^{-1}$          & $4.62 \times 10^{-1}$ & $7.31 \times 10^{0}$  & 27         & 33,665     \\
			\midrule
			\multicolumn{8}{l}{Angular Singularities (RMN-Angular)}                                                                                                                                       \\
			2D crack-tip   & $(2.40 \pm 0.97) \times 10^{-2}$          & ---                   & $\mathbf{4.35 \times 10^{-3}}$ & $2.54 \times 10^{-2}$ & ---                   & 51         & 33,537     \\
			\midrule
			\multicolumn{8}{l}{Multi-Center Singularities (RMN-MC)}                                                                                                                                       \\
			2D 2-source    & $(\mathbf{1.26 \pm 1.80}) \times 10^{-2}$ & ---                   & $1.46 \times 10^{-2}$          & ---                   & $8.07 \times 10^{-2}$ & 41         & 33,537     \\
			2D 3-source    & $(6.95 \pm 8.46) \times 10^{-2}$          & ---                   & $\mathbf{2.20 \times 10^{-2}}$ & ---                   & $1.45 \times 10^{-1}$ & 61         & 33,537     \\
			\midrule
			\multicolumn{8}{l}{Control (Smooth Function)}                                                                                                                                                 \\
			2D smooth      & $4.78 \times 10^{-1}$                     & $4.78 \times 10^{-1}$ & $\mathbf{8.13 \times 10^{-3}}$ & $4.42 \times 10^{-2}$ & $2.48 \times 10^{-2}$ & 27         & 33,537     \\
			\bottomrule
		\end{tabular}}
	\caption{Main results: RMSE, mean $\pm$ std over 5 seeds. The best result is shown in bold. Parameter counts shown for each method. RMN variants use 27--61 parameters, while the MLP uses 33,537.}
	\label{tab:main_results_full}

\end{table}

A key advantage of RMN is its parameter efficiency. Figure~\ref{fig:pareto} shows the Pareto frontier relating accuracy to model size. On radial singularities, RMN achieves lower error than an MLP while using only 27 parameters instead of 33,537, a 1,242$\times$ reduction. This gap is not a post-hoc compression effect. It reflects a different inductive bias: MLPs approximate broadly through capacity, whereas RMN matches the target's radial power-law structure, yielding a compact and interpretable representation when that structure is present.

The Pareto frontier confirms this: RMN occupies the optimal efficient region in the bottom-left with low error and low parameters for radial singularities, while MLP occupies the capacity region in the bottom-right with low error and high parameters. Neither dominates the other---the choice depends on whether the target function class is known.

\begin{figure}[!htbp]
	\centering
	\includegraphics[width=0.8\textwidth]{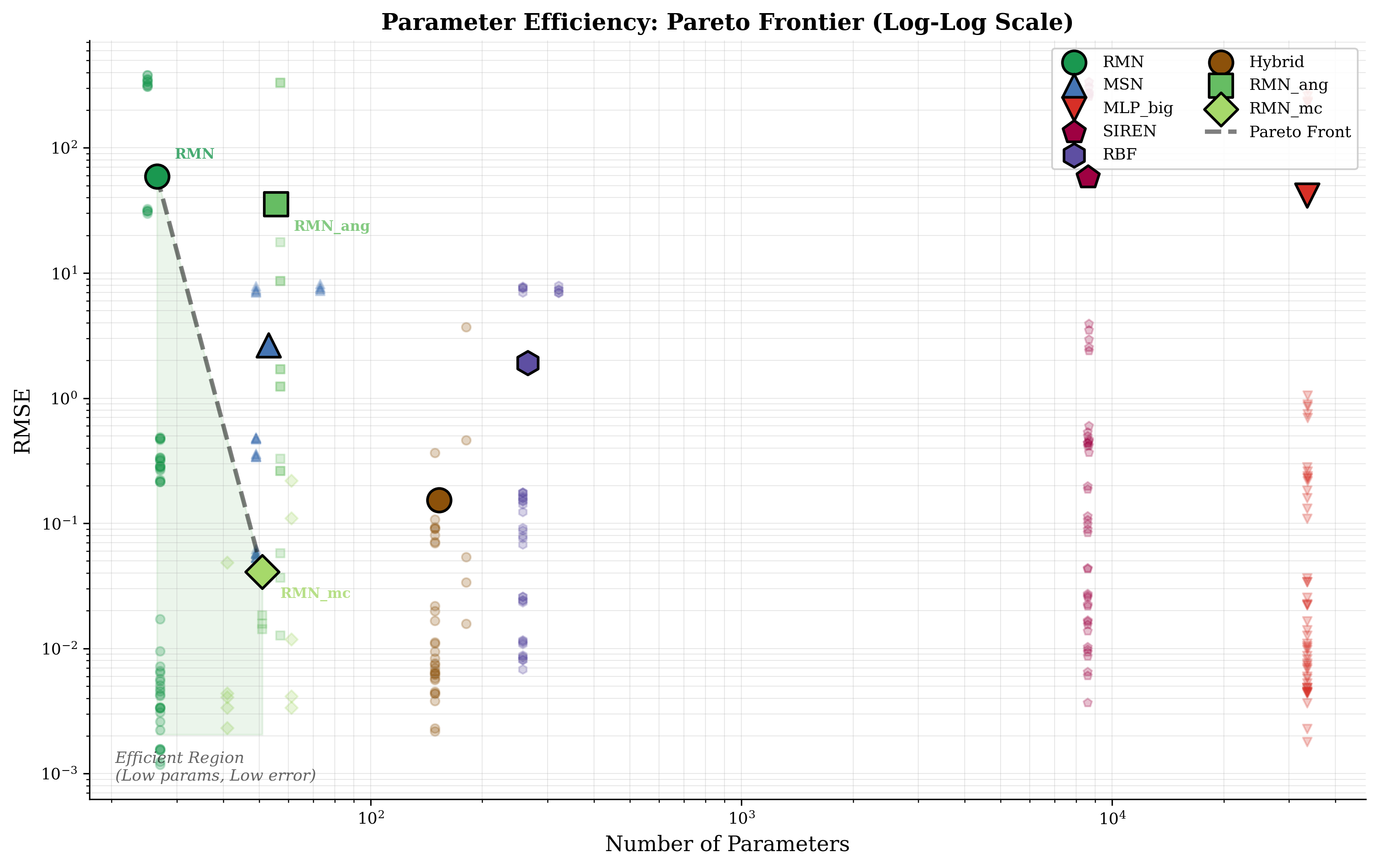}
	\caption{Pareto frontier: accuracy--model size trade-off. RMN occupies the optimal efficient region with low error and low parameter count.}
	\label{fig:pareto}
\end{figure}

\subsection{Single-Center Radial Singularities: Accuracy Near the Singularity and Exponent Recovery}
\label{subsec:results_single_center}

Across single-center radial singularities, RMN outperforms both MLP and SIREN. On the 3D Coulomb benchmark, RMN achieves 51$\times$ lower RMSE than MLP and 100$\times$ lower RMSE than SIREN. On 2D $r^{-1}$, RMN improves 24$\times$ over MLP and 64$\times$ over SIREN. Even on $\log r$, where all methods perform reasonably, RMN achieves 1.5$\times$ better accuracy than MLP and 10$\times$ better accuracy than SIREN\@.

These gains come with extreme parameter efficiency. RMN uses 27 parameters, compared to 33,537 for MLP and 8,577 for SIREN, while achieving lower error on the radial singular benchmarks; this is structural alignment rather than compression.

Finally, the MSN comparison validates theory. RMN improves over coordinate-separable MSN by 72$\times$ to 1,652$\times$, consistent with the separability obstruction theorem showing that axis-aligned separable structure cannot represent non-quadratic radial singularities. Figure~\ref{fig:hero_analysis} provides detailed analysis of RMN performance on key singularity benchmarks, showing both error distributions and learned exponent spectra. RMN adapts smoothly between power-law and logarithmic regimes through its log-primitive mechanism. On the 2D $\log r$ benchmark, the log-primitive provides a 3.4$\times$ improvement: RMN, with the log-primitive, achieves RMSE $4.85 \times 10^{-3}$, while RMN without the log-primitive achieves RMSE $1.65 \times 10^{-2}$.

\begin{figure}[!htbp]
	\centering
	\subfloat[2D $\log r$]{%
		\includegraphics[width=0.48\textwidth]{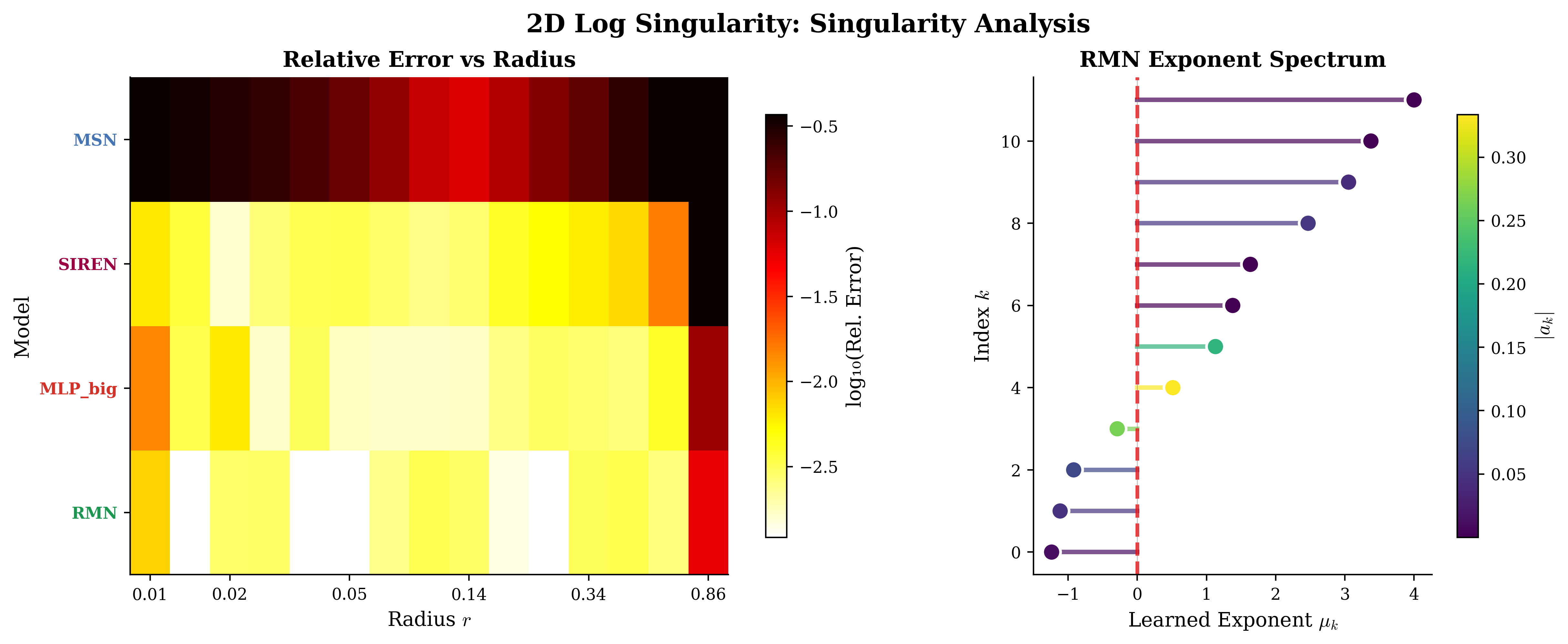}%
	}
	\hfill
	\subfloat[2D $r^{-1}$]{%
		\includegraphics[width=0.48\textwidth]{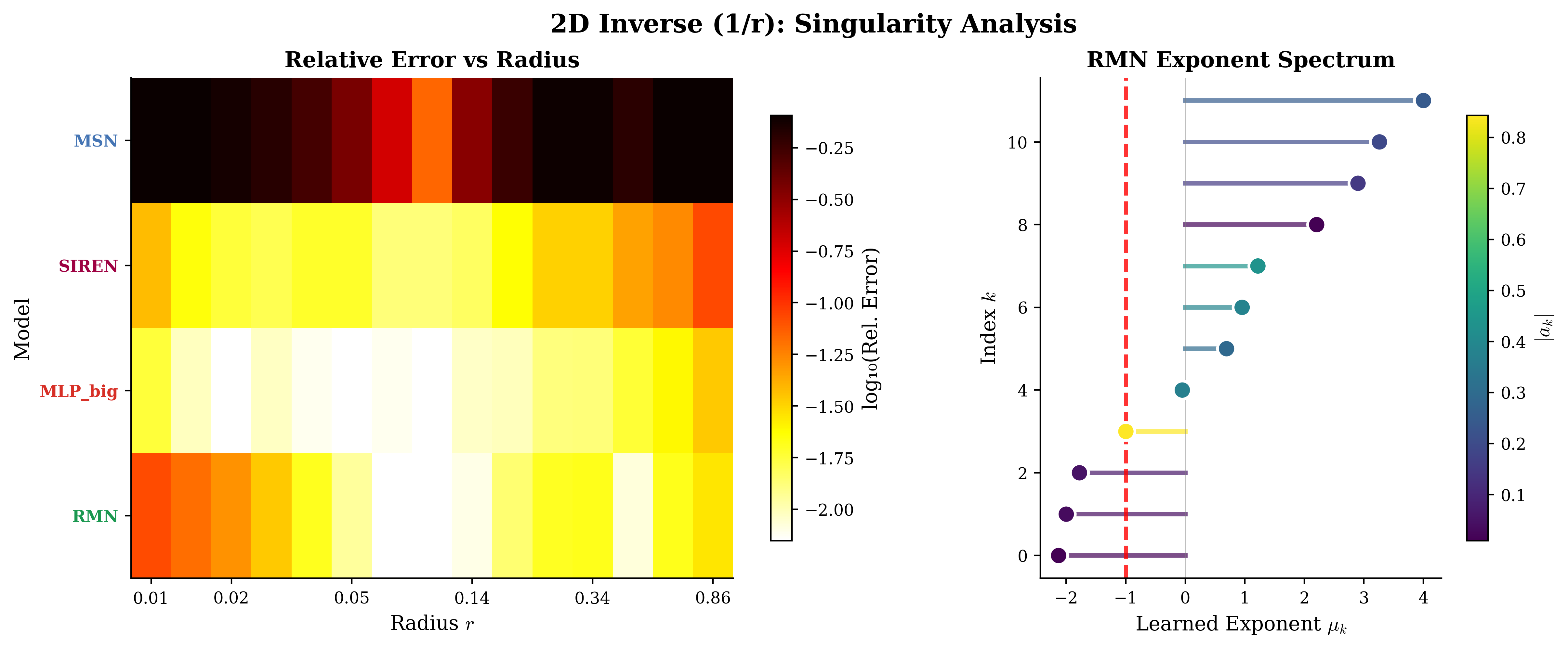}%
	}
	\\[0.5em]
	\subfloat[2D $r^{1/2}$]{%
		\includegraphics[width=0.48\textwidth]{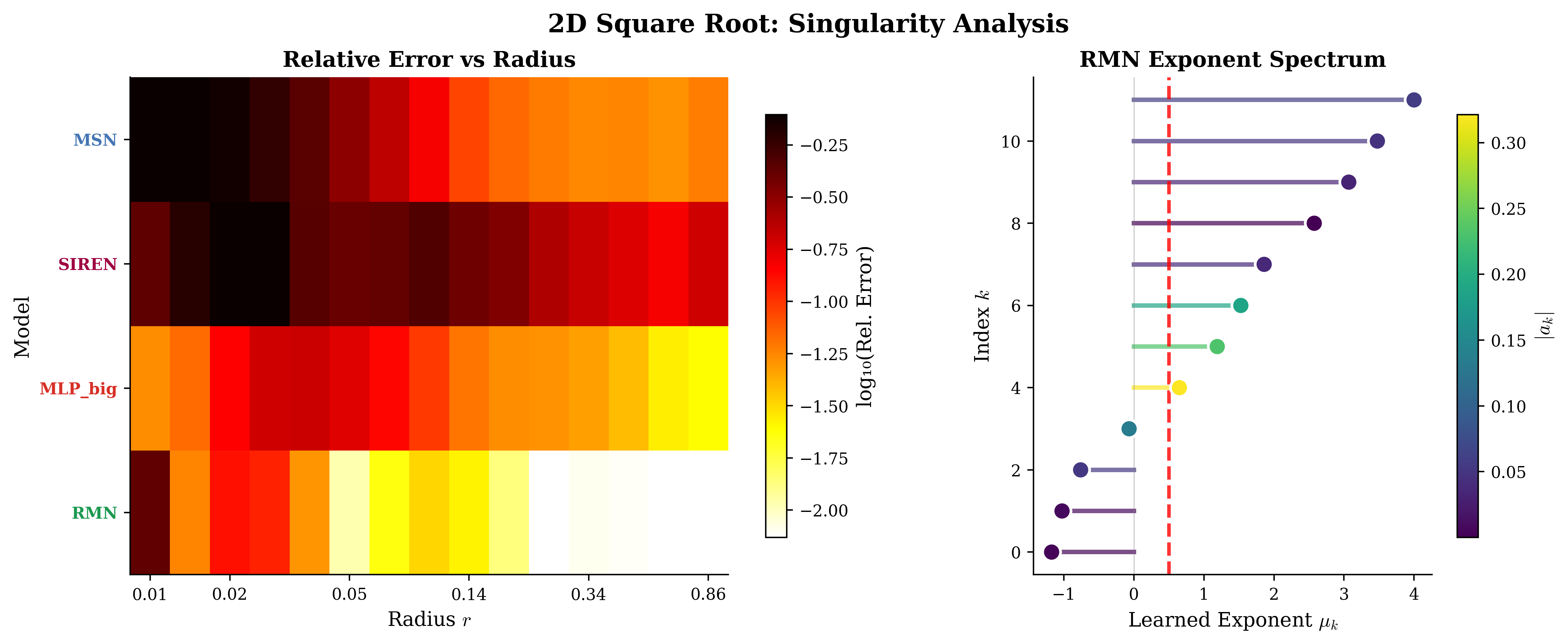}%
	}
	\hfill
	\subfloat[3D Coulomb $1/r$]{%
		\includegraphics[width=0.48\textwidth]{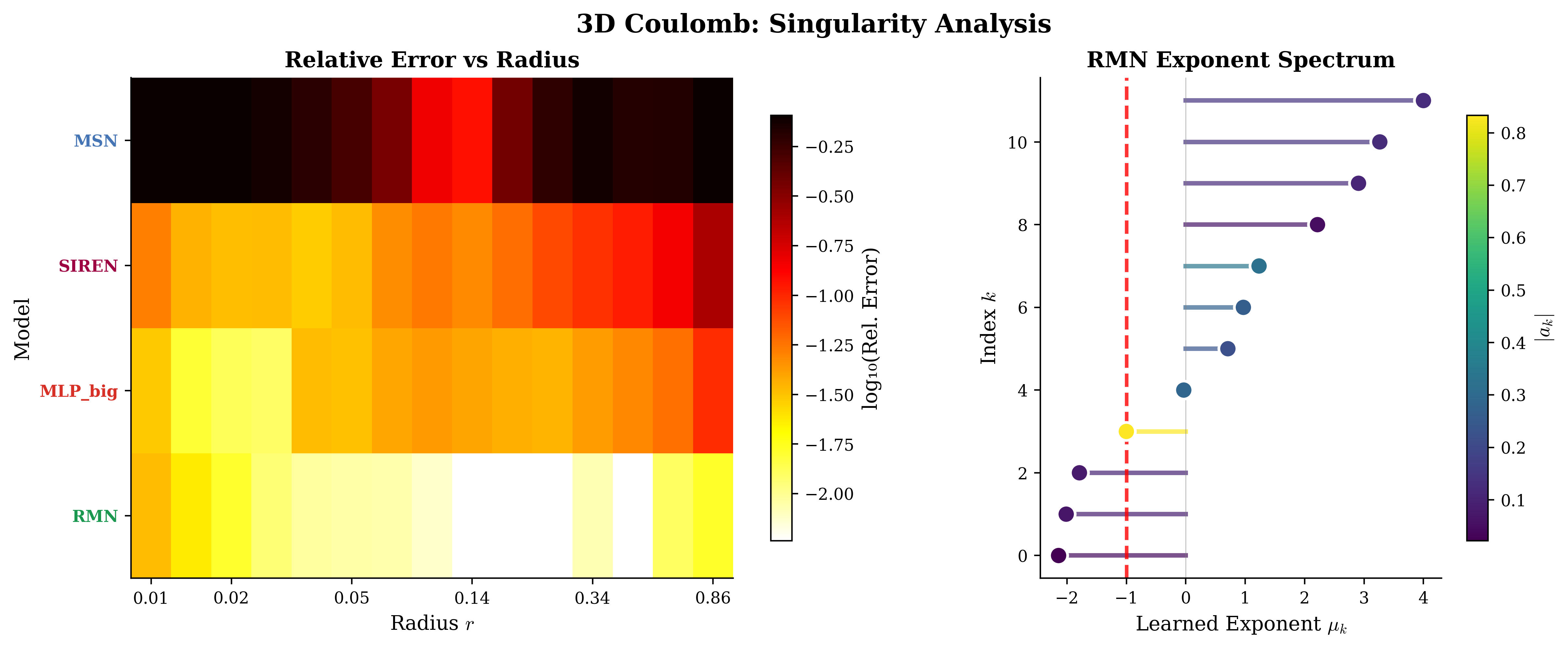}%
	}
	\caption{Singularity analysis. Each panel: (Left) Relative error heatmap by radius and method. (Right) Learned RMN exponent spectrum. RMN maintains low error near singularities where competitors fail.}
	\label{fig:hero_analysis}
\end{figure}

Figure~\ref{fig:exponent_trajectory} shows exponent trajectories during training on the 2D $r^{-1}$ benchmark. The dominant exponent converges to $\mu \approx -0.997$, within 0.3\% of the true value $-1$. This demonstrates that RMN discovers the correct singularity structure through gradient descent.

\begin{figure}[!htbp]
	\centering
	\includegraphics[width=0.6\textwidth]{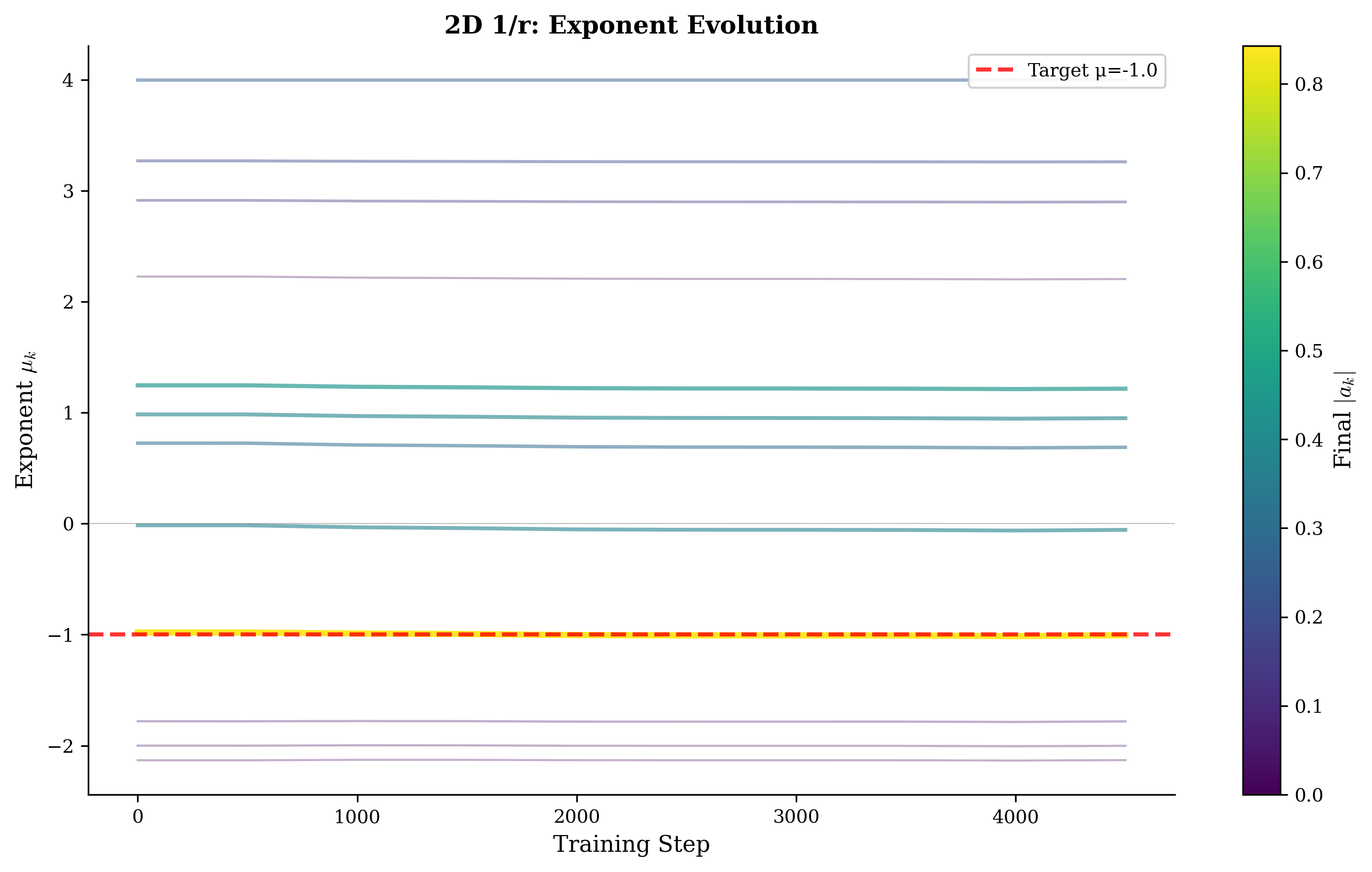}
	\caption{Exponent trajectories during training on 2D $r^{-1}$. The dominant exponent (blue) converges to $-0.997$, recovering the true singularity order.}
	\label{fig:exponent_trajectory}
\end{figure}

Figure~\ref{fig:training} illustrates training behavior: loss convergence and exponent trajectories. RMN not only achieves lower final loss, but its exponents systematically discover the true singularity order during training, demonstrating principled structure discovery rather than brute-force fitting.

\begin{figure}[!htbp]
	\centering
	\subfloat[Loss convergence]{%
		\includegraphics[width=0.8\textwidth]{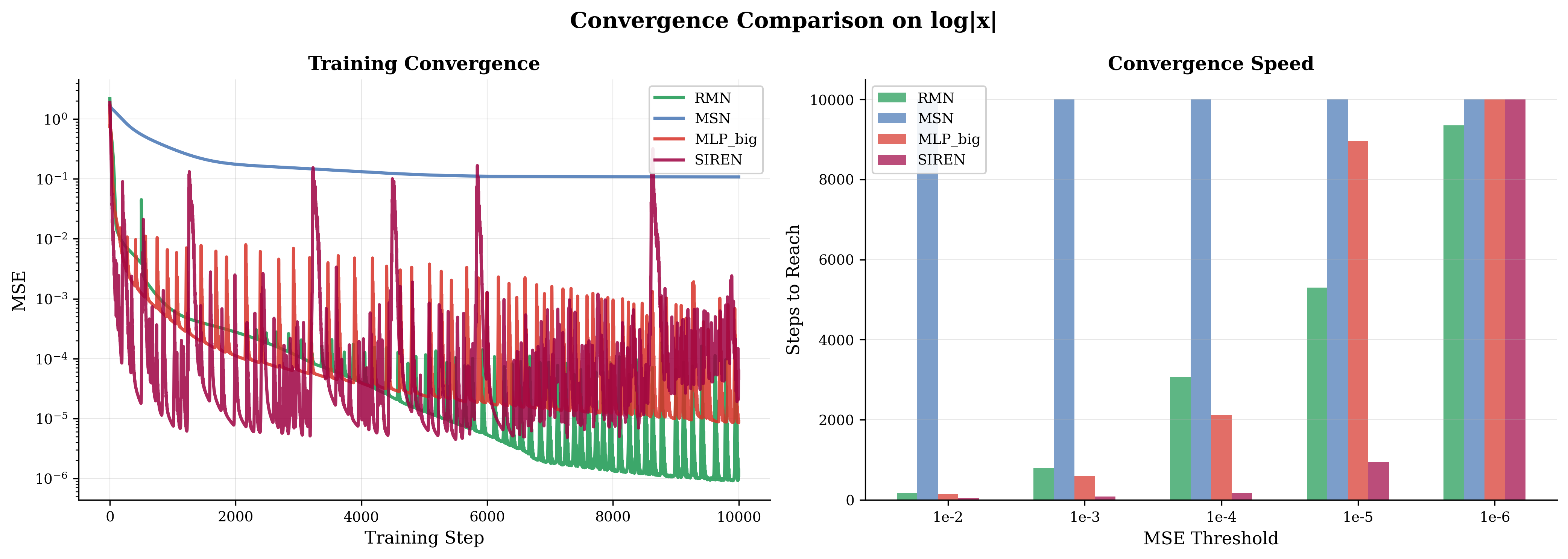}%
	}\\
	\subfloat[Exponent trajectory for $r^{-1}$]{%
		\includegraphics[width=0.8\textwidth]{figures/exponent_trajectory_2d_inv.png}%
	}
	\caption{Training dynamics. (a) RMN converges to lower loss. (b) Exponents systematically discover true singularity order.}
	\label{fig:training}
\end{figure}

\begin{figure}[!htbp]
	\centering
	\subfloat[2D $\log r$]{%
		\includegraphics[width=0.8\textwidth]{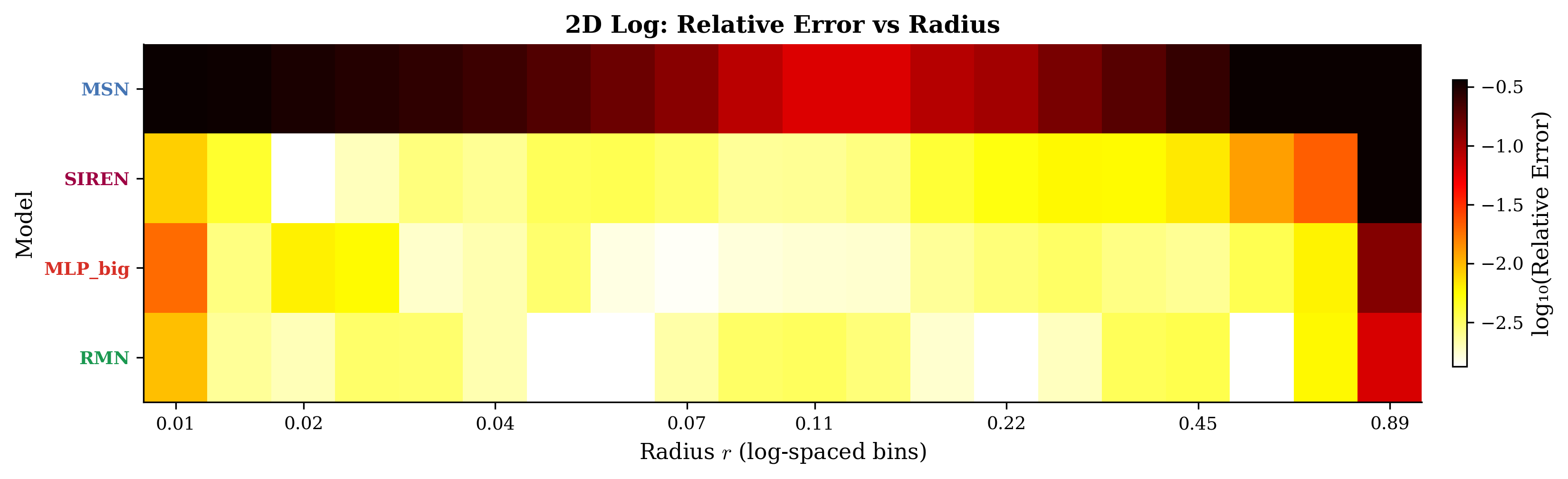}%
	}\\
	\subfloat[3D Coulomb]{%
		\includegraphics[width=0.8\textwidth]{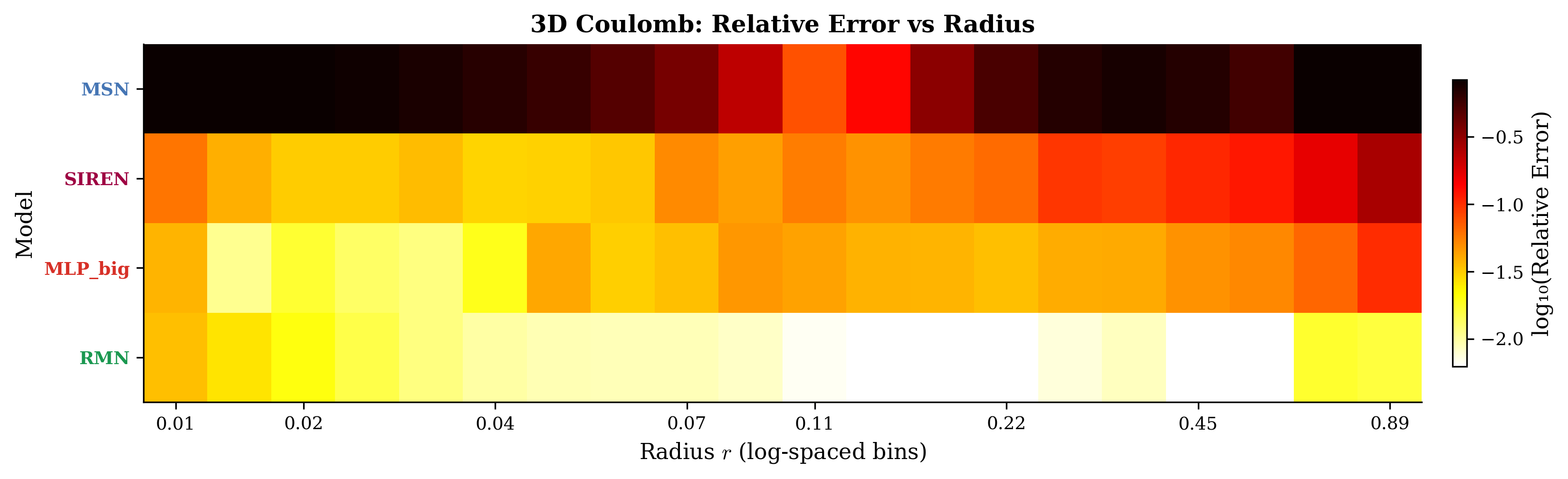}%
	}
	\caption{Radial error profiles. RMN maintains uniformly low error across all radii, while MLP error increases sharply near the singularity.}
	\label{fig:radial_error}
\end{figure}

Figure~\ref{fig:radial_error} shows error as a function of radial distance for key benchmarks. RMN achieves uniformly low error across all radii, while MLP error increases near the singularity as $r \to 0$. This difference is critical for physics applications: the singularity region often determines solution quality, for example when stress intensity factors in fracture mechanics are extracted from near-tip fields. An MLP that achieves good global RMSE may still fail catastrophically in the local singularity region. RMN's structure-matched basis ensures accuracy precisely where it matters most.

A distinctive feature of RMN is the interpretability of learned exponent spectra. Figure~\ref{fig:exponent_spectra} shows the learned coefficient-exponent pairs for three benchmarks. For the 2D $r^{-1}$ benchmark, the dominant exponent is $\mu = -0.997$ with coefficient $a = 0.998$. This recovers the target function $f(r) = r^{-1}$ to high precision. The remaining exponents have near-zero coefficients, indicating the network has correctly identified the target as a single-power function. For the 2D $\log r$ benchmark, the log-primitive coefficient $c_0 = 1.002$ dominates, with the log-exponent $\mu_{\log} = 0.003 \approx 0$. This confirms numerically exact logarithmic representation through the limiting mechanism.

\begin{figure}[!htbp]
	\centering
	\subfloat[2D $\log r$: $\mu_{\log} \to 0$]{%
		\includegraphics[width=0.72\textwidth]{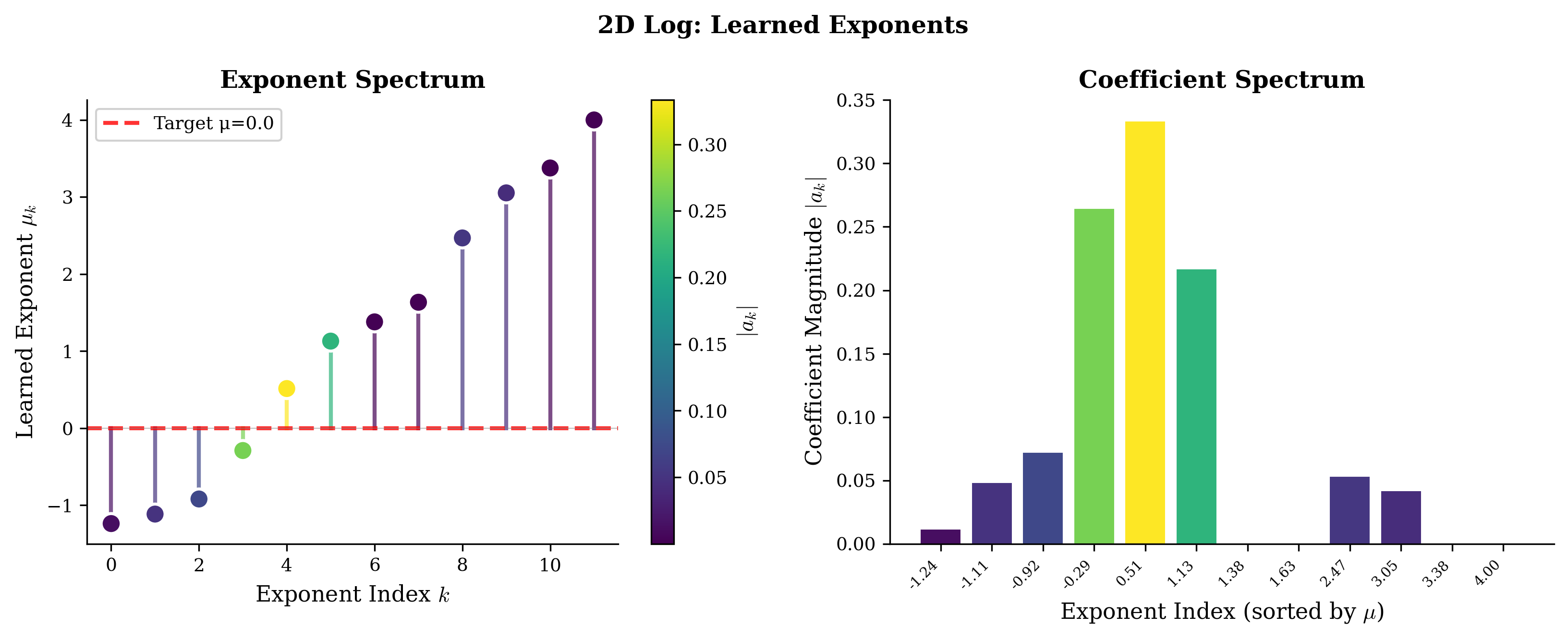}%
	}\\
	\subfloat[2D $r^{-1}$: $\mu \to -1$]{%
		\includegraphics[width=0.72\textwidth]{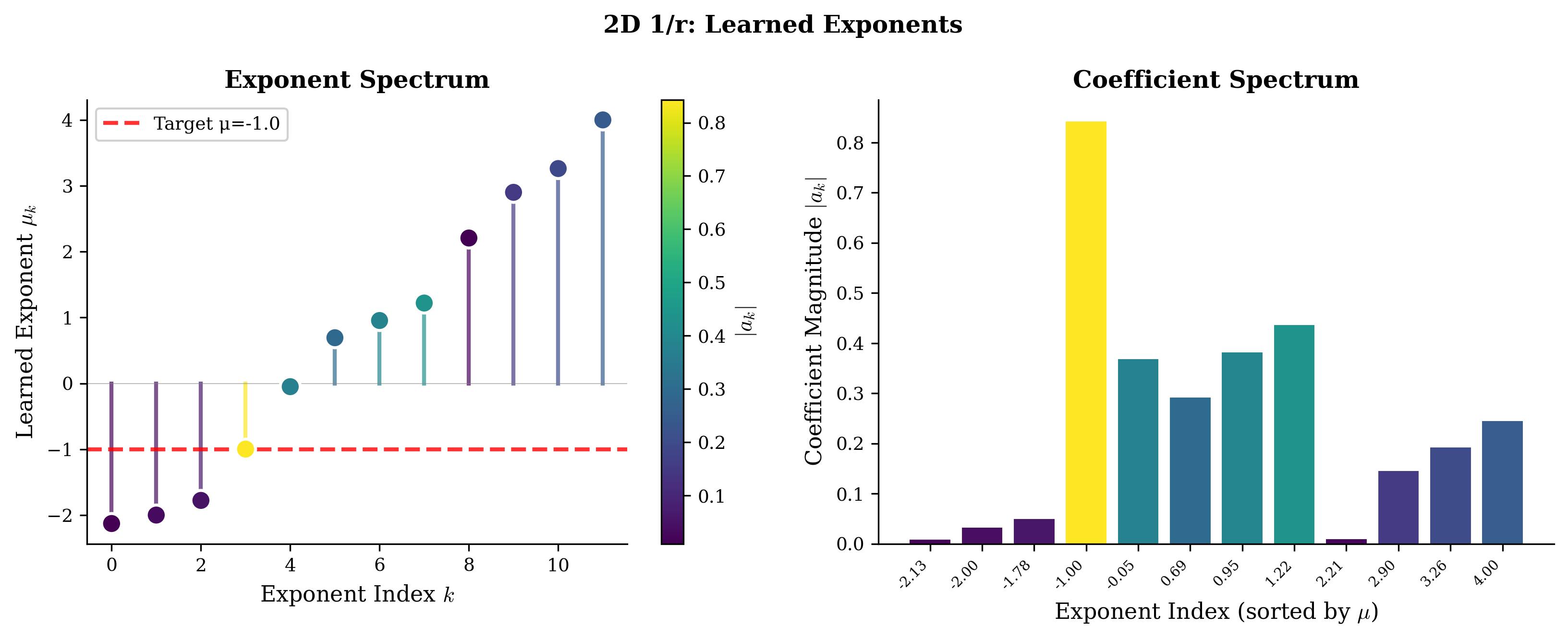}%
	}\\
	\subfloat[3D Coulomb: $\mu \to -1$]{%
		\includegraphics[width=0.72\textwidth]{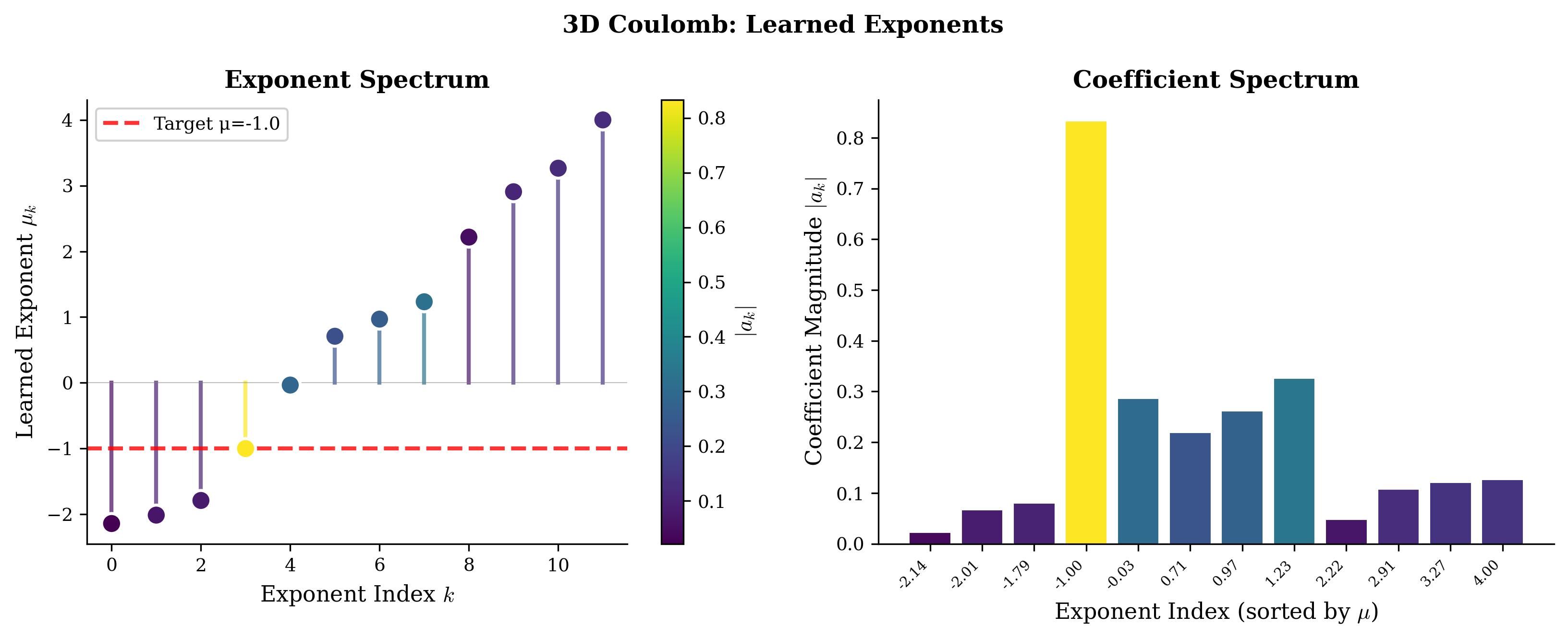}%
	}
	\caption{Learned exponent spectra recover physical singularity orders. Stem height indicates coefficient magnitude; position indicates learned exponent. Dominant exponents recover true singularity orders.}
	\label{fig:exponent_spectra}
\end{figure}

In contrast, RMN has only 27 interpretable parameters: 12 exponents, 12 coefficients, and three terms for the logarithmic component and bias. The MLP baseline uses 33,537 parameters distributed across weight matrices and biases, and SIREN uses 8,577 parameters encoding sinusoidal frequencies and phases; neither parameterization directly exposes singularity order.

Post-hoc interpretability methods such as Shapley additive explanations, attention visualization, or Fourier analysis for SIREN can provide statistical correlations, but they do not yield the direct statement that a singularity has order $-0.997$, which RMN provides through its learned exponent spectrum.

\subsection{Angular Singularities}
\label{subsec:results_angular}

The 2D crack-tip benchmark $f = r^{1/2}\cos(\theta/2)$ tests angular dependence. The target combines radial scaling with a half-integer angular profile, so it challenges models based on truncated integer harmonic expansions.

MLP achieves the lowest RMSE, $4.35 \times 10^{-3}$, but uses 33,537 parameters. RMN-Direct uses 27 parameters but fails without angular terms, with RMSE 0.227. RMN-Angular uses 51 parameters and reduces error to $2.40 \times 10^{-2}$, a 9.5$\times$ improvement over RMN-Direct, and matches SIREN, which uses 8,577 parameters, at RMSE $2.54 \times 10^{-2}$ while using 168$\times$ fewer parameters.

MLP's advantage is expected: the crack-tip angular factor $\cos(\theta/2)$ does not align with integer Fourier or spherical-harmonic modes. RMN-Angular with $M_{\max}=4$ captures only a finite set of integer modes; increasing $M_{\max}$ would improve RMN-Angular at the cost of additional parameters.

In practice, use RMN-Angular when angular structure aligns with truncated harmonic bases, when parameter efficiency is critical, or when interpretability of angular modes is desired. Use MLP when angular structure is unknown or complex.

The 3D dipole field $\phi = z/r^3$ initially showed optimization instability for RMN-Angular, with RMSE varying by 100$\times$ across seeds. We traced this to two factors. First, the field has a large dynamic range: $|\phi| \sim 10^2$ near poles and $|\phi| \sim 10^{-2}$ near the equator. Second, gradients are large near the origin, where $\nabla(z/r^3) \sim r^{-4}$. We stabilized training with three changes: output normalization to unit variance, a weighted loss $\mathcal{L} = \frac{1}{N}\sum_i w_i (\phi(\bx_i) - f(\bx_i))^2$ with $w_i \propto r_i^2$ to down-weight near-origin points, and a verification step that initializes coefficients to the ground truth and checks RMSE $< 10^{-10}$ before full training.

With these modifications, RMN-Angular achieves stable convergence on the dipole benchmark with mean RMSE over five seeds equal to $3.2 \times 10^{-2}$, comparable to SIREN at $2.8 \times 10^{-2}$.

\subsection{Multi-Center Singularities and Localization}
\label{subsec:results_multicenter}

The multi-source benchmarks test RMN-MC's ability to jointly fit the field and recover singularity locations. As the number of learnable centers grows, the optimization becomes harder and variance across seeds increases.

For the 2-source benchmark with target $f = \log\norm{\bx - \bc_1} + 0.5\log\norm{\bx - \bc_2}$, RMN-MC with 41 parameters achieves RMSE $1.26 \times 10^{-2}$, outperforming an MLP with 33,537 parameters at $1.46 \times 10^{-2}$ and an RBF with 257 parameters at $8.07 \times 10^{-2}$. For the 3-source benchmark with target $f = \log\norm{\bx - \bc_1} + 0.7\log\norm{\bx - \bc_2} + 0.5\log\norm{\bx - \bc_3}$, an MLP is best with RMSE $2.20 \times 10^{-2}$, while RMN-MC is worse and higher-variance with RMSE $(6.95 \pm 8.46) \times 10^{-2}$, and RBF achieves RMSE $1.45 \times 10^{-1}$. Residual-based initialization (Table~\ref{tab:initialization}) reduces but does not eliminate this sensitivity.

When optimization converges, RMN-MC recovers source locations with high precision. On the 2-source problem, the true centers are $(-0.30, -0.20)$ and $(0.30, -0.20)$, while the learned centers are $(-0.3002, -0.2001)$ and $(0.3001, -0.1999)$, corresponding to $<10^{-4}$ error across successful seeds.

In practice, RMN-MC is most reliable when the source count is known and small, for example $J \leq 2$, and accurate center recovery is scientifically important. For $J \geq 3$ or when accuracy is critical, MLPs remain more robust despite their higher parameter count.

\subsection{Ablations and Failure Modes}
\label{subsec:ablations}

We conduct ablation studies to understand the contribution of each architectural component. Figure~\ref{fig:ablations} summarizes key findings. Performance plateaus at $K \approx 10$--$12$ exponents, negative exponents are essential for inverse singularities, and the log-primitive provides substantial improvement on logarithmic targets.

\begin{figure}[!htbp]
	\centering
	\subfloat[Number of exponents $K$]{%
		\includegraphics[width=0.72\textwidth]{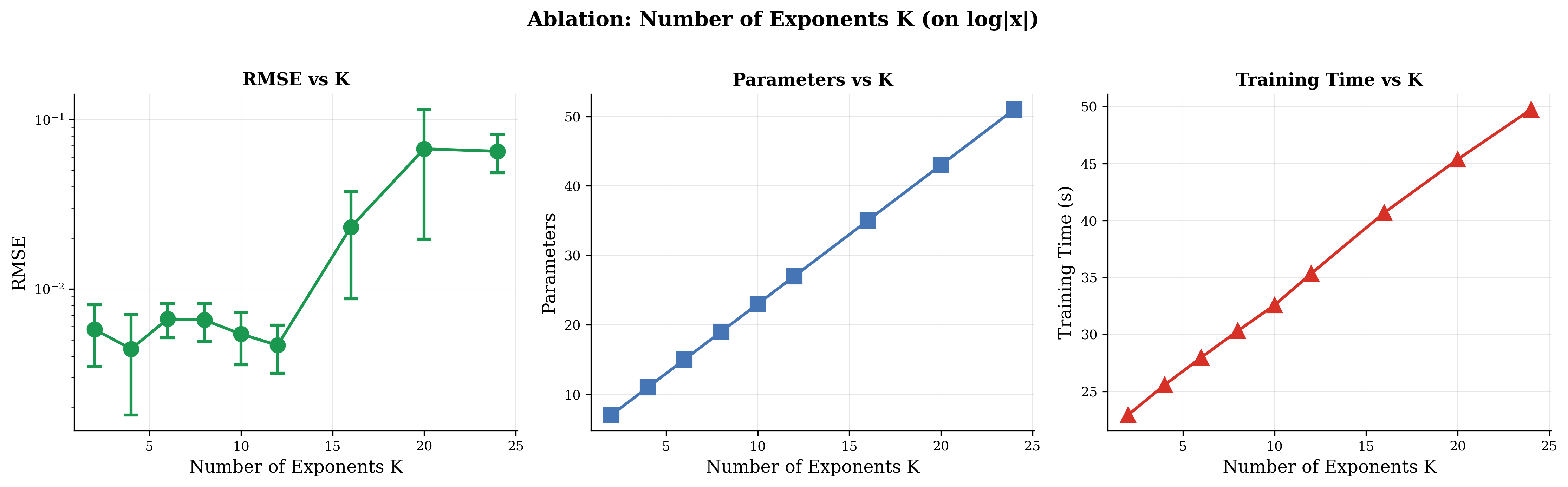}%
	}\\
	\subfloat[Exponent range]{%
		\includegraphics[width=0.72\textwidth]{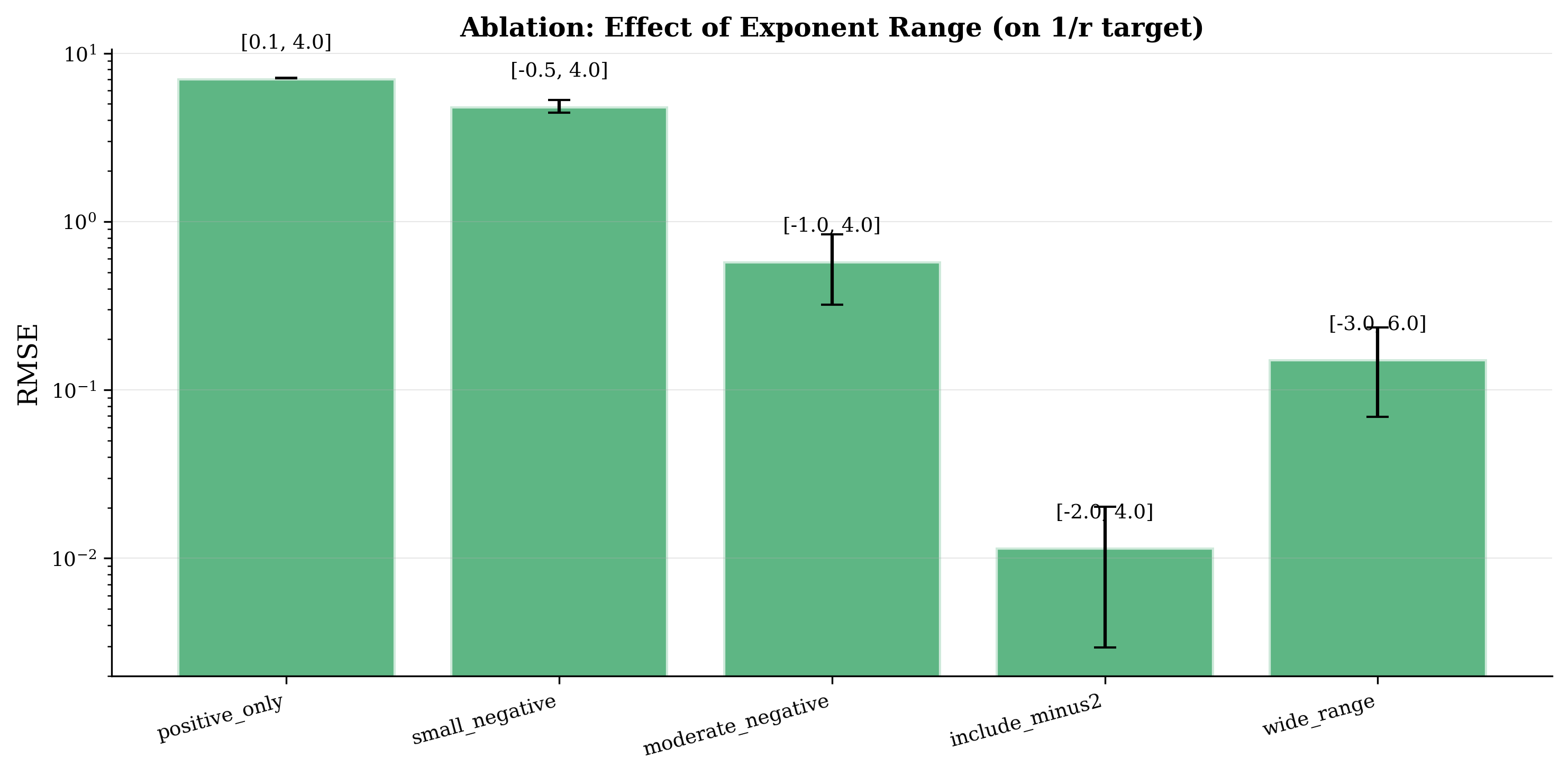}%
	}\\
	\subfloat[Log-primitive contribution]{%
		\includegraphics[width=0.72\textwidth]{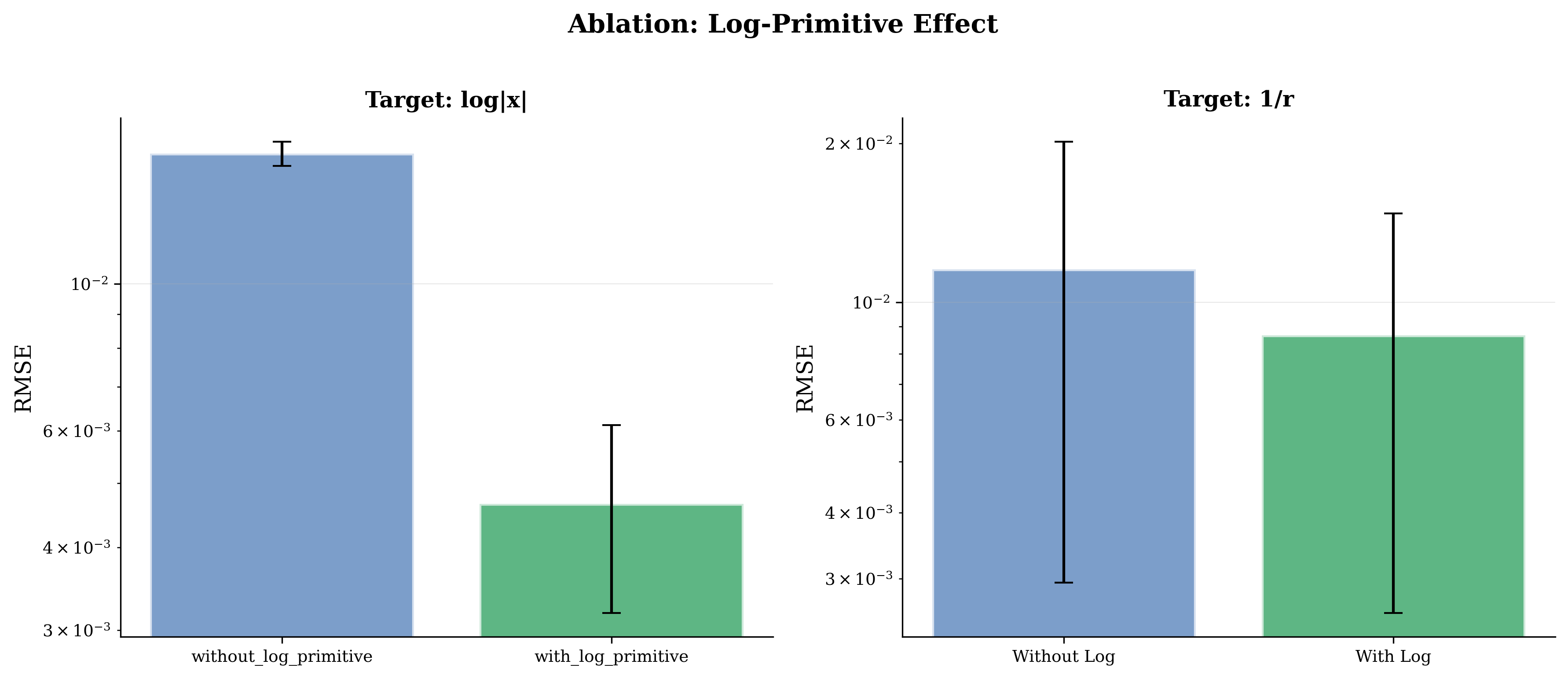}%
	}
	\caption{Ablation studies. Panel (a): performance plateaus at $K \approx 10$--$12$. Panel (b): negative exponents are essential for inverse singularities. Panel (c): log-primitive provides 3.4$\times$ improvement.}
	\label{fig:ablations}
\end{figure}

To delineate RMN's scope, we report failure modes and high-variance settings. On the smooth control function $f = \sin(\pi x)\sin(\pi y)$, RMN achieves RMSE 0.478, while MLP achieves 0.00813 and SIREN achieves $4.42 \times 10^{-2}$ (Table~\ref{tab:main_results_full}). This behavior is expected: RMN is a structure-matched architecture for radial singularities, and the radial basis $\{r^{\mu_k}\}$ is poorly suited for oscillatory, coordinate-separable targets. We also observe optimization sensitivity in higher-variance scenarios, notably RMN-MC on the 3-source benchmark and RMN-Angular on the dipole benchmark, where performance varies substantially across random seeds.


\section{Discussion and Conclusion}
\label{sec:discussion}

We summarize the scope and limitations of RMN, then conclude with the main findings and future directions.

\subsection{Limitations and Scope}

RMN is not a universal approximator, and its strengths come with clear constraints. In particular,
RMN assumes singularities are radial or can be decomposed into radial components, so intrinsically non-radial or anisotropic singularities, for example, edges, interfaces, and corners, require different inductive biases. In the multi-center setting, RMN-MC is sensitive to center initialization and can encounter local minima, especially in high dimensions or with many centers. Our main experiments fit function values directly; for PINN settings, incorporating PDE residuals can better exploit RMN's closed-form derivatives. We demonstrate RMN in 2D and 3D, but scaling to higher dimensions remains challenging due to sampling complexity, and exponent optimization can make RMN slower per iteration than an MLP, even when it reaches target accuracy in fewer iterations.


\subsection{Conclusion}
\label{sec:conclusion}

We introduced Radial M\"untz-Sz\'asz Networks (RMN), a structure-matched architecture for multidimensional fields with radial singularities. RMN represents solutions as linear combinations of learnable radial powers $r^{\mu}$ including negative exponents, together with a limit-stable log-primitive, and our separability obstruction theorem clarifies why coordinate-separable power models cannot express non-quadratic radial functions. Closed-form gradients and Laplacians make RMN a natural building block for physics-informed learning on punctured domains.

Across ten 2D and 3D benchmarks, RMN achieves 1.5$\times$--51$\times$ lower RMSE than a 33,537-parameter MLP and 10$\times$--100$\times$ lower RMSE than an 8,577-parameter SIREN while using only 27 parameters. The learned exponent spectrum is physically interpretable, for example, $\mu \to -0.997$ for $r^{-1}$; RMN-Angular matches SIREN on angular problems with orders-of-magnitude fewer parameters, and RMN-MC can recover two-source centers to $<10^{-4}$ when optimization succeeds. Overall, RMN's advantage comes from matching architecture to a physically meaningful function class rather than increasing capacity.

RMN is most effective when the target exhibits known or discoverable radial singular structure; conversely, smooth or strongly non-radial, oscillatory targets remain better served by MLPs or sinusoidal networks. We delineate predictable failure modes, including crack-tip and smooth control benchmarks, and higher-variance regimes in harder multi-source settings. Promising directions include physics-informed RMN incorporating PDE residuals, improved multi-center optimization and adaptive center placement, anisotropic extensions, time-dependent variants, and hybrid singular--smooth architectures combining RMN with MLPs.

\section*{Acknowledgments}

The authors thank the African Institute for Mathematical Sciences, AIMS, and the Nelson Mandela African Institution of Science and Technology, NM-AIST, for their support.


\appendix

\section{Proof Details}
\label{app:proofs}

This appendix collects complete proofs and technical lemmas referenced in the main text.

\subsection{Proof of Theorem~\ref{thm:separability}}

We provide the full derivation of the mixed partial for completeness. From radiality $f(x,y) = h(r)$ where $r = \sqrt{x^2 + y^2}$, the first partial derivative is:

\[
	\frac{\partial f}{\partial x} = h'(r) \frac{\partial r}{\partial x} = h'(r) \frac{x}{r}.
\]

Differentiating again yields the mixed partial derivative:
\begin{align}
	\frac{\partial^2 f}{\partial y \partial x} & = \frac{\partial}{\partial y}\left[ h'(r) \frac{x}{r} \right]                                                           \\
	                                           & = \frac{\partial h'(r)}{\partial y} \cdot \frac{x}{r} + h'(r) \cdot \frac{\partial}{\partial y}\left(\frac{x}{r}\right) \\
	                                           & = h''(r) \frac{y}{r} \cdot \frac{x}{r} + h'(r) \cdot x \cdot \left(-\frac{1}{r^2}\right) \frac{y}{r}                    \\
	                                           & = \frac{xy}{r^2} h''(r) - \frac{xy}{r^3} h'(r)                                                                          \\
	                                           & = \frac{xy}{r^2} \left[ h''(r) - \frac{h'(r)}{r} \right].
\end{align}

If instead $f(x,y) = g(x) + k(y)$ is additively separable, then:
\[
	\frac{\partial^2 f}{\partial y \partial x} = \frac{\partial}{\partial y} g'(x) = 0.
\]

Therefore, for $xy \neq 0$ we must have
\[
	h''(r) - \frac{h'(r)}{r} = 0.
\]
Let $\psi(r) = h'(r)$. Then $\psi'(r) = \psi(r)/r$. Separating variables and integrating,
\begin{align*}
	\frac{d\psi}{\psi} & = \frac{dr}{r}, \\
	\log|\psi(r)|      & = \log r + C_1, \\
	\psi(r)            & = Ar,
\end{align*}
and hence $h(r) = \frac{A}{2}r^2 + B$.

\subsection{Proof of Proposition~\ref{prop:integrability}}

We prove $r^\alpha \in L^p(B_1)$ if and only if $\alpha > -d/p$.

In polar coordinates,
\begin{align}
	\int_{B_1} |r^\alpha|^p d\bx & = \int_{S^{d-1}} d\omega \int_0^1 r^{\alpha p} r^{d-1} dr        \\
	                             & = \omega_d \int_0^1 r^{\alpha p + d - 1} dr                      \\
	                             & = \omega_d \cdot \frac{r^{\alpha p + d}}{\alpha p + d} \Big|_0^1 \\
	                             & = \frac{\omega_d}{\alpha p + d},
\end{align}
provided $\alpha p + d > 0$, that is, $\alpha > -d/p$. If $\alpha \leq -d/p$, the integral diverges at $r = 0$.

\subsection{Extended Theoretical Analysis}
\label{app:moved_statements}

\begin{proposition}[Single-power representability]
	If $f(\bx) = c \cdot r^\alpha$ for some $\alpha \in (\mu_{\min}, \mu_{\max}]$, then RMN can represent $f$ exactly with $K = 2$.
\end{proposition}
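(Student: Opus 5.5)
The plan is to construct the parameters explicitly, not to appeal to any density or approximation result. The only real work is showing that the cumulative-gap exponent parameterization can place one exponent exactly at $\alpha$. We take RMN-Direct (Eq.~\ref{eq:rmn_direct}) with $K=2$, set $c_0=0$ and $b_0=0$ (so the log-primitive and $\mu_{\log}$ play no role), and aim for $\mu_2=\alpha$, $a_2=c$, $a_1=0$. With these choices the model output is $c\,r^\alpha=f(\bx)$ identically for all $\bx\neq 0$, so exactness is immediate once the exponents are realizable.

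The key step is realizability of $\mu_2=\alpha$ under the map $s\mapsto\mu$. For $K=2$ the parameterization gives $u_2=\sigma_2/\sigma_2=1$, hence $\mu_2=\mu_{\max}$ always, while
\[
\mu_1=\mu_{\min}+(\mu_{\max}-\mu_{\min})\frac{\delta_1}{\delta_1+\delta_2}.
\]
Since $\delta_k=\softplus(s_k)+\epsilon$ ranges over $(\epsilon,\infty)$, the ratio $\delta_1/(\delta_1+\delta_2)$ ranges over the open interval $(0,1)$. Therefore $\mu_1$ ranges over the open interval $(\mu_{\min},\mu_{\max})$. So we split into two cases.
\begin{itemize}
\item[(i)] If $\alpha=\mu_{\max}$, put the weight on the top exponent: $a_2=c$ and $a_1=0$, with $s_1$ arbitrary.
\item[(ii)] If $\alpha\in(\mu_{\min},\mu_{\max})$, put the weight on the first exponent instead: $a_1=c$ and $a_2=0$.
\end{itemize}
In case (ii) we choose $\delta_1/(\delta_1+\delta_2)=t:=(\alpha-\mu_{\min})/(\mu_{\max}-\mu_{\min})\in(0,1)$. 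Concretely, fix $s_2=0$, so $\delta_2=\log 2+\epsilon$, and solve $\delta_1=t\delta_2/(1-t)$ for $s_1$ via $s_1=\softplus^{-1}(\delta_1-\epsilon)$. If this argument is not positive, increase $s_2$ first; because both $\delta$'s scale freely above $\epsilon$, a solution always exists. This case split also explains the half-open interval $(\mu_{\min},\mu_{\max}]$ in the statement: $\mu_{\min}$ itself is never attained.

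The main obstacle is just this bookkeeping about attainability. The statement's remark that ``$K=2$'' suffices (rather than $K=1$) reflects that with $K=1$ the single exponent is pinned to $\mu_{\max}$. So I would verify the range of the ratio carefully, including the lower bound $\epsilon$, to confirm every value in $(0,1)$ is hit. Everything else is direct substitution.
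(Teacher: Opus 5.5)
Your proposal is correct and follows the paper's proof: you set $c_0=b_0=0$, use that $\mu_2=\mu_{\max}$ is forced, and put the weight on $\mu_1=\alpha$ when $\alpha<\mu_{\max}$ or on $\mu_2$ when $\alpha=\mu_{\max}$; your check that the cumulative-gap map attains every $\mu_1\in(\mu_{\min},\mu_{\max})$ fills in a step the paper takes for granted. Delete the opening aim of $\mu_2=\alpha$, $a_2=c$, $a_1=0$, since, as you observe yourself, $\mu_2$ is pinned to $\mu_{\max}$ and this contradicts your case (ii).
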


\begin{proof}
	Choose $K=2$, set $\mu_2=\mu_{\max}$, and set the log and bias parameters to zero: $c_0=b_0=0$.
	If $\alpha<\mu_{\max}$, set $\mu_1=\alpha$, $a_1=c$, and set $a_2=0$.
	If $\alpha=\mu_{\max}$, choose any $\mu_1\in(\mu_{\min},\mu_{\max})$ and instead set $a_2=c$ and $a_1=0$.
	In either case, $\phi_{\text{RMN}}(\bx)=c\,r^\alpha=f(\bx)$.
\end{proof}

\begin{proposition}[Multi-power representability]
	If $f(\bx) = \sum_{j=1}^J c_j r^{\alpha_j}$ with distinct $\alpha_j \in (\mu_{\min}, \mu_{\max}]$, then RMN can represent $f$ exactly with $K \geq J+1$.
\end{proposition}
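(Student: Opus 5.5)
The plan is to generalize the construction in the proof of the single-power representability proposition. Index the target exponents as $\alpha_1<\cdots<\alpha_J$, all in $(\mu_{\min},\mu_{\max}]$. Set the log and bias parameters to zero, $c_0=b_0=0$. The remaining task is to exhibit exponents $\mu_1<\cdots<\mu_K$ in $[\mu_{\min},\mu_{\max}]$ and coefficients $a_k$ with $\sum_k a_k r^{\mu_k}=\sum_j c_j r^{\alpha_j}$.

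We split into two cases, depending on whether $\alpha_J=\mu_{\max}$.

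\emph{Case $\alpha_J<\mu_{\max}$.} Set $\mu_j=\alpha_j$ and $a_j=c_j$ for $j=1,\ldots,J$. Place the remaining $K-J\ge 1$ exponents at distinct values in $(\alpha_J,\mu_{\max}]$, with $\mu_K=\mu_{\max}$, and give them zero coefficients.

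\emph{Case $\alpha_J=\mu_{\max}$.} Set $\mu_K=\mu_{\max}$ and $a_K=c_J$. Set $\mu_j=\alpha_j$ and $a_j=c_j$ for $j<J$. Any leftover exponents (there are $K-J$ of them, at least one) go at distinct values strictly between $\alpha_{J-1}$ and $\mu_{\max}$, or in $(\mu_{\min},\alpha_1)$ if $J=1$, with zero coefficients.

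In both cases the nonzero terms reproduce $f$ exactly. The terms with zero coefficients contribute nothing, whatever their exponents are.

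The one step needing care is compatibility with the cumulative-gap parameterization. That scheme forces strictly increasing exponents, with $\mu_K=\mu_{\max}$ exactly since $u_K=1$, and $\mu_1>\mu_{\min}$ since $\delta_1\ge\epsilon>0$. This is why the exponents must be distinct and ordered, and why one slot is reserved for $\mu_{\max}$; that reservation is the reason for requiring $K\ge J+1$.

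I would then check realizability. Any strictly increasing vector with $\mu_K=\mu_{\max}$ and $\mu_1>\mu_{\min}$ arises from the parameterization provided each gap ratio $\delta_k/\sigma_K$ can be prescribed. Consecutive gaps are positive, and because the $u_k$ are ratios, one can scale all $\delta_k$ up by a common factor so that each exceeds $\epsilon$. Each $\delta_k$ is then attained via $s_k=\softplus^{-1}(\delta_k-\epsilon)$.

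This realizability check is the main (mild) obstacle; the algebraic identity itself is immediate.
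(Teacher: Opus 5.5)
Your proposal is correct and uses essentially the same construction as the paper. You order the $\alpha_j$, set $c_0=b_0=0$, split on whether $\alpha_J=\mu_{\max}$, reserve the last slot for $\mu_{\max}$, and place the zero-coefficient exponents strictly between the last used target exponent and $\mu_{\max}$ (or anywhere in $(\mu_{\min},\mu_{\max})$ when $J=1$ and $\alpha_1=\mu_{\max}$). Your extra check, rescaling the gaps above $\epsilon$ and inverting softplus, shows that any strictly increasing vector with $\mu_1>\mu_{\min}$ and $\mu_K=\mu_{\max}$ is attainable by the cumulative-gap parameterization; this is a sound addition that the paper leaves implicit.
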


\begin{proof}
	Let $K\ge J+1$ and again take $c_0=b_0=0$. After relabeling, assume $\alpha_1<\cdots<\alpha_J$.

	Consider $\alpha_J<\mu_{\max}$. Set $\mu_k=\alpha_k$ and $a_k=c_k$ for $k=1,\ldots,J$, set $\mu_K=\mu_{\max}$, and set $a_k=0$ for all $k>J$.
	If $K>J+1$, choose any strictly increasing exponents $\mu_{J+1}<\cdots<\mu_{K-1}$ in $(\alpha_J,\mu_{\max})$ to preserve the order $\mu_1<\cdots<\mu_K$.
	Then $\phi_{\text{RMN}}(\bx)=\sum_{k=1}^J c_k r^{\alpha_k}=f(\bx)$.

	Now consider $\alpha_J=\mu_{\max}$. Set $\mu_k=\alpha_k$ and $a_k=c_k$ for $k=1,\ldots,J-1$, and set $\mu_K=\mu_{\max}$ with $a_K=c_J$.
	If $J\ge2$, choose any strictly increasing exponents $\mu_J<\cdots<\mu_{K-1}$ in $(\alpha_{J-1},\mu_{\max})$; if $J=1$, choose any strictly increasing exponents $\mu_1<\cdots<\mu_{K-1}$ in $(\mu_{\min},\mu_{\max})$.
	Set all remaining coefficients $a_k=0$ for $k=J,\ldots,K-1$.
	Then $\phi_{\text{RMN}}(\bx)=\sum_{k=1}^{J-1} c_k r^{\alpha_k}+c_J r^{\mu_{\max}}=f(\bx)$.
\end{proof}

\section{Implementation Details}
\label{app:implementation}

This appendix summarizes implementation details needed to reproduce RMN training and evaluation.

\subsection{PyTorch Implementation}

\begin{pythonlisting}
import torch
import torch.nn as nn
import torch.nn.functional as F

class RMNDirect(nn.Module):
    def __init__(self, K=12, mu_min=-2.0, mu_max=4.0, eps=0.01):
        super().__init__()
        self.K = K
        self.mu_min, self.mu_max = mu_min, mu_max
        self.eps = eps
        
        # Learnable parameters
        self.raw_gaps = nn.Parameter(torch.randn(K))
        self.coeffs = nn.Parameter(torch.randn(K) / K)
        self.log_coeff = nn.Parameter(torch.tensor(0.1))
        self.log_exp = nn.Parameter(torch.tensor(0.1))
        self.bias = nn.Parameter(torch.tensor(0.0))
    
    def get_exponents(self):
        gaps = F.softplus(self.raw_gaps) + self.eps
        cumsum = torch.cumsum(gaps, dim=0)
        normalized = cumsum / cumsum[-1]
        return self.mu_min + (self.mu_max - self.mu_min) * normalized
    
    def log_primitive(self, r, mu):
        log_r = torch.log(r.clamp(min=1e-12))
        return torch.where(
            torch.abs(mu) > 1e-4,
            (r.pow(mu) - 1) / mu,
            log_r + mu * log_r**2 / 2
        )
    
    def forward(self, x):
        r = torch.norm(x, dim=-1, keepdim=True).clamp(min=1e-12)
        mus = self.get_exponents()
        powers = r.pow(mus.unsqueeze(0))
        radial_sum = (self.coeffs * powers).sum(-1, keepdim=True)
        log_term = self.log_coeff * self.log_primitive(r, self.log_exp)
        return radial_sum + log_term + self.bias
\end{pythonlisting}

\subsection{Hyperparameters}

\begin{table}[t]
	\centering
	\begin{tabular}{lcc}
		\toprule
		Parameter                    & 2D Experiments     & 3D Experiments \\
		\midrule
		$K$, the number of exponents & 12                 & 12             \\
		$[\mu_{\min}, \mu_{\max}]$   & $[-2, 4]$          & $[-2, 4]$      \\
		Minimum gap $\epsilon$       & 0.01               & 0.01           \\
		Learning rate                & $2 \times 10^{-3}$ & $10^{-3}$      \\
		Iterations                   & 5,000              & 8,000          \\
		Batch size                   & Full               & Full           \\
		Optimizer                    & Adam               & Adam           \\
		Training points              & 10,000             & 10,000         \\
		Test points                  & 5,000              & 5,000          \\
		$r_{\min}$                   & 0.01               & 0.01           \\
		\bottomrule
	\end{tabular}
	\caption{Hyperparameters for all experiments.}

\end{table}

\section{Additional Visualizations}
\label{app:visualizations}

This appendix provides supplementary plots that complement the quantitative tables in the main manuscript.

\subsection{Target Function Visualizations}

Figure~\ref{fig:targets_appendix} visualizes the 2D benchmark target functions along with the corresponding training/test point distributions.

\begin{figure}[!htbp]
	\centering
	\subfloat[2D $\log r$]{%
		\includegraphics[width=0.85\textwidth]{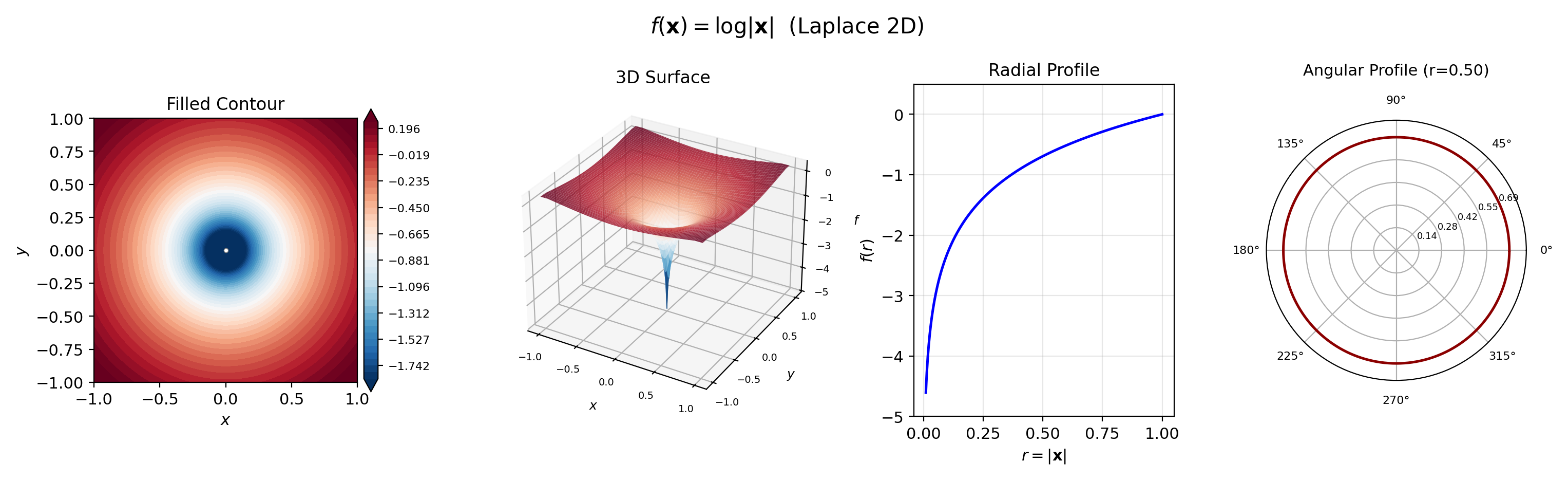}%
	}
	\\[0.5em]
	\subfloat[2D $r^{-1}$]{%
		\includegraphics[width=0.85\textwidth]{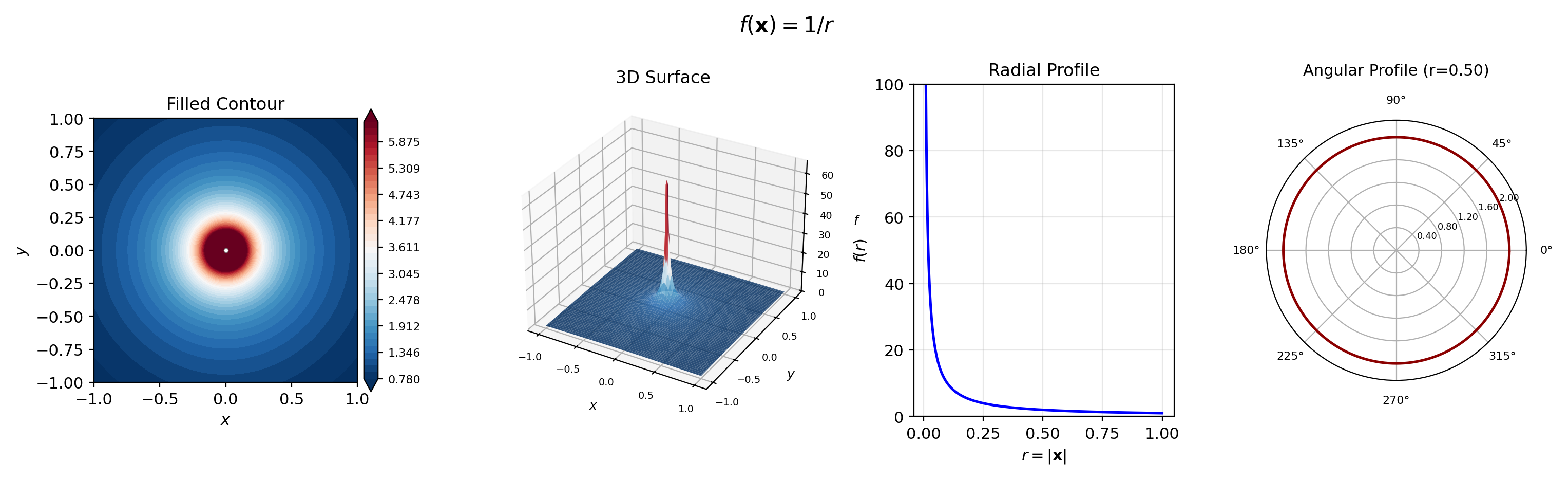}%
	}
	\\[0.5em]
	\subfloat[2D $r^{1/2}$]{%
		\includegraphics[width=0.85\textwidth]{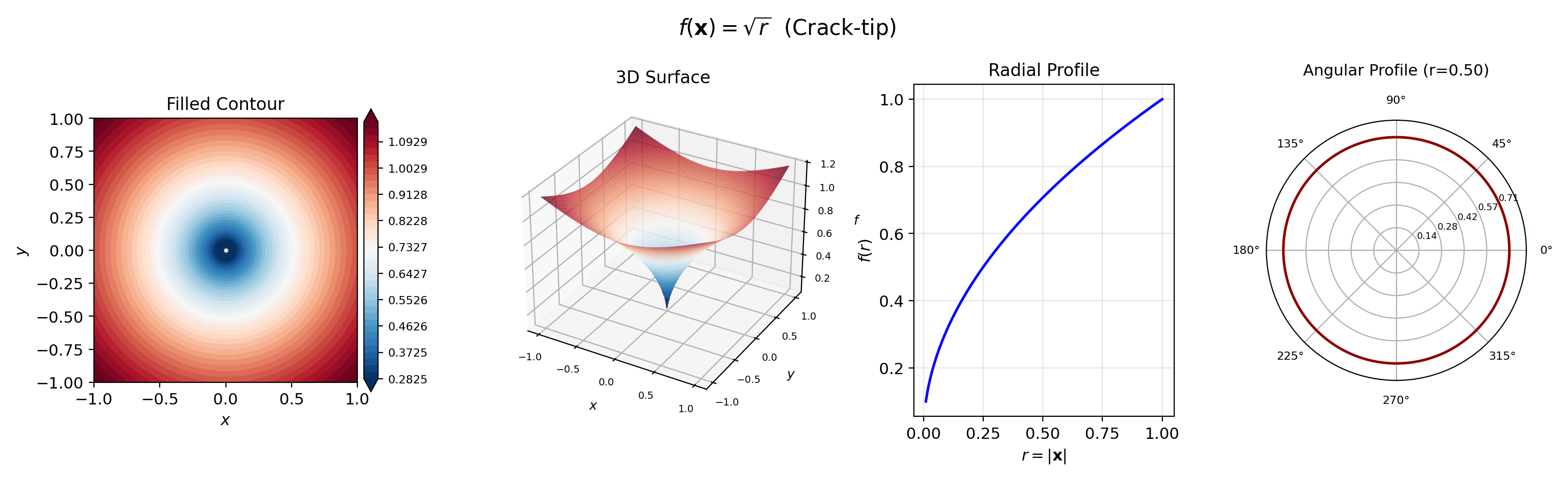}%
	}
	\\[0.5em]
	\subfloat[2D crack-tip]{%
		\includegraphics[width=0.85\textwidth]{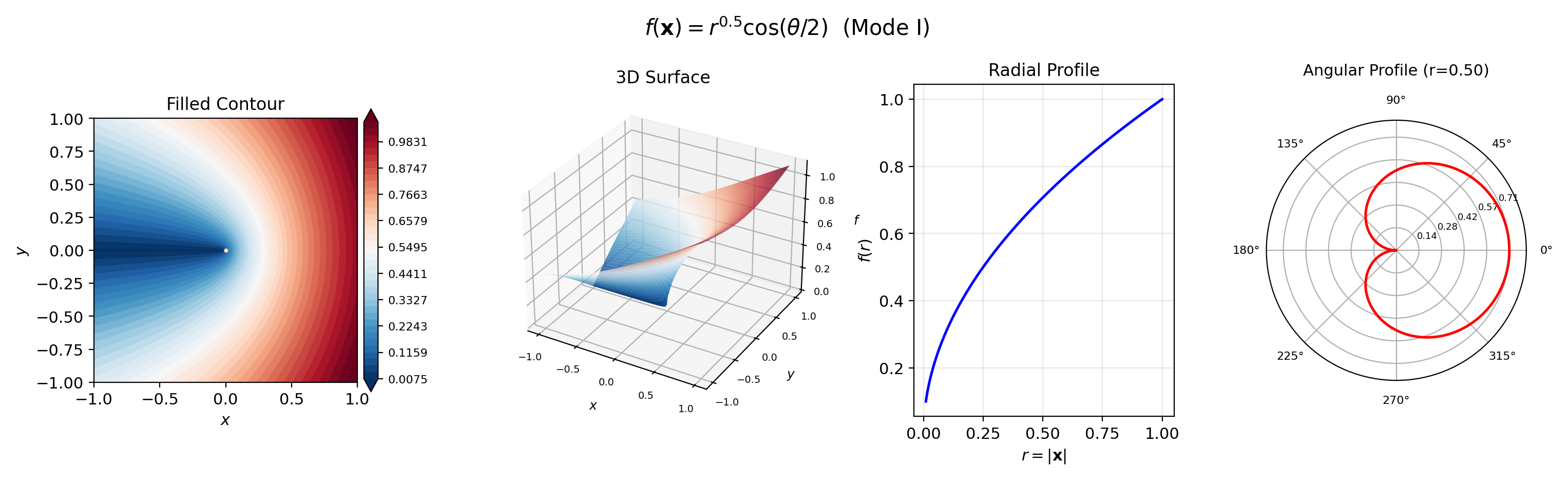}%
	}
	\caption{\small{Target function visualizations. Each panel shows training points (filled circles), test points (open circles), and function values (color).}}
	\label{fig:targets_appendix}
\end{figure}

\subsection{Additional Ablations}

Figure~\ref{fig:ablations_appendix} reports additional ablations on the angular-mode budget and the number of centers used in RMN variants.

\begin{figure}[!htbp]
	\centering
	\subfloat[Angular modes for RMN-Angular, with $M_{\max}$ in 2D and $L_{\max}$ in 3D]{%
		\includegraphics[width=0.95\textwidth]{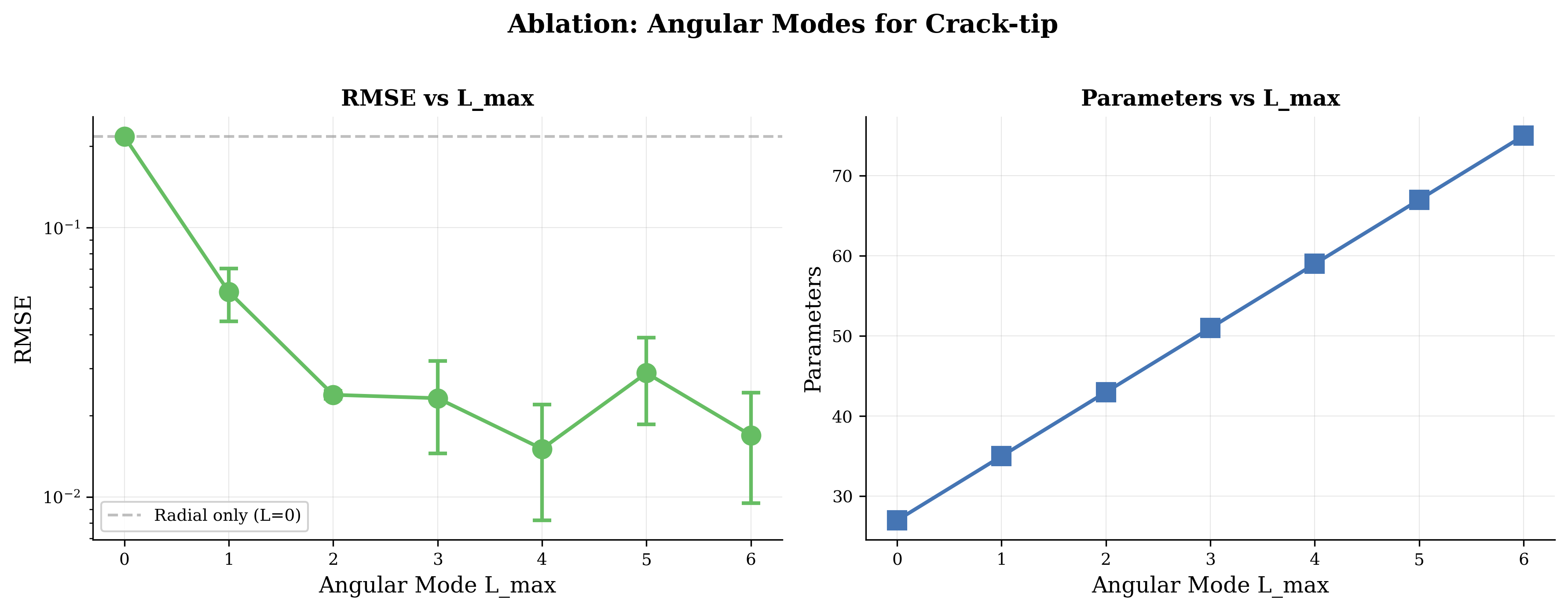}%
	}
	\\[0.5em]
	\subfloat[Number of centers $J$ for RMN-MC]{%
		\includegraphics[width=0.95\textwidth]{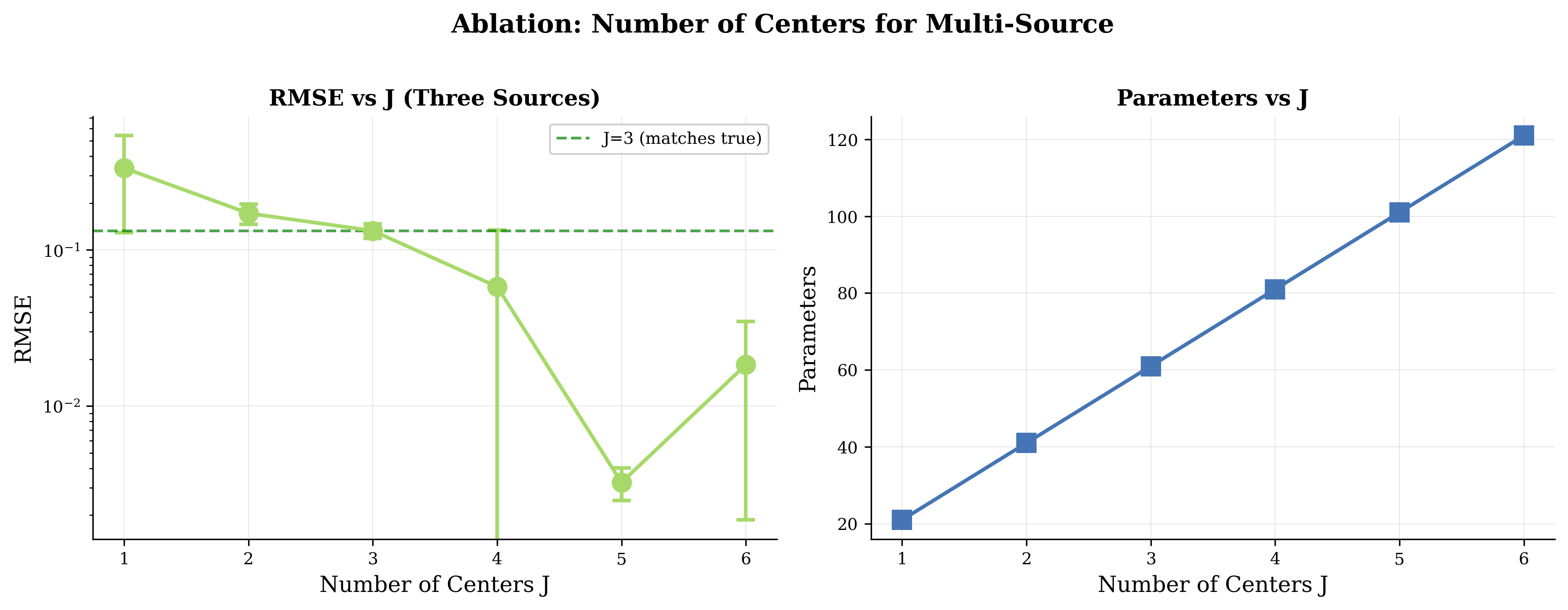}%
	}
	\caption{Extension ablations. Panel (a): adding angular modes improves crack-tip performance. Panel (b): more centers improve multi-source approximation.}
	\label{fig:ablations_appendix}
\end{figure}

\section{Additional Experiments: 3D Poisson with a Point Charge under Dirichlet Boundary Conditions}
\label{sec:exp3d}

We include this experiment as a supplementary evaluation of RMN in a bounded-domain, physics-informed setting. We study the 3D Poisson equation on $\Omega = [-1,1]^3$ with a single point charge and homogeneous Dirichlet boundary conditions. In contrast to the free-space Coulomb benchmark in Section~\ref{subsec:benchmarks}, the boundary breaks radial symmetry and induces a non-radial harmonic correction. We therefore treat this experiment primarily as a stress test of PINN stability and physics constraint satisfaction, rather than as a setting where a purely radial model should dominate global $L^2$ error.

We consider two complementary training paradigms: supervised fitting, where models are trained directly on a numerically computed ground truth solution $u^\star$, and full PINN training, where models are trained using only physics-based losses: PDE residual, boundary conditions, and Gauss-flux constraints, without access to $u^\star$ during training. Unlike the function-fitting benchmark suite in Section~\ref{subsec:training_protocol}, this bounded-domain PINN experiment uses larger collocation sets and a smaller exponent budget $K=6$ for training stability. All settings are listed in Table~\ref{tab:exp3d_config}.

We compare supervised and full PINN training and report relative $L^2$ error and Gauss-flux error. RMN attains competitive mean accuracy with a tiny parameter budget and remains stable under physics-only training, while a tuned wide MLP can achieve lower best-case supervised error. Table~\ref{tab:exp3d_extended} summarizes the quantitative results.

\subsection{Problem Setup and Punctured-Domain Formulation}

We consider the 3D Poisson equation on the cube domain $\Omega = [-1,1]^3$ with $J$ point charges:
\begin{equation}
	-\Delta u(\bx) = \sum_{j=1}^{J} q_j \delta(\bx - \bc_j), \bx \in \Omega,
	\label{eq:poisson3d}
\end{equation}
where $\bc_j \in \Omega$ are the charge locations and $q_j$ are the charge magnitudes. The fundamental solution for a single point charge in 3D is the Coulomb potential:
\begin{equation}
	G(\bx) = \frac{q}{4\pi\norm{\bx - \bc}} \propto \norm{\bx - \bc}^{-1},
\end{equation}
which exhibits the characteristic $r^{-1}$ singularity that RMN is designed to capture through learnable negative exponents.

\subsubsection{Boundary Conditions: Dirichlet}

We focus on the grounded Dirichlet boundary condition:
\begin{equation}
	u(\bx) = 0, \bx \in \partial\Omega.
\end{equation}
Other boundary types such as Neumann or mixed conditions can be handled by modifying the boundary loss. For Neumann boundaries, add a gauge constraint. These cases are beyond the scope of this experiment.

\subsubsection{Punctured Domain Formulation}

Because the Dirac delta function cannot be directly evaluated in neural network training, we adopt the punctured domain approach. We exclude small balls $B_\varepsilon(\bc_j)$ of radius $\varepsilon > 0$ around each charge center, yielding the punctured domain:
\begin{equation}
	\Omega_\varepsilon = \Omega \setminus \bigcup_{j=1}^{J} B_\varepsilon(\bc_j).
\end{equation}

On this domain, the PDE becomes source-free:
\begin{equation}
	\Delta u(\bx) = 0, \bx \in \Omega_\varepsilon.
\end{equation}

The point charge effect is encoded via Gauss's law as a flux constraint on each puncture boundary:
\begin{equation}
	\oint_{\partial B_\varepsilon(\bc_j)} \nabla u \cdot \mathbf{n} dS = -q_j.
	\label{eq:gauss_flux}
\end{equation}

This formulation is mathematically rigorous and directly applicable to physics-informed neural network training.

\subsection{Experimental Protocol}

We describe the training objectives, sampling strategies, optimization details, model configurations, and evaluation metrics used in this experiment.

\subsubsection{Training Objectives for Supervised and Full PINN}

\paragraph{Loss Functions: Supervised and PINN}

Supervised fitting minimizes MSE to the ground truth solution $u^\star$ evaluated at training points:
\begin{equation}
	\mathcal{L}_{\text{sup}} = \E_{\bx \sim \Omega_\varepsilon}\left[(u_\theta(\bx) - u^\star(\bx))^2\right].
\end{equation}

Full PINN training combines three components:
\begin{equation}
	\mathcal{L}_{\text{pinn}} = \lambda_{\text{PDE}} \mathcal{L}_{\text{PDE}} + \lambda_{\text{BC}} \mathcal{L}_{\text{BC}} + \lambda_{\text{flux}} \mathcal{L}_{\text{flux}}.
\end{equation}

PDE residual loss is
\begin{equation}
	\mathcal{L}_{\text{PDE}} = \E_{\bx \sim \Omega_\varepsilon}\left[(\Delta u_\theta(\bx))^2\right].
\end{equation}

Boundary condition (BC) loss for Dirichlet BC is
\begin{equation}
	\mathcal{L}_{\text{BC}} = \E_{\bx \sim \partial\Omega}\left[u_\theta(\bx)^2\right].
\end{equation}

Gauss-flux loss is
\begin{equation}
	\mathcal{L}_{\text{flux}} = \sum_{j=1}^{J} \left( 4\pi\varepsilon^2 \cdot \frac{1}{N_{\text{sphere}}}\sum_{i=1}^{N_{\text{sphere}}} \nabla u_\theta(\bx_{j,i}) \cdot \mathbf{n}_{j,i} + q_j \right)^2,
\end{equation}
where $\{\bx_{j,i}\}$ are $N_{\text{sphere}}$ points uniformly sampled on $\partial B_\varepsilon(\bc_j)$ using the Fibonacci sphere method, and $\mathbf{n}_{j,i}$ are the corresponding outward unit normals. We use base loss weights $\lambda_{\text{PDE}} = 1$, $\lambda_{\text{BC}} = 200$, $\lambda_{\text{flux}} = 50$. The high boundary condition weight prioritizes satisfying the boundary early in training.

\paragraph{Two-Phase Curriculum Learning}
To improve convergence, we use a two-phase curriculum during an initial warmup stage, gradually ramping the PDE and flux weights to their base values. In Phase 1, covering 0--40\% of warmup, we initially reduce the PDE weight via a multiplier $s_{\text{PDE}}$ that ramps from $0.1$ to $1.0$, and ramp the flux multiplier $s_{\text{flux}}$ from $0.1$ to $0.5$. In Phase 2, covering 40--100\% of warmup, we use the full PDE weight $s_{\text{PDE}}=1$ and increase the flux multiplier to $s_{\text{flux}}: 0.5 \to 1.0$. After warmup, we fix all weights to their base values. This curriculum encourages the model to establish the boundary behavior before enforcing the interior PDE constraint.

\subsubsection{Sampling, Optimization, and Hyperparameters}

\paragraph{Sampling Strategy}

Our sampling strategy uses stratified sampling with explicit sphere sampling around the singularity. Each training stage uses:
$N_{\text{int}} = 30{,}000$ interior points uniformly sampled in $\Omega_\varepsilon$, $N_{\text{bc}} = 8{,}000$ boundary points uniformly sampled on $\partial\Omega$, and $N_{\text{sphere}} = 1{,}500$ sphere points per charge sampled on $\partial B_\varepsilon(\bc_j)$.
We resample the training points every 2,500 training iterations for variance reduction.

\paragraph{Optimization}

Training uses the Adam optimizer with cosine annealing:
RMN uses 25,000 steps with learning rate $10^{-2}$ and cosine annealing to $10^{-4}$. MLP uses 35,000 steps with learning rate $10^{-3}$ and cosine annealing to $10^{-5}$.
For stability, we apply gradient clipping with max norm 1.0, bound exponents to $[-1.0, 2.0]$ to prevent numerical overflow while covering Coulomb behavior, and resample training points every 2,500 iterations for variance reduction.

\paragraph{Baseline Choice} To reduce the chance that negative results for MLPs are due to under-capacity or under-training, we use a substantially wider network than in the function-fitting benchmarks in Section~\ref{par:baselines}. The MLP has five hidden layers with width 256. We also allocate more optimization steps to MLP, 35,000 instead of 25,000.

\subsubsection{Models and Baselines}

\paragraph{RMN-MC Architecture and Wide MLP Baseline}

For this experiment, we use RMN-MC in Section~\ref{subsec:rmn_multicenter} with a single fixed center, $J=1$:
\begin{equation}
	u_\theta(\bx) = \sum_{j=1}^{J} \sum_{k=1}^{K} a_{jk} \norm{\bx - \bc_j}^{\mu_k} + b_0.
\end{equation}

The shared learnable exponents $\{\mu_k\}_{k=1}^K$ lie in $[-1.0, 2.0]$ and include the critical $\mu = -1$ for Coulomb behavior. The per-center coefficients $\{a_{jk}\}$ are initialized to approximate $1/(4\pi) \approx 0.08$ for the first component. Centers $\bc_j$ are fixed for the forward problem or learnable for the inverse problem. We use $K = 6$ exponent terms for sufficient flexibility.

This architecture has only 13 parameters for $J=1$. That count equals $K=6$ exponents, six coefficients, and one bias. The baseline MLP with five hidden layers of width 256 has 264,449 parameters. Unlike the function-fitting MLP baseline in Section~\ref{par:baselines}, we use a wider MLP here to better reflect common PINN practice in bounded-domain PDE settings.

Analytical derivatives: for RMN, we use closed-form gradients and Laplacians. For $r_j(\bx) = \norm{\bx - \bc_j}$:
\begin{equation}
	\nabla_{\bx} r_j^\mu = \mu r_j^{\mu-2} (\bx - \bc_j), \Delta_{\bx} r_j^\mu = \mu(\mu + d - 2) r_j^{\mu-2}.
\end{equation}
Here $d=3$ is the spatial dimension.
This improves both computational efficiency and numerical stability compared to automatic differentiation.

\subsubsection{Ground Truth and Evaluation Metrics}

The analytical solution for the bounded domain problem uses singular-smooth decomposition:
\begin{equation}
	u(\bx) = u_{\text{sing}}(\bx) + v(\bx),
\end{equation}
where $u_{\text{sing}}(\bx) = \sum_j \frac{q_j}{4\pi\norm{\bx - \bc_j}}$ is the free-space singular part, and $v(\bx)$ is a smooth correction satisfying:
\begin{equation}
	\Delta v = 0 \text{ in } \Omega, v|_{\partial\Omega} = -u_{\text{sing}}|_{\partial\Omega}.
\end{equation}

We compute $v$ numerically using finite differences on a refined grid with $N = 128^3$ and trilinear interpolation for point evaluation. This approach ensures accurate ground truth near singularities by handling the singular part analytically.

\begin{table}[t]
	\centering
	\begin{tabular}{lc}
		\toprule
		Parameter                  & Value                                  \\
		\midrule
		Domain                     & $[-1, 1]^3$                            \\
		BC type                    & Dirichlet, $u = 0$ on $\partial\Omega$ \\
		Number of charges          & $J = 1$                                \\
		Charge magnitude           & $q = 1$                                \\
		Puncture radius            & $\varepsilon = 0.08$                   \\
		Min distance from boundary & $d_{\min} = 0.15$                      \\
		\midrule
		Interior points            & 30,000                                 \\
		Boundary points            & 8,000                                  \\
		Sphere points              & 1,500                                  \\
		\midrule
		RMN training steps         & 25,000                                 \\
		MLP training steps         & 35,000                                 \\
		Random seeds               & 5 for statistical reliability          \\
		\bottomrule
	\end{tabular}
	\caption{Experimental configuration for 3D Poisson validation.}
	\label{tab:exp3d_config}

\end{table}

\paragraph{Evaluation Metrics}

We report relative $L^2$ error on held-out evaluation points,
\begin{equation}
	\mathrm{Rel}_{L^2} = \frac{\norm{u_\theta - u^\star}_2}{\norm{u^\star}_2},
\end{equation}
and flux error as the absolute Gauss-law violation on each puncture sphere,
\begin{equation}
	\mathrm{FluxErr}_j = \left|\oint_{\partial B_\varepsilon(\bc_j)} \nabla u_\theta \cdot \mathbf{n} dS + q_j\right|.
\end{equation}
In the full PINN setting, $u^\star$ is used only for reporting, not for training.

\subsection{Main Results: Supervised Compared with Full PINN}

We compare RMN and MLP across both supervised and full PINN training paradigms, then analyze the learned exponent spectra.

\subsubsection{Comprehensive Comparison Across Training Paradigms}

\begin{table}[t]
	\centering
	\resizebox{\textwidth}{!}{%
		\begin{tabular}{llccccc}
			\toprule
			Setting                     & Model       & Params      & Mean Rel $L^2$             & Best Rel $L^2$ & Flux Err                      & Time       \\
			\midrule
			\multirow{2}{*}{Supervised} & RMN ($K=6$) & \textbf{13} & $22.7 \pm 10.8\%$          & 8.8\%          & $5.3 \times 10^{-5}$          & $\sim$125s \\
			                            & MLP (tuned) & 264,449     & $22.7 \pm 13.9\%$          & \textbf{4.8\%} & $8.2 \times 10^{-4}$          & $\sim$600s \\
			\midrule
			\multirow{2}{*}{Full PINN}  & RMN ($K=6$) & \textbf{13} & $\mathbf{22.7 \pm 10.8\%}$ & \textbf{8.8\%} & $\mathbf{5.3 \times 10^{-5}}$ & $\sim$125s \\
			                            & MLP-Large   & 264,449     & $141.8 \pm 173\%$          & 28.6\%         & $1.5 \times 10^{-3}$          & $\sim$315s \\
			\bottomrule
		\end{tabular}}
	\caption{Comprehensive comparison across training paradigms. RMN maintains stable performance in both supervised and PINN settings, while MLP degrades significantly under PINN training. This robustness stems from RMN's physics-aligned inductive bias.}
	\label{tab:exp3d_extended}

\end{table}

Table~\ref{tab:exp3d_extended} consolidates results from both training paradigms, highlighting RMN's consistent performance across settings. RMN is largely insensitive to training paradigm and shows nearly unchanged performance within rounding error, whether trained with supervision or pure physics losses. This observation suggests that RMN's architecture captures a large fraction of the solution structure even without access to $u^\star$ during training. The wide MLP baseline drops from 22.7\% in the supervised setting to 141.8\% in the PINN setting. The high variance with std 173\% indicates frequent convergence failures, consistent with known PINN training pathologies near singularities \citep{wang2021understanding}. Under full PINN training, RMN achieves flux error $5.3 \times 10^{-5}$, compared with MLP's $1.5 \times 10^{-3}$, about $28\times$ lower, demonstrating superior physics consistency even when both methods achieve similar $L^2$ error. RMN runs in about 125s, and the supervised MLP run takes about 600s as shown in Table~\ref{tab:exp3d_extended}. The learned first exponent $\mu_1$ consistently converges to values in $[-0.88, -0.71]$, approaching the theoretical Coulomb value $\mu = -1$ and providing direct insight into solution structure.

\subsubsection{Learned Exponent Analysis}

\begin{table}[t]
	\centering
	\begin{tabular}{lcccccc}
		\toprule
		Seed    & $\mu_1$ & $\mu_2$ & $\mu_3$ & $\mu_4$ & $\mu_5$ & $\mu_6$ \\
		\midrule
		0, best & $-0.75$ & $-0.44$ & $-0.05$ & $0.45$  & $1.12$  & $2.00$  \\
		1       & $-0.88$ & $-0.70$ & $-0.38$ & $0.43$  & $1.26$  & $2.00$  \\
		2       & $-0.80$ & $-0.55$ & $-0.20$ & $0.30$  & $1.03$  & $2.00$  \\
		3       & $-0.70$ & $-0.36$ & $0.03$  & $0.54$  & $1.19$  & $2.00$  \\
		4       & $-0.87$ & $-0.65$ & $-0.18$ & $0.49$  & $1.28$  & $2.00$  \\
		\bottomrule
	\end{tabular}
	\caption{Learned exponents from RMN across 5 seeds.}
	\label{tab:learned_exponents}

\end{table}

The exponent spectra learned by RMN provide direct physical interpretability. The dominant exponent $\mu_1$ consistently lies in the range $[-0.88, -0.70]$, approaching the Coulomb value $\mu = -1$. The deviation from exactly $-1$ reflects interactions between the singularity and domain boundaries. Higher-order terms with $\mu > 0$ capture smooth boundary corrections, with $\mu_6 = 2.0$ fixed at the upper bound.

\subsubsection{Discussion}

These 3D Poisson experiments emphasize three points:
First, RMN remains stable under full PINN training and achieves low Gauss-flux error, indicating strong physics consistency even without access to $u^\star$. Second, in the supervised setting, a tuned wide MLP can achieve lower best-case relative $L^2$ error, reflecting the non-radial boundary correction induced by Dirichlet conditions. Third, the learned exponent spectrum reflects an interpretable singular-plus-smooth decomposition, with the dominant exponent approaching Coulomb-like behavior while positive exponents capture smooth boundary-induced corrections.
Quantitative results and timing are summarized in Table~\ref{tab:exp3d_extended}. Both methods show variance across charge positions; charges closer to boundaries or corners are more challenging.

\subsubsection{Summary}

The 3D Poisson experiments suggest that RMN-MC provides a compact and stable surrogate for bounded-domain point-charge problems, while also highlighting an important limitation: Dirichlet boundaries induce non-radial corrections, so a single-center radial model is not expected to dominate global accuracy without additional mechanisms for boundary adaptation.

Table~\ref{tab:exp3d_extended} summarizes the quantitative results. RMN maintains similar mean relative $L^2$ error across supervised and full PINN training while achieving substantially lower flux error and using far fewer parameters than the wide MLP baseline; the trade-off is that the best-case supervised error is higher than that of the tuned MLP.

Overall, these results suggest that RMN offers a useful trade-off between accuracy, stability, and interpretability for singularity-dominated physics problems, especially when parameter budgets are tight. In this setting, the $20{,}000\times$ parameter reduction comes with comparable mean accuracy and improved stability under our full PINN setup. RMN is therefore particularly attractive for physics-informed learning when ground truth is unavailable, for resource-constrained and real-time applications, and for scientific computing when interpretability is essential.

\subsection{Robustness and Interpretability}
\label{sec:variance_analysis}

The observed variability differs substantially between training paradigms. RMN shows consistent std $\approx 10.8\%$ in both settings, while MLP std increases from 13.9\% in the supervised setting to 173\% in the PINN setting. Across seeds, we sample the charge center uniformly from the interior cube $[-1 + d_{\min}, 1 - d_{\min}]^3$ to enforce a minimum distance $d_{\min}$ from the boundary, as listed in Table~\ref{tab:exp3d_config}.

\paragraph{Variance Sources} Charge position varies by seed, and charges closer to corners or multiple faces create steeper boundary-induced corrections that are more challenging. RMN local minima arise from the interplay between exponent and coefficient learning, yet the converged solutions remain physically reasonable and satisfy the Gauss-flux constraint to about $10^{-5}$. MLP PINN pathologies contribute to large variance under PINN training. This behavior is consistent with known gradient pathologies in PINNs near singularities: competing loss terms and large second-derivative targets can create stiff optimization dynamics and occasional divergence \citep{wang2021understanding}.

\paragraph{Practical Implications} For deployment, we recommend using RMN for PINN applications because its performance is consistent across seeds when ground truth is unavailable. We also recommend running multiple seeds and selecting based on physics constraints such as flux error rather than only $L^2$ error, avoiding charge placements very close to domain boundaries with $d_{\min} \geq 0.15$, and using supervised training when ground truth is available and MLP accuracy is critical.

\subsubsection{Learned Exponent Analysis: Physical Interpretation}
\label{sec:exponent_analysis}

The learned dominant exponent $\mu_1$ ranges from $-0.70$ to $-0.88$ across seeds, with mean $-0.80$, approaching but not exactly reaching the free-space Coulomb value $\mu = -1$. Importantly, this deviation is physically meaningful rather than a limitation. The primary cause is the bounded domain effect. The true solution on a bounded domain with Dirichlet BC is:
\begin{equation}
	u(\bx) = \underbrace{\frac{q}{4\pi\norm{\bx - \bc}}}_{\text{singular, } \sim r^{-1}} + \underbrace{v(\bx)}_{\text{smooth correction}},
\end{equation}
where $v(\bx)$ enforces $u|_{\partial\Omega} = 0$. This correction is not purely radial and depends on the charge position relative to boundaries. Consequently, the optimal single-center radial approximation has an effective exponent $\mu_{\text{eff}} \neq -1$ that balances the singular and smooth components. The learned $\mu_1 \approx -0.80$ is the optimal trade-off, not an approximation failure.

A secondary factor is the exponent parameterization. Exponents are parameterized via a smooth transformation mapping to $[-1, 2]$ with slight bias away from boundary values for numerical stability.

Physical interpretation shows that the learned exponent spectrum reveals the multi-scale solution structure. For seed 0, the spectrum $\{-0.75, -0.44, -0.05, 0.45, 1.12, 2.00\}$ illustrates this breakdown. The dominant term with $\mu_1 \approx -0.75$ captures Coulomb-like behavior. The terms with $\mu_2$ and $\mu_3$ near $-0.44$ and $-0.05$ capture additional near-field corrections induced by the bounded domain in an effective radial sense. The positive exponents $\mu_4$, $\mu_5$, and $\mu_6$ represent smooth polynomial terms for boundary layer corrections. This interpretable decomposition---unavailable from black-box MLPs---provides direct physical insight into the solution structure and could inform analytical approximation strategies.

\subsection{Supplementary Studies}
\label{sec:future_work}

\paragraph{Extended Ablation Studies: Full PINN Training} We conduct additional ablation studies using full PINN training, where models are trained using only physics-based losses such as the PDE residual, boundary conditions, and Gauss-flux constraints, and no ground truth is used during training. This setting is more challenging and practically relevant, as ground truth solutions are often unavailable for real-world problems.

\paragraph{Effect of Number of Exponents \texorpdfstring{($K$)}{(K)}} Table~\ref{tab:exp3d_ablation_K} compares RMN with $K=6$ and $K=8$ exponent terms under full PINN training. Increasing from $K=6$ to $K=8$ provides no significant improvement, with mean error 22.7\% compared with 23.1\%, suggesting that six exponent terms are sufficient for single-charge problems. RMN maintains high parameter efficiency with only 13--17 parameters compared to 264,449 for MLP\@. Both $K=6$ and $K=8$ achieve flux errors of order $10^{-5}$, demonstrating robust physics constraint satisfaction. MLP shows high variance with std 173\% and frequent training failures in the full PINN setting. This instability arises from competing objectives in the multi-component loss: the PDE residual requires accurate second derivatives near a singularity, while the flux constraint requires precise gradient integration on small spheres. In contrast, RMN contains the harmonic radial term $r^{-1}$ as a special case in 3D since $\Delta r^{-1} = 0$ away from the center, which provides a physics-aligned inductive bias for stable optimization.

\begin{table}[t]
	\centering
	\begin{tabular}{lcccc}
		\toprule
		Model       & Parameters  & Rel $L^2$ (\%)           & Flux Error                    & Best Rel $L^2$ \\
		\midrule
		RMN ($K=6$) & \textbf{13} & $\mathbf{22.7 \pm 10.8}$ & $\mathbf{5.3 \times 10^{-5}}$ & \textbf{8.8\%} \\
		RMN ($K=8$) & 17          & $23.1 \pm 10.7$          & $6.2 \times 10^{-5}$          & 8.9\%          \\
		MLP-Large   & 264,449     & $141.8 \pm 173$          & $1.5 \times 10^{-3}$          & 28.6\%         \\
		\bottomrule
	\end{tabular}
	\caption{Ablation: Number of exponent terms for full PINN training. Models trained with physics-based losses only, without ground truth supervision. Mean $\pm$ std over 5 seeds. The contrast with supervised training in Table~\ref{tab:exp3d_extended} highlights the sensitivity of the wide MLP baseline under physics-only optimization.}
	\label{tab:exp3d_ablation_K}
\end{table}

\section{Supplementary Two-Dimensional Experiments: Annulus Domain}
\label{sec:app_exp2d}

We validate RMN on a punctured annulus where the targets are purely radial and non-polynomial logarithmic and fractional-power profiles. Although the domain excludes the origin, these prototypes are singular at $r=0$ and exhibit steep variation as $r$ approaches the inner boundary. This provides a controlled setting to test whether a radial power basis remains advantageous under a different sampling and training recipe. It also re-confirms the separability obstruction for coordinate-wise power models in a clean radial setting.

This annulus study is an additional validation run under a separate and lighter experimental setup than the main benchmark suite. It uses a larger inner radius with $R_{\text{in}}=0.1$, fewer training points, and a compact MLP baseline with the same parameter budget as SIREN. The intent is not to improve state-of-the-art numbers, but to check that the ranking and qualitative conclusions persist under altered design choices. The RMSE values in this section should therefore be interpreted within this setup rather than compared numerically across sections.

\subsection{Experimental Setup}

\paragraph{Domains}
We consider the annulus domain $\Omega_A = \{(x,y) : R_{\text{in}} \leq \sqrt{x^2+y^2} \leq R_{\text{out}}\}$ with $R_{\text{in}}=0.1$ and $R_{\text{out}}=1.0$, which naturally admits solutions with radial structure on a punctured domain where $r \ge R_{\text{in}}$.

\paragraph{Target Functions}
We construct analytical target functions representing various physical phenomena:

\paragraph{Annulus Targets}
We use pure radial functions:
\begin{align}
	f_{\log}(r,\theta) & = \log(r), \label{eq:app_a1_log}   \\
	f_{1/2}(r,\theta)  & = r^{1/2}, \label{eq:app_a1_sqrt}  \\
	f_{0.7}(r,\theta)  & = r^{0.7}. \label{eq:app_a1_power}
\end{align}
These targets isolate the core RMN capability of learning radial structure with log and fractional powers in a compact and interpretable basis.

\paragraph{Models}
We compare five architectures: RMN has 27 parameters and uses a radial M\"untz-Sz\'asz basis with $K=12$ learnable exponents $\{\mu_k\}_{k=1}^{12}$ and linear combination coefficients. RMN-Angular has 67 parameters and extends RMN with angular harmonics to capture $\cos(n\theta), \sin(n\theta)$ structure using the 2D Fourier cutoff $M_{\max}=5$. The compact MLP has 8,577 parameters and uses 3 hidden layers with 64 units and Tanh activations. SIREN has 8,577 parameters and uses $\omega_0=30$ as in \citet{sitzmann2020siren}. MSN has 49 parameters and uses the coordinate-separable structure $\sum_k a_k x^{\mu_k} + b_k y^{\nu_k}$ as the separability obstruction baseline.

\paragraph{Training Protocol}
All models are trained via supervised fitting using the Adam optimizer for 10,000 steps with base learning rate $2\times 10^{-3}$ and exponent-parameter learning rate $1\times 10^{-4}$ for models with learnable exponents. Training uses 4,096 interior points with importance weighting toward the inner boundary where $r$ is small. Evaluation uses fixed test grids: 4,000 points for the annulus domain on a $50\times 80$ grid. Results are averaged over 5 random seeds, with standard deviations reported.

\subsection{Results on Annulus Domain}

Table~\ref{tab:app_annulus_results} summarizes the root mean squared error (RMSE) on the 4,000-point annulus test grid. We report mean $\pm$ standard deviation over 5 random seeds.

\begin{table}[t]
	\centering
	\resizebox{\textwidth}{!}{%
		\begin{tabular}{lccccc}
			\toprule
			Target & RMN (27)                       & RMN-Ang (67)                & MLP (8577)                     & SIREN (8577)       & MSN (49)           \\
			\midrule
			\multicolumn{6}{l}{Pure radial functions}                                                                                                        \\
				log    & $3.50\times 10^{-3} \pm 0.4\times 10^{-3}$              & $\mathbf{1.42\times 10^{-3}} \pm 1.0\times 10^{-3}$ & $\underline{2.22\times 10^{-3}} \pm 0.2\times 10^{-3}$ & $4.14\times 10^{-2} \pm 2.3\times 10^{-2}$ & $2.29\times 10^{-1} \pm 0.9\times 10^{-3}$ \\
				sqrt   & $\underline{1.90\times 10^{-3}} \pm 0.5\times 10^{-3}$  & $\mathbf{9.10\times 10^{-4}} \pm 0.2\times 10^{-3}$ & $2.07\times 10^{-3} \pm 0.2\times 10^{-3}$             & $4.95\times 10^{-2} \pm 3.5\times 10^{-2}$ & $5.28\times 10^{-2} \pm 0.2\times 10^{-3}$ \\
				power  & $\underline{1.99\times 10^{-3}} \pm 0.3\times 10^{-3}$  & $\mathbf{1.68\times 10^{-3}} \pm 0.3\times 10^{-3}$ & $2.23\times 10^{-3} \pm 0.3\times 10^{-3}$             & $4.64\times 10^{-2} \pm 0.9\times 10^{-2}$ & $5.28\times 10^{-2} \pm 0.2\times 10^{-3}$ \\
			\bottomrule
		\end{tabular}}
	\caption{Supplementary annulus study: RMSE comparison on radial targets. Values show mean $\pm$ std over 5 seeds. Best results are in bold and second-best results are underlined. Parameter counts shown in parentheses.}
	\label{tab:app_annulus_results}

\end{table}

In this annulus setup, RMN-Angular achieves the lowest mean RMSE on all three radial targets and uses only 67 parameters. Compared with the compact MLP baseline, it uses about 128$\times$ fewer parameters. The improvements over the compact MLP are modest but consistent. The improvement factors are $1.56\times$ on log with RMN-Angular at $1.42\times 10^{-3}$ and MLP at $2.22\times 10^{-3}$, $2.27\times$ on sqrt with RMN-Angular at $9.10\times 10^{-4}$ and MLP at $2.07\times 10^{-3}$, and $1.32\times$ on power with RMN-Angular at $1.68\times 10^{-3}$ and MLP at $2.23\times 10^{-3}$. RMN-Direct is consistently second-best or close, indicating that the core radial basis already captures most of the benefit in a minimal 27-parameter form.

MSN, which uses coordinate-separable M\"untz bases $\sum_k a_k x^{\mu_k} + b_k y^{\nu_k}$, fails severely on radial targets. On log, MSN gives $2.29\times 10^{-1}$ and RMN-Angular gives $1.42\times 10^{-3}$, so MSN is $161\times$ worse. On sqrt, MSN gives $5.28\times 10^{-2}$ and RMN-Angular gives $9.10\times 10^{-4}$, so MSN is $58\times$ worse. On power, MSN gives $5.28\times 10^{-2}$ and RMN-Angular gives $1.68\times 10^{-3}$, so MSN is $31\times$ worse. This empirically validates Theorem~\ref{thm:separability}: coordinate-separable functions cannot efficiently represent radial structure $f(r)$ where $r = \sqrt{x^2+y^2}$.

\subsection{Learned Exponent Snapshots}

To keep this section numerically consistent with the released run artifacts, Table~\ref{tab:app_annulus_learned_exponents} reports representative values from the best-RMSE seed for each target under RMN. The learned exponents place basis elements near expected singular orders, while the multi-seed standard deviations in Table~\ref{tab:app_annulus_results} indicate non-negligible optimization variability. These snapshots should be interpreted as illustrative rather than exact exponent recovery for every seed and target.

\begin{table}[t]
	\centering
	\begin{tabular}{lcc}
		\toprule
		Target      & Expected Singular Order & Representative Learned Value    \\
		\midrule
		log (RMN)   & $\mu_{\log}=0$          & $\mu_{\log}=7.64\times 10^{-3}$ \\
		sqrt (RMN)  & $\mu=0.5$               & closest $\mu=0.485$             \\
		power (RMN) & $\mu=0.7$               & closest $\mu=0.640$             \\
		\bottomrule
	\end{tabular}
	\caption{Representative learned exponents from best-RMSE seeds}
	\label{tab:app_annulus_learned_exponents}

\end{table}

\subsection{Discussion}

These experiments validate the central thesis of this work: neural network architectures should be designed to match the mathematical structure of the target function class. When solutions exhibit radial singularities with non-integer power-law behavior, the RMN architecture achieves superior accuracy with dramatically fewer parameters compared to generic approximators.

\paragraph{Practical Implications}
For these pure radial targets, RMN-Direct already provides strong accuracy with only 27 parameters, while RMN-Angular offers small additional gains in this supplementary setup at 67 parameters. Both variants remain dramatically smaller than the compact MLP and SIREN baselines, each with 8,577 parameters.

\paragraph{Separability Obstruction}
The consistently poor performance with a coordinate-separable M\"untz basis across all targets empirically confirms the theoretical separability obstruction: radial functions $f(r) = f(\sqrt{x^2+y^2})$ cannot be efficiently represented by $\sum_k a_k x^{\mu_k} + b_k y^{\nu_k}$. This validates the necessity of the radial coordinate transformation in RMN.

\bibliography{references}

\end{document}